\newtheorem{theorem}{\bf Theorem}[section]
\newtheorem{myDef}{\bf Definition}[section]
\newtheorem{lemma}{Lemma}[section]
\newtheorem{assumption}{Assumption}[section]
\newtheorem{fact}{Fact}[section]
\newtheorem{remark}{Remark}[section]
\newcommand{\ourtech}{DC-DPSGD}
\newcommand{\ignore}[1]{}
\title{Clip Body and Tail Separately: High Probability Guarantees for DPSGD with Heavy Tails}
\author{%
  Haichao Sha \\
  Renmin University of China\\
  \texttt{sha@ruc.edu.cn} \\
  \And
  Yang Cao \\
  Tokyo Institute of Technology \\
  \texttt{cao@c.titech.ac.jp} \\
  \AND
  Yong Liu \\
  Renmin University of China \\
  \texttt{liuyonggsai@ruc.edu.cn} \\
  \And
  Yuncheng Wu \\
  Renmin University of China \\
  \texttt{wuyuncheng@ruc.edu.cn} \\
  \And
  Ruixuan Liu \\
  Emory University \\
  \texttt{ruixuan.liu2@emory.edu} \\
  \And
  Hong Chen$^{*}$ \\
  Renmin University of China \\
  \texttt{chong@ruc.edu.cn} \\
}
\begin{document}

\maketitle

\begin{abstract}
Differentially Private Stochastic Gradient Descent (DPSGD) is widely utilized to preserve training data privacy in deep learning, which first clips the gradients to a predefined norm and then injects calibrated noise into the training procedure. Existing DPSGD works typically assume the gradients follow sub-Gaussian distributions and design various clipping mechanisms to optimize training performance. However, recent studies have shown that the gradients in deep learning exhibit a heavy-tail phenomenon, that is, the tails of the gradient have infinite variance, which may lead to excessive clipping loss to the gradients with existing DPSGD mechanisms. To address this problem, we propose a novel approach, Discriminative Clipping~(DC)-DPSGD, with two key designs. First, we introduce a subspace identification technique to distinguish between body and tail gradients. Second, we present a discriminative clipping mechanism that applies different clipping thresholds for body and tail gradients to reduce the clipping loss. Under the non-convex condition, \ourtech{} reduces the empirical gradient norm from {\small ${\mathbb{O}\left(\log^{\max(0,\theta-1)}(T/\delta)\log^{2\theta}(\sqrt{T})\right)}$} to {\small ${\mathbb{O}\left(\log(\sqrt{T})\right)}$} with heavy-tailed index $\theta\geq 1/2$, iterations $T$, and arbitrary probability $\delta$. Extensive experiments on four real-world datasets demonstrate that our approach outperforms three baselines by up to 9.72\% in terms of accuracy. 
\end{abstract}


\section{Introduction}
DPSGD~\cite{abadi2016deep}, as a mainstream paradigm of privacy-preserving deep learning, has wide applications in areas such as privacy-preserving recommender systems~\cite{liu2023privaterec,yi2021efficient}, 
face recognition~\cite{ghalebikesabi2023differentially,harder2022pre,tang2024differentially}, and medical diagnosis~\cite{adnan2022federated,ji2022privacy,meng2021improving,ziller2021medical}. 
Essentially, in each iteration of model training, DPSGD clips per-sample gradient under the $L_2$-norm constraint to obtain the maximum divergence between gradient distributions that differ by only one training data point and adds random noise within rigorous privacy bounds to the gradient
for unbiased gradient estimation. 

Most of existing DPSGD works~\cite{bu2024automatic,yang2022normalized,xia2023differentially,fang2022improved,zhang2023differentially,sha2023pcdp,zhu2023improving,koloskova2023revisiting} rely on the assumption that the gradient noise follows a sub-Gaussian distribution to devise effective clipping strategies. 
However, recent studies~\cite{zhang2020adaptive,gurbuzbalaban2021heavy,simsekli2019tail,zhu2018anisotropic,simsekli2020fractional,camuto2021asymmetric,panigrahi2019non,barsbey2021heavy} have shown that SGD gradient noise in deep learning often exhibit heavy-tailed distributions instead of light-tailed distributions (e.g., sub-Gaussian). This occurs even when the dataset originates from a light-tailed distribution, the gradients still diverge to a heavy-tailed distribution with infinite variance~\cite{gurbuzbalaban2021heavy}, which may slow down the convergence rate and impair training performance~\cite{li2022high,li2023high,madden2020high,gorbunov2020stochastic,davis2020high,li2020high}.
To cope with heavy-tailed dilemma in SGD, \cite{wang2021convergence,li2023high,gorbunov2020stochastic} suggest employing larger clipping thresholds to get rid of the oscillations caused by heavy-tailed gradients on the training trajectory. 
%
Nevertheless, the clipping operation in DPSGD is closely tied to the magnitude of DP noise added to the gradients. Setting the clipping threshold too large can lead to a high-dimensional noise catastrophe~\cite{zhou2020bypassing}, which negatively impacts model performance and potentially disrupts the convergence of DPSGD algorithms. 
Therefore, practitioners need to carefully strike a balance between injected noise and clipping loss, as illustrated in Figure~\ref{fig:1}. 
The left sub-figure shows the trade-off under the light-tailed assumption. As the clipping threshold increases (i.e., when the red dotted line moves to the right), the clipping loss decreases, but the maximum divergence between neighboring distributions increases, leading to more DP noise being added. In the right sub-figure, under the same clipping threshold,
the slower decay rate of the heavy tail distribution (blue line) introduces extra clipping loss, while it simultaneously reduces the maximum divergence compared to the light-tailed distribution. Consequently, the required DP noise magnitude is lower. 
Therefore, we aim to investigate the following key question in this paper: \textit{how to design an effective clipping mechanism under the heavy-tailed assumption to balance the trade-off between clipping loss and DP noise magnitude in DPSGD?}
\begin{figure}[t]
\vspace{-0.3cm}
	\centering
	\begin{subfigure}{0.95\linewidth}
		\centering
            \includegraphics[width=0.75\linewidth]{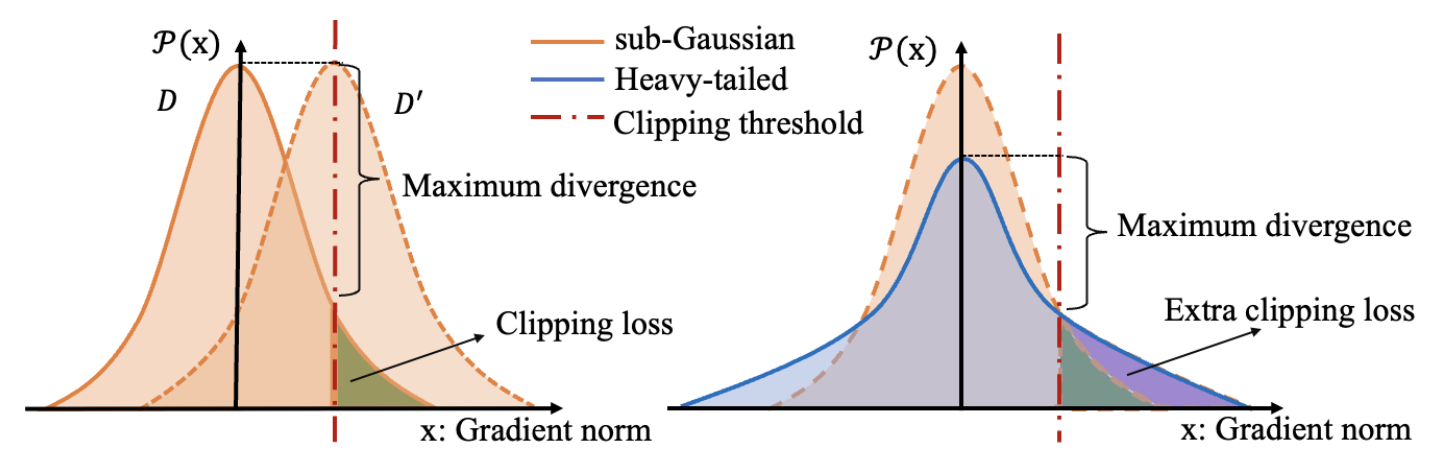}
	\end{subfigure}
        \caption{The trade-off between clipping loss and noise magnitude under heavy-tailed distributions.}
        \label{fig:1}
\vspace{-0.5cm}
\end{figure}
Although a set of DPSGD clipping mechanisms~\cite{bu2024automatic,yang2022normalized,xia2023differentially} have been proposed under the light-tailed assumption, none of them can be adapted to our problem. 
Specifically, \cite{bu2024automatic,yang2022normalized,xia2023differentially} focus on small-norm gradients (i.e., those near the center of the distribution) and normalize them to be around 1. These approaches reduce the maximum divergence, thereby requiring less noise to be injected. However, they do not account for heavy-tailed gradients and thus cannot optimize the clipping loss.
%
Another line of work directly estimates the actual norm of the per-sample gradient and utilizes it as the clipping threshold to reduce the clipping loss. For instance, Andrew et al.~\cite{andrew2021differentially} estimate the true gradient trajectory by collecting the norms of historical gradients. However, this approach requires knowing the upper bound of historical norms for adding noise, which is highly uneconomical under heavy-tailed distributions, as the upper bound for moment generating function~(MGF)~\cite{vladimirova2020sub} can be immeasurable, making the scale of DP noise unbearable.  


In this paper, we propose a novel approach, named \textbf{Discriminative Clipping (DC)-DPSGD}, to effectively balance the trade-off between clipping loss and required DP noise under the heavy-tailed assumption. 
The key idea is to utilize different clipping thresholds for the body gradients and tail gradients respectively, retaining more information from tail gradients that can withstand more severe DP noise.
We introduce two techniques in \ourtech{} to achieve this goal. 
First, we design a subspace identification technique to identify potential heavy-tailed gradients with high probability guarantees. 
We note that the body of heavy-tailed distributions exhibits characteristics similar to those of light-tailed distributions, and the main difference lies in the decay rate at the tails. 
Therefore, we extract orthogonal random vectors from heavy-tailed distributions (e.g., sub-Weibull distribution) to construct a random projection subspace, and compute the trace of the second-moment matrix between gradients and this subspace to distinguish heavy-tailed gradients. 
Second, we present a discriminative clipping mechanism, which applies a large clipping threshold for the identified heavy-tailed gradients and a smaller one for the remaining light-tailed gradients. 
We theoretically analyze the choice of the two clipping thresholds and the convergence of \ourtech{} with a tighter bound under the high probability theory.
Our contributions can be summarized as follows.
\begin{itemize}[leftmargin=*]
    \item We propose \ourtech{} with a subspace identification technique and a discriminative clipping mechanism to optimize DPSGD under the heavy-tailed assumption. To our knowledge, this is the first work to rigorously address heavy tails in DPSGD with a high probability theory guarantee. 
    \item We theoretically analyze the convergence of \ourtech{} and show that \ourtech{} reduces the empirical gradient norm from {\small $\mathbb{O}\left(\log^{\max(0,\theta-1)}(T/\delta)\log^{2\theta}(\sqrt{T})\right)$} to {\small $\mathbb{O}\left(\log(\sqrt{T})\right)$} with heavy-tailed index $\theta\geq 1/2$, iterations $T$, and arbitrary probability $\delta$, under the non-convex condition.
    \item We conduct extensive experiments on four real-world datasets, where \ourtech{} consistently outperforms four baselines with up to 9.72\% accuracy improvements, demonstrating the effectiveness of our proposed approach. 
\end{itemize}

\section{Related Work}
\textbf{Heavy-tailed noise and high probability bounds.}
Recently, from the perspective of escaping from stationary points and Langevin dynamics, the noise in neural networks is more inclined to anisotropic and non-Gaussian properties~\cite{gurbuzbalaban2021heavy,simsekli2019tail,zhu2018anisotropic,panigrahi2019non,gorbunov2020stochastic,zhang2020adaptive}, with specific heavy-tailed phenomena discovered and defined in gradient descent in deep neural networks. 
Current research has primarily focused on heavy-tailed convex optimization in privacy-preserving deep learning~\cite{lowy2023private,wang2020differentially,kamath2022improved}. 
Building upon~\cite{wang2020differentially}, Kamath et al.~\cite{kamath2022improved} relax the assumption of Lipschitz condition and sub-Exponential distribution to a more general $\alpha$-th moment bounded condition. However, no work has been done investigating the convergence characteristics of heavy-tailed DPSGD under non-convex settings. Meanwhile, high probability bounds are more frequently discussed in optimization properties such as convex and non-convex learning with SGD. Specifically, \cite{li2022high} considers gradient noise from heavy-tailed sub-Weibull distribution to present high probability bounds at fast rates, revealing trade-offs between optimization and generalization performance under broader assumptions. With bounded $a$-th moments assumption, \cite{li2023high} provides a high probability theoretical analysis for variants like clipped SGD with momentum and adaptive step sizes. Nevertheless, most work in DPSGD still utilizes expectation bounds, which is not suitable for heavy-tailed assumptions. 

\textbf{Projection subspace in DPSGD.}
DPSGD has gained wide concerns for its detrimental impact on model accuracy. A series of works leverage projection techniques to improve performance. 
For instance, \cite{zhou2020bypassing,yu2021not,liang2020think,song2022sketching,yu2021large} confine DPSGD training dynamics to more compact and condensed subspaces through projection. While ensuring the fidelity of training data compression, they decouple the irrelevant relationship between ambient features and DP noise, and reduce the optimization error of DPSGD under stringent privacy constraints. However, existing works rely on the assumption that public datasets are available for designing the techniques ~\cite{golatkar2022mixed,zhou2020bypassing,yu2021not,gu2023choosing}, which is a rather strong, especially in sensitive domains. 
In contrast, our work does not rely on any public dataset. 

\textbf{Gradient clipping.}
Gradient clipping has attracted significant attention in both practical implementations and theoretical analyses for DPSGD~\cite{chen2020understanding,zhang2019gradient,zhang2022understanding,pichapati2019adaclip,andrew2021differentially,xiao2023theory,koloskova2023revisiting}. Since the tuning parameters in the classical Abadi's clipping function~\cite{abadi2016deep} are complex, adaptive gradient clipping schemes have been proposed~\cite{bu2024automatic,yang2022normalized}. These schemes scale per-sample gradients based on their norms. In particular, gradients with small norms are amplified infinitely. 
Building upon this, Xia et al.~\cite{xia2023differentially} 
control the amplification of gradients with small norms in a finite manner. Additionally, research on clipping bias has gradually gained importance. Wei et al.~\cite{wei2022dpis} and Koloskova et al.~\cite{koloskova2023revisiting} argue for the connection between sampling noise and clipping bias and mitigate clipping bias through group sampling.
Sha et al.~\cite{sha2023pcdp} study pre-projecting per-sample gradient before clipping to reduce clipping errors in DPSGD. Furthermore, \cite{fang2022improved} has shown that naive gradient clipping can accelerate vanilla SGD convergence under heavy-tailed distributions. 
However, no work has specifically optimized gradient clipping under the heavy-tailed assumption of DPSGD. Due to the scale of noise required to achieve differential privacy, trivial clipping methods and analyses are not applicable.

\section{Preliminaries}
\subsection{Notations}
Let $D$ be a private dataset, which consists of $n$ training data $S=\{z_1,...,z_n\}$ with a sample domain $Z$ drawn i.i.d. from the underlying distribution $\mathcal{P}$. 
Since $\mathcal{P}$ is unknown and inaccessible in practice, we minimize the following empirical risk in a differentially private manner: 
\begin{align}
    L_S(\mathbf{w}) := \frac{1}{n}\sum^{n}_{i=1}\ell(\mathbf{w},z_i),
\end{align}
where the objective function $\ell(\cdot):(\mathbf{w}\subseteq W,Z)\rightarrow \mathbb{R}$ is possible non-convex and $W\subseteq\mathbb{R}^d$ represents the model parameter space. Then, we denote $\nabla \ell$ as the gradient of $\ell$ with respect to
$\mathbf{w}$. 
In addition, we introduce some notations regarding the projection subspace. Let $V_k\in \mathbb{R}^{d\times k}$ denotes $k$-dimensional random projection sampled from heavy-tailed distributions. The empirical second moment of $V^T_k\nabla \ell$ is given by $V^T_k\nabla \ell\nabla \ell^T V_k$. The total variance in the empirical projection subspace is generally measured by the trace of the second moment denoted as $\text{tr}(V^T_k\nabla \ell\nabla \ell^T V_k)$.

DPSGD lies in strict mathematical definitions~\cite{dwork2006calibrating,abadi2016deep} and composition theorems~\cite{kairouz2015composition,mironov2017renyi,dong2019gaussian}. Definition~\ref{myDef:dp} gives a formal definition of differential privacy (DP).
\begin{myDef}[\textbf{Differential Privacy}]
A randomized algorithm $M$ is $(\epsilon,\delta)$-differentially private if for any two neighboring datasets $D$, $D^{\prime}$ differ in exactly one data point and any event $Y$, we have
\begin{align}
    \mathbb{P}(M(D) \in Y) \leq \exp({\epsilon}) \cdot \mathbb{P}(M(D^{\prime}) \in Y) + \delta,
\end{align}
where $\epsilon$ is the privacy budget and $\delta$ is a small probability.
\label{myDef:dp}
\end{myDef}

\subsection{Assumptions}
We focus on the sub-Weibull distribution in this work, which extends the sub-Gaussian and sub-Exponential families to potentially heavier-tailed distributions. Sub-Weibull distributions are characterized by a positive tail index $\theta$, with $\theta=\frac{1}{2}$ represents sub-Gaussian distributions, $\theta=1$ represents sub-Exponential distributions, and $\theta>1$ represents heavier-tailed distributions. 
Typically, sub-Gaussian distributions are light-tailed, whereas heavy-tailed distributions occur when $\theta>\frac{1}{2}$.
\begin{assumption}[\textbf{Sub-Weibull Gradient Noise}]
\label{ass:sub-weibull}
Conditioned on the iterates, we make an assumption that the gradient noise $\nabla \ell(\mathbf{w}_t)-\nabla L(\mathbf{w}_t)$ satisfies $\mathbb{E}[\nabla \ell(\mathbf{w}_t)-\nabla L(\mathbf{w}_t)] = 0$ and $\Vert \nabla \ell(\mathbf{w}_t)-\nabla L(\mathbf{w}_t)\Vert_2\sim \text{subWeibull}(\theta,K)$ for some positive K, such that $\theta>\frac{1}{2}$, and have 
\vspace{-0.1cm}
\begin{align}
    \mathbb{E}_t[\exp((\Vert \nabla \ell(\mathbf{w}_t)-\nabla L(\mathbf{w}_t)\Vert_2/K)^{\frac{1}{\theta}})] \leq 2 . \nonumber
\end{align}
\end{assumption}
Assumption~\ref{ass:sub-weibull} is a relaxed version of gradient noise following sub-Gaussian distributions, that is $\mathbb{E}_t[\exp((\Vert \nabla \ell(\mathbf{w}_t)-\nabla L(\mathbf{w}_t)\Vert_2/K)^{2})] \leq 2$, which means that finding upper bounds for MGF under Assumption~\ref{ass:sub-weibull} is impracticable by standard tools~\cite{vladimirova2020sub}. Thus, the truncated tail theory~\cite{bakhshizadeh2023sharp} and martingale difference inequality~\cite{madden2020high} play a crucial role in our analysis.
\begin{assumption}[\textbf{$\beta$-Smoothness}]
 \label{ass:smooth}
    The loss function $\ell$ is $\beta$-smooth, for any $\mathbf{w}_t, \mathbf{w}^{\prime}_t\in \mathbb{R}^d$, we have
 \begin{align}
     \Vert \nabla\ell(\mathbf{w}_t)-\nabla\ell(\mathbf{w}^{\prime}_t)\Vert_2 \leq \beta \Vert \mathbf{w}_t-\mathbf{w}^{\prime}_t \Vert_2. \nonumber
 \end{align}
\end{assumption}
\begin{assumption}[\textbf{G-Bounded}]
\label{ass:G-bounded}
For any $\mathbf{w}\in \mathbb{R}^d$ and per-sample $z$, there exists positive real numbers $G>0$, and the expectation gradient satisfies
\begin{align}
    \Vert \nabla L(\mathbf{w}_t) \Vert^{2}_{2} \leq G. \nonumber
\end{align}
\end{assumption}
Assumption~\ref{ass:smooth} is widely employed in optimization literature~\cite{foster2018uniform,zhou2020bypassing,li2022high} and is essential for ensuring the convergence of gradients to zero~\cite{li2020high}. Compared to the bounded stochastic gradient assumption, i.e., $\Vert \nabla \ell(\mathbf{w}_t,z_i) \Vert^{2}_{2} \leq G$, Assumption~\ref{ass:G-bounded} is mild~\cite{zhou2020bypassing,li2022high,li2023high}. 

\section{Heavy-tailed DPSGD with High Probability Bounds}
Before presenting our approach, we first analyze the high probability bound of classical DPSGD under the heavy-tailed assumption to better motivate our idea. 
We note that previous works rely on the assumption of light-tailed gradients or stronger assumptions
to prove the convergence properties of DPSGD, which cannot be adapted to DPSGD under heavy-tailed distributions. 
Moreover, prior works mainly focus on the expectation bounds of DPSGD. 
However, the operations in DPSGD are constrained by a finite privacy budget, making it difficult to support unlimited algorithm runs. Therefore, we theoretically analyze the high probability bound for classical DPSGD under the heavy-tailed sub-Weibull stochastic gradient noise assumption, as presented in the following theorem.
\begin{table}[h]
\caption{Summary of results under non-convex conditions.}
\label{tab:summary}
\large
\resizebox{\textwidth}{!}{
\begin{tabular}{c|c|c|c|c|c}
\toprule
Measure                      & Proposal        & DPSGD & SGD & Assumption & Clipping \\ \midrule
\multirow{10}{*}{\rotatebox{90}{Expectation}} & Clipped SGD~\cite{zhang2019gradient}           &\ding{53}        &$\displaystyle\mathbb{O}\left(\frac{K^2}{\sqrt{T}}+ \frac{K^{3/2}}{\sqrt[4]{T}}\right)$     &\begin{tabular}[c]{@{}c@{}}$K$-bounded\\variance\end{tabular}            &\checkmark            \\ \cline{2-6} 
                            & NSGD~\cite{yang2022normalized}           &$\displaystyle\mathbb{O}\left(\frac{\sqrt[4]{d\log(1/\delta)}}{(n\epsilon)^{\frac{1}{2}}}\right)$        &\ding{53}     &\begin{tabular}[c]{@{}c@{}}generalized\\smooth\end{tabular}          &\checkmark            \\ \cline{2-6} 
                             &Chen et al.~\cite{chen2020understanding}          &$\displaystyle\mathbb{O}\left(\frac{\sqrt{d}}{n\epsilon}\right)$        &\ding{53}      & symmetry   &\checkmark            \\ \cline{2-6} 
                             &Auto-S~\cite{bu2024automatic}        &$\displaystyle\mathbb{O}\left(\frac{\sqrt{d}}{n\epsilon}\right)$       &$\displaystyle\mathbb{O}\left(\frac{d}{\sqrt[4]{T}}\right)$     & symmetry           &\checkmark            \\ \cline{2-6} 
                             &PDP-SGD~\cite{zhou2020bypassing} &$\displaystyle\mathbb{O}\left(\frac{k}{n\epsilon}\right)$        &\ding{53}    & public data           &\ding{53}          \\ \midrule           
\multirow{13}{*}{\rotatebox{90}{High probability}} &$\begin{tabular}[c]{@{}c@{}}\\ Madden et al.~\cite{madden2020high}  \\ \quad\end{tabular}$               &\ding{53}        &$\displaystyle\mathbb{O}\left(\frac{\sqrt{\log(T)}\log^{\theta}(1/\delta)}{\sqrt{T}}+\frac{\hat{\log}(T/\delta)\log(1/\delta)}{\sqrt{T}}\right)$   &heavy tails            &\ding{53}             \\ \cline{2-6}
                                        &$\begin{tabular}[c]{@{}c@{}}\\ Li et al.~\cite{li2022high}  \\ \quad\end{tabular}$                   &\ding{53}         &$\displaystyle\mathbb{O}\left(\frac{\log^{2\theta}(1/\delta)\log(T)}{\sqrt{T}}+\frac{\hat{\log}(T/\delta)\log(1/\delta)}{\sqrt{T}}\right)$  &heavy tails            &\ding{53}             \\ \cline{2-6}
                                        &$\begin{tabular}[c]{@{}c@{}}\\ Li et al.~\cite{li2022high}  \\ \quad\end{tabular}$                &\ding{53}       &$\displaystyle\mathbb{O}\left(\frac{\log^{\theta}(T/\delta)\log(T)}{\sqrt{T}}+\frac{\log^{2\theta+1}(T)\log(T/\delta)}{\sqrt{T}}\right)$     &heavy tails            & \checkmark           \\ \cline{2-6}
                                        & $\begin{tabular}[c]{@{}c@{}}\text{Our}\\ \text{DPSGD}  \end{tabular}$                  &\multicolumn{2}{c|}{$\displaystyle\mathbb{O}\left(d^\frac{1}{4}\log^{\frac{5}{4}}(T/\delta)\cdot\frac{\hat{\log}(T/\delta)\log^{2\theta}(\sqrt{T})}{(n\epsilon)^{\frac{1}{2}}}\right)$}            &heavy tails            & \checkmark           \\ \cline{2-6}
                             & $\begin{tabular}[c]{@{}c@{}}\text{Our}\\ \text{\ourtech}  \end{tabular}$           &\multicolumn{2}{c|}{$\displaystyle\mathbb{O}\left(d^\frac{1}{4}\log^{\frac{5}{4}}(T/\delta)\cdot\big (p\frac{\hat{\log}(T/\delta)\log^{2\theta}(\sqrt{T})}{(n\epsilon)^{\frac{1}{2}}}+(1-p)(\frac{\log(\sqrt{T})}{(n\epsilon)^{\frac{1}{2}}})\big) \right)$}          &heavy tails    &\checkmark            \\ \bottomrule
\end{tabular}
}
\vspace{-0.5cm}
\end{table}
\begin{theorem}[\textbf{Convergence of Heavy-tailed DPSGD}]
\label{thm:DPSGD}
Under Assumptions~{\ref{ass:sub-weibull}} and~{\ref{ass:smooth}}, let $\mathbf{w}_{t}$ be the iterate produced by DPSGD and $\eta_t = \frac{1}{\sqrt{T}}$. Suppose that $T= \max{\big(m_2eB^2\log(1/\delta), \frac{n\epsilon}{\sqrt{d\log(1/\delta)}}\big)}$ and $c=\max{\big(4K\log^{\theta}({\sqrt{T}}),39K\log^{\theta}(2/\delta)\big)}$, where $m_2$ is a constant that will be introduced later and $d$ is the number of model parameters. For any $\delta \in (0,1)$, with probability $1-\delta$, we have:
\begin{align}
     \frac{1}{T}\sum^T_{t=1} \min\big\{\Vert\nabla L_S(\mathbf{w}_{t})\Vert_2, \Vert\nabla L_S(\mathbf{w}_{t})\Vert^2_2\big\} \leq \mathbb{O}\left(\frac{d^{\frac{1}{4}} \log^{\frac{5}{4}}(T/\delta)\hat{\log}(T/\delta)\log^{2\theta}(\sqrt{T}) }{(n\epsilon)^\frac{1}{2}}\right) , \nonumber
\end{align}
where $\hat{\log}(T/\delta) = \log^{\max(0,\theta-1)}(T/\delta)$. 
\end{theorem}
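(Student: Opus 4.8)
The plan is to adapt the standard descent-lemma argument for non-convex SGD, but with two twists: the DP noise injection and, crucially, the heavy-tailed (sub-Weibull) stochastic gradient noise, which forces us to abandon MGF-based concentration in favor of a truncation-plus-martingale argument. First I would write the one-step expansion from $\beta$-smoothness (Assumption~\ref{ass:smooth}): $L_S(\mathbf{w}_{t+1}) \le L_S(\mathbf{w}_t) - \eta_t \langle \nabla L_S(\mathbf{w}_t), \tilde{g}_t\rangle + \tfrac{\beta \eta_t^2}{2}\Vert \tilde{g}_t\Vert_2^2$, where $\tilde{g}_t$ is the clipped per-sample gradient plus the Gaussian DP noise. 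Decompose $\tilde{g}_t = \mathrm{clip}_c(\nabla\ell(\mathbf{w}_t)) + \xi_t$ with $\xi_t \sim \mathcal{N}(0, \sigma^2 c^2 I_d)$ and the noise multiplier $\sigma$ chosen from the $(\epsilon,\delta)$-DP accounting so that $\sigma^2 \asymp T\log(1/\delta)/(n\epsilon)^2$; the $\sqrt{d}$ factors in the bound come from $\mathbb{E}\Vert \xi_t\Vert_2^2 = d\sigma^2 c^2$.

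The heart of the argument is controlling the clipping bias $b_t := \nabla L_S(\mathbf{w}_t) - \mathbb{E}_t[\mathrm{clip}_c(\nabla\ell(\mathbf{w}_t))]$ and the martingale deviation $\sum_t \eta_t \langle \nabla L_S(\mathbf{w}_t), \mathbb{E}_t[\mathrm{clip}_c(\nabla\ell(\mathbf{w}_t))] - \mathrm{clip}_c(\nabla\ell(\mathbf{w}_t))\rangle$. For the bias, I would split on the event $\{\Vert \nabla\ell(\mathbf{w}_t)\Vert_2 \le c\}$ versus its complement: on the former there is no clipping, and on the latter the sub-Weibull tail from Assumption~\ref{ass:sub-weibull} (together with the $G$-boundedness of $\nabla L_S$, Assumption~\ref{ass:G-bounded}) gives $\mathbb{P}(\text{clip active}) \le 2\exp(-(c/(2K))^{1/\theta})$ plus a truncated-moment term $\mathbb{E}[\Vert \nabla\ell(\mathbf{w}_t)\Vert_2 \mathbf{1}\{\cdot > c\}]$. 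This is exactly where the choice $c \asymp K\max(\log^\theta(\sqrt{T}), \log^\theta(2/\delta))$ enters: it makes the per-step bias of order $1/\mathrm{poly}(T)$ so that $\sum_t \eta_t \Vert b_t\Vert \cdot \Vert \nabla L_S(\mathbf{w}_t)\Vert$ is negligible, while the $\log^{2\theta}(\sqrt{T})$ factor in the final bound tracks $c^2$ appearing in $\Vert \tilde g_t\Vert_2^2$ and in $\mathbb{E}\Vert\xi_t\Vert^2$. For the martingale term I would invoke a Freedman-type / sub-Weibull martingale concentration inequality (the ``martingale difference inequality'' of Madden et al.\ cited after Assumption~\ref{ass:sub-weibull}), using the truncation at level $\sim c$ to bound the conditional variance by $\sim c^2$ and picking up the residual tail mass beyond the truncation via a union bound over $T$ steps — this is the source of the $\hat{\log}(T/\delta) = \log^{\max(0,\theta-1)}(T/\delta)$ and the $\log^{5/4}(T/\delta)$ factors.

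Assembling: telescoping the descent inequality over $t=1,\dots,T$, dividing by $\eta T = \sqrt{T}$, and using $L_S(\mathbf{w}_1) - \inf L_S = \mathbb{O}(1)$, I would obtain $\tfrac1T\sum_t \min\{\Vert\nabla L_S(\mathbf{w}_t)\Vert_2, \Vert\nabla L_S(\mathbf{w}_t)\Vert_2^2\} \lesssim \tfrac{1}{\sqrt T}\big(1 + \beta d\sigma^2 c^2 + (\text{bias}) + (\text{martingale dev.})\big)$; the $\min$ of norm and squared-norm arises naturally from the Cauchy–Schwarz split of the cross term $\langle \nabla L_S, b_t\rangle$ into a piece controlled by $\Vert\nabla L_S\Vert$ and a piece by $\Vert\nabla L_S\Vert^2$, a standard device in clipped-SGD analysis. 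Plugging $\sigma^2 \asymp T\log(1/\delta)/(n\epsilon)^2$, $c \asymp K\log^\theta(T/\delta)$ and the stated $T = \max(m_2 e B^2\log(1/\delta), n\epsilon/\sqrt{d\log(1/\delta)})$ collapses the dominant term to $\mathbb{O}\big(d^{1/4}\log^{5/4}(T/\delta)\,\hat{\log}(T/\delta)\,\log^{2\theta}(\sqrt T)/(n\epsilon)^{1/2}\big)$. The main obstacle I anticipate is the martingale concentration under only a sub-Weibull (not sub-exponential) tail: standard Bernstein/Freedman bounds require a finite MGF, which fails here, so one must carefully combine per-step truncation at the clipping radius $c$ with a tail-sum bound — and then verify that the truncation level is simultaneously large enough to keep the bias small and small enough to keep the variance proxy (hence the final rate) under control. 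Balancing these two competing demands on $c$ is the crux of the whole proof.
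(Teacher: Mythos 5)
Your overall skeleton (descent lemma, splitting the update into clipped gradient plus DP noise, truncating the sub-Weibull tail to control the clipping bias, and a Freedman-type martingale bound for the zero-mean fluctuation) matches the body of the paper's argument for the regime where the population gradient is small. The genuine gap is that you never treat the complementary regime $\Vert\nabla L_S(\mathbf{w}_t)\Vert_2 > c/2$, and your bias bound silently assumes you are not in it. An estimate of the form $\mathbb{P}(\Vert\nabla\ell(\mathbf{w}_t)\Vert_2 > c)\le 2\exp(-(c/2K)^{1/\theta})$ follows from Assumption~\ref{ass:sub-weibull} only when $\Vert\nabla L_S(\mathbf{w}_t)\Vert_2\le c/2$, because only then does the clipping event force the noise $\Vert\nabla\ell(\mathbf{w}_t)-\nabla L_S(\mathbf{w}_t)\Vert_2$ to exceed $c/2$; when the population gradient itself exceeds $c/2$, clipping is active with probability close to one and the bias $\Vert\mathbb{E}_t[\overline{\mathbf{g}}_t]-\nabla L_S(\mathbf{w}_t)\Vert_2$ is of order $\Vert\nabla L_S(\mathbf{w}_t)\Vert_2-c$, not $1/\mathrm{poly}(T)$. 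Invoking Assumption~\ref{ass:G-bounded} does not repair this: it is not among the hypotheses of this theorem, and nothing forces $c\ge 2\sqrt{G}$. The paper handles this regime as a separate case, using Lemma~\ref{lemma:5} to lower-bound $\langle \mathbf{g}_t/\Vert\mathbf{g}_t\Vert_2,\nabla L_S(\mathbf{w}_t)\rangle$ by $\Vert\nabla L_S(\mathbf{w}_t)\Vert_2/3 - 8\Vert\mathbf{g}_t-\nabla L_S(\mathbf{w}_t)\Vert_2/3$, which, after choosing $c\ge 39K\log^{\theta}(2/\delta)$ so that the coefficient of $\sum_t\eta_t\Vert\nabla L_S(\mathbf{w}_t)\Vert_2$ stays positive, yields a bound on the average \emph{norm} rather than the squared norm.

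This missing case is also why your explanation of the $\min\{\Vert\cdot\Vert_2,\Vert\cdot\Vert_2^2\}$ is off: the Young/Cauchy--Schwarz split of $\langle\nabla L_S(\mathbf{w}_t),b_t\rangle$ produces only squared-norm terms (that is exactly the paper's term E.5 inside the small-gradient case). The $\min$ comes from taking the better of the two case-wise conclusions, and the factor $\hat{\log}(T/\delta)=\log^{\max(0,\theta-1)}(T/\delta)$ likewise arises from merging the large-gradient case's $\log^{\theta+\frac{1}{4}}(T/\delta)$ with the small-gradient case's $\log^{\frac{5}{4}}(T/\delta)$, not from a union bound on the truncation residual. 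The rest of your plan --- the role of $c\ge 4K\log^{\theta}(\sqrt T)$ in killing the $\sqrt{T}\,\exp(-(c/4K)^{1/\theta})$ bias term, the second condition on $c$ at level $\log^{\theta}(2/\delta)$, and the final balancing with $T$ as stated --- is consistent with the paper's proof.
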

\begin{remark}
    \normalfont{From Theorem~\ref{thm:DPSGD}, we can derive that as $\theta$ increases, the optimization performance of DPSGD gradually deteriorates. If $\theta = \frac{1}{2}$, the convergence bound will become $\mathbb{O}(d^{\frac{1}{4}} \log^{\frac{5}{4}}(T/\delta)\log(\sqrt{T}) /(n\epsilon)^\frac{1}{2})$, which matches the current optimal expectation bounds of DPSGD variants, i.e., $\mathbb{O}(\sqrt[4]{d\log(1/\delta)}/(n\epsilon)^{\frac{1}{2}})$ in~\cite{yang2022normalized} except for an extra high probability term $\log(T/\delta)\log(\sqrt{T})$.
    This is consistent with the optimization analysis of SGD, where the expectation bound and high probability bound of SGD also differ by such a probability term. 
    When $\theta>1$, the upper bound will increase as the $\hat{\log}(T/\delta)$ term and $\log^{2\theta}(\sqrt{T})$ term increase. In addition, the dependency on the confidence parameter $1/\delta$ is logarithmic, similar to the 
    high probability bounds of SGD~\cite{li2022high,li2023high,madden2020high} summarized in Table~\ref{tab:summary}. To our knowledge, we are the first to use probability bounds as a measure to prove the optimization performance in DPSGD. 
    Besides, suppose $\sqrt{T}=(n\epsilon)^{\frac{1}{2}}/\sqrt[4]{d\log(1/\delta)}$, we can transform the result of DPSGD to $\mathbb{O}(\log(T/\delta)\hat{\log}(T/\delta)\log^{2\theta}(\sqrt{T}) /\sqrt{T})$, which can match the results of clipped SGD~\cite{li2022high} with an improvement $\sqrt{T}$ in the logarithm term $\log^{2\theta}(\sqrt{T})$. }
\end{remark}
\begin{remark}
    \normalfont{From the perspective of the clipping threshold, we can see that the value of $c$ is positively correlated to $\theta$. The ideal clipping threshold should scale up with the increase of the heavy-tailed factor $\theta$. 
    Intuitively speaking, if we utilize the existing guidance for clipping threshold values under the light-tailed assumption, it will cause higher clipping losses for the tailed gradients with larger $L_2$ norms, damaging the effectiveness of DPSGD. 
    }
\end{remark}
Motivated by the above analysis, we now present our approach \ourtech{} that effectively handles the heavy-tailed gradients.
Figure~\ref{fig:2} gives an overview of this approach.
The rationale is to divide gradients following a heavy-tailed sub-Weibull distribution into two parts: light body and heavy tail, and utilize different clipping thresholds for the two parts respectively. 
Then, we adopt a small clipper threshold for light body and 
a larger clipping threshold for heavy tail, so as to mitigate the extra clipping loss introduced by heavy-tailed gradients. 
Specifically, \ourtech{} consists of two steps. 
\vspace{-0.3cm}
\section{Discriminative Clipping DPSGD with Subspace Identification}
\begin{figure}[ht]
	\centering
	\begin{subfigure}{0.95\linewidth}
		\centering
            \includegraphics[width=0.99\linewidth]{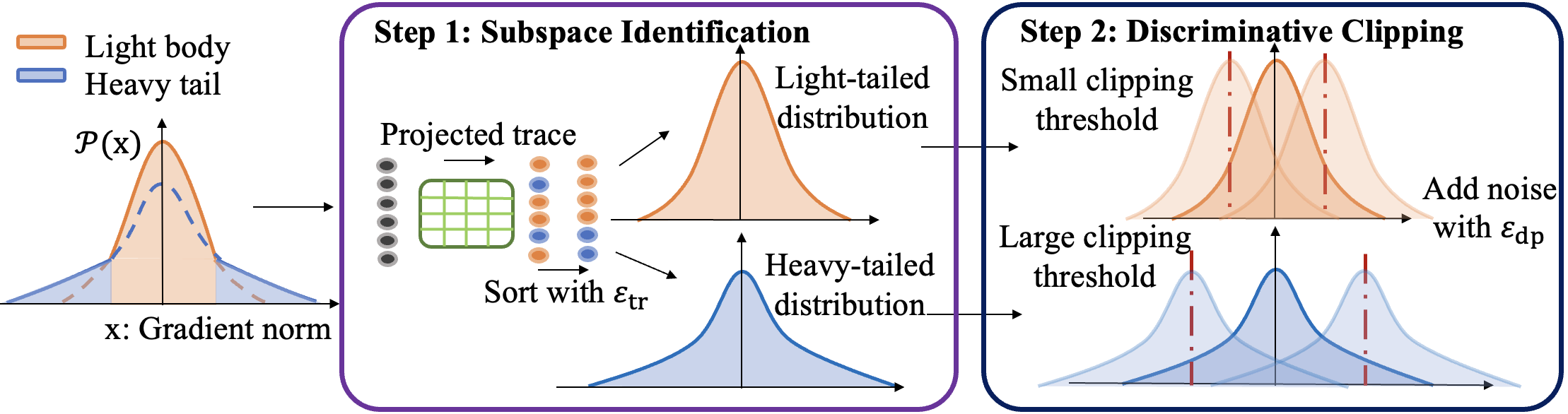}
	\end{subfigure}
        \caption{Overview of~\ourtech.}
        \label{fig:2}
        \vspace{-0.5cm}
\end{figure}
In the first step, we present a subspace identification technique to distinguish gradients. Given that the normalized gradients still retain directional information, which can be amplified when projected onto the subspace consistent with its underlying distribution, we can bypass the unbounded norm of heavy-tailed gradients and capture different responses of private gradients in the heavy-tailed subspace. 
In our approach, we construct projection matrix composed of random vectors following heavy-tailed sub-Weibull distributions~($\theta>\frac{1}{2}$). We then divide the gradients into the light body and heavy tail according to the projected traces $\lambda_{\mathrm{tr}}$, which are calculated from the sample second moment matrix. To satisfy differential privacy,  noise with scale $\sigma_{\mathrm{tr}}$ is added. 

In the second step, we utilize different clipping thresholds for the two parts and add DP noise with scale $\sigma_{\mathrm{dp}}$ based on the discriminative clipping thresholds for privacy preservation. 
For fairness, the total privacy budget allocated by~\ourtech{} to traces $\epsilon_{\mathrm{tr}}$ and gradients $\epsilon_{\mathrm{dp}}$ must be equal to the privacy budget $\epsilon$ in DPSGD variants, that is, $\epsilon = \epsilon_{\mathrm{tr}} + \epsilon_{\mathrm{dp}}$. Algorithm~\ref{alg:algorithm1} presents the detailed steps of \ourtech{} and Theorem~\ref{thm:DCDPSGD} gives its privacy guarantee. 
\begin{theorem}[\textbf{Privacy Guarantee}]
There exist constants $m_1$ and $m_2$ such that for any $\epsilon_{\mathrm{tr}} \leq m_1q^2T$, $\epsilon_{\mathrm{dp}} \leq m_1q^2T$ and $\delta > 0$, the noise multiplier $\sigma^2_{\mathrm{tr}} = \frac{m_2Tq^2\ln{\frac{1}{\delta}}}{\epsilon^2_{\mathrm{tr}}}$ and $\sigma^2_{\mathrm{dp}} = \frac{m_2Tq^2\ln{\frac{1}{\delta}}}{\epsilon^2_{\mathrm{dp}}}$ over the $T$ iterations, where $q=\frac{B}{n}$, and \ourtech~is $(\epsilon_{\mathrm{tr}} + \epsilon_{\mathrm{dp}},\delta)$-differentially private.
\end{theorem}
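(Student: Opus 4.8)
The plan is to derive the privacy guarantee by composing two Gaussian mechanisms---one for the projected traces and one for the clipped gradients---via the moments accountant of Abadi et al.~\cite{abadi2016deep}, exactly as in the original DPSGD analysis, but now tracking the two sub-mechanisms separately. First I would observe that in each iteration, Algorithm~\ref{alg:algorithm1} releases two privatized quantities computed on a Poisson (or uniform) subsample of rate $q = B/n$: the trace statistic $\lambda_{\mathrm{tr}}$ with Gaussian noise of scale $\sigma_{\mathrm{tr}}$, and the clipped gradient sum with Gaussian noise of scale $\sigma_{\mathrm{dp}}$. Since each per-sample contribution to the trace is bounded (the subspace projection followed by the second-moment trace has sensitivity controlled by the discriminative clipping norms, so one can normalize to unit $L_2$-sensitivity), and each clipped per-sample gradient has $L_2$-sensitivity one by construction, both are standard subsampled Gaussian mechanisms.

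Next I would invoke the moments-accountant bound: for a subsampled Gaussian mechanism with sampling rate $q$ and noise multiplier $\sigma$, there exist constants $c_1, c_2$ such that after $T$ steps the mechanism is $(\epsilon,\delta)$-DP whenever $\epsilon \le c_1 q^2 T$ and $\sigma \ge c_2 \frac{q\sqrt{T \log(1/\delta)}}{\epsilon}$ (Theorem~1 of~\cite{abadi2016deep}). Applying this separately to the trace-release mechanism with budget $\epsilon_{\mathrm{tr}}$ and to the gradient-release mechanism with budget $\epsilon_{\mathrm{dp}}$ yields precisely the stated conditions $\epsilon_{\mathrm{tr}} \le m_1 q^2 T$, $\epsilon_{\mathrm{dp}} \le m_1 q^2 T$, and the noise multipliers $\sigma_{\mathrm{tr}}^2 = \frac{m_2 T q^2 \ln(1/\delta)}{\epsilon_{\mathrm{tr}}^2}$, $\sigma_{\mathrm{dp}}^2 = \frac{m_2 T q^2 \ln(1/\delta)}{\epsilon_{\mathrm{dp}}^2}$, where $m_1 = c_1$ and $m_2 = c_2^2$. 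Finally I would compose the two: because the log-moments of independent mechanisms add, the overall mechanism over all $T$ iterations has its moments bounded by the sum of the two accountants' moments, so by the tail bound converting moments to $(\epsilon,\delta)$-DP it is $(\epsilon_{\mathrm{tr}} + \epsilon_{\mathrm{dp}}, \delta)$-DP. (One can equally cite basic composition of DP, but the moments accountant gives the tight constants claimed.)

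The main obstacle---and the only part requiring genuine care rather than citation---is verifying the sensitivity of the trace statistic $\mathrm{tr}(V_k^T \nabla\ell \nabla\ell^T V_k)$ under a change of one data point, so that it genuinely reduces to a unit-sensitivity Gaussian mechanism. Here I would argue that the per-sample projected gradient $V_k^T \nabla\ell(\mathbf{w}_t, z_i)$ is first clipped (implicitly, via the same discriminative thresholds, or by a dedicated norm bound on the projection), so its contribution $\Vert V_k^T \nabla\ell(\mathbf{w}_t,z_i)\Vert_2^2$ to the summed trace is bounded by a constant; dividing by that constant normalizes the sensitivity to one, and the constant is absorbed into the clipping thresholds rather than affecting $\sigma_{\mathrm{tr}}$. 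The remaining steps---boundedness of the clipped gradients, independence of the two noise draws, Poisson-subsampling amplification---are routine and follow the template of~\cite{abadi2016deep} verbatim.
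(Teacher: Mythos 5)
The paper itself provides no proof of this theorem: it is stated in the main text and nowhere argued in the appendix, evidently being treated as an immediate consequence of Theorem~1 of Abadi et al.~\cite{abadi2016deep} applied twice plus composition. Your proposal fills exactly that gap in the standard way --- moments accountant for each subsampled Gaussian mechanism with rate $q=B/n$, giving $\epsilon_{\mathrm{tr}},\epsilon_{\mathrm{dp}} \leq m_1 q^2 T$ and $\sigma^2 = m_2 T q^2 \ln(1/\delta)/\epsilon^2$ with $m_1=c_1$, $m_2=c_2^2$, followed by additivity of log-moments to get the budget $\epsilon_{\mathrm{tr}}+\epsilon_{\mathrm{dp}}$ --- so there is nothing in the paper to contrast it with, and this is surely the argument the authors intended.

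Two small points where your account drifts from what the algorithm actually does. First, the sensitivity of the trace statistic is not controlled by clipping the projected gradient: line~7 of Algorithm~\ref{alg:algorithm1} normalizes each per-sample gradient to unit norm \emph{before} projection, and since $V_k^TV_k=\mathbb{I}$ one has $\lambda_i = \Vert V_k^T\hat g_t(x_i)\Vert_2^2 \leq \Vert \hat g_t(x_i)\Vert_2^2 = 1$, which is precisely the unit-sensitivity claim the paper makes in the remark after Theorem~\ref{thm:subspace}; no additional clipping or normalizing constant is needed. Second, the trace mechanism releases a \emph{per-sample} noisy scalar for each element of the batch (used to rank and select the top-$p$), not a noisy sum over the batch, so the privacy-amplification-by-subsampling step is not literally the same subsampled-Gaussian-sum mechanism as the gradient release; a careful proof should either note that changing one record perturbs exactly one $\lambda_i$ by at most $1$ and run the accountant for that release, or reduce the selection of the top-$p$ set to a mechanism to which the Abadi et al.\ bound applies. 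Neither point breaks your argument, but both deserve a sentence if this sketch were to be turned into a full proof.
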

\vspace{-10pt}
\subsection{Subspace Closeness for Identification}
As introduced above, we use subspace as an auxiliary tool to indirectly identify heavy-tailed gradients and reduce clipping loss.
We construct the subspace $V_kV^T_k$ that is composed of $k$ random orthogonal unit vectors and we need to bound the gap between the empirical second moment and the population second moment, i.e. $\Vert V_kV^T_k-\mathbb{E}[V_kV^T_k]\Vert_2$. 
It is worth noting that we add extra noise in line 9 of Algorithm~\ref{alg:algorithm1}, as the publicly available traces need to be sorted to confirm the top-$p$ heavy-tailed gradients, which may expose intrinsic preferences of the samples. 
According to Ahlswede-Winter Inequality~\cite{wainwright2019high}, 
we analyze the error of subspace skewing in a high probability form.
\begin{theorem}[\textbf{Subspace Skewing for Identification}]
\label{thm:subspace}
Assume that the second moment matrix $M := V_kV^T_k$ with $V^T_kV_k=\mathbb{I}$ approximates the population second moment matrix $\hat{M} := \hat{V}_k\hat{V}^T_k = \mathbb{E}[V_kV^T_k]$, $\lambda_{\mathrm{tr}} := \mathrm{tr}(V^T_k uu^T V_k)$ and $\hat{\lambda}_{\mathrm{tr}}:=\mathrm{tr}(\hat{V}^T_k uu^T\hat{V}_k) $, for any vector $u$ that satisfies $\Vert u \Vert_2 = 1$, $\zeta_{\mathrm{tr}} \sim \mathbb{N}(0,\sigma^2_{\mathrm{tr}}\mathbb{I})$ and $\delta \in (0,1)$, with probability $1-\delta_m - \delta$, we have
\begin{align}
    |\lambda_{\mathrm{tr}} - \hat{\lambda}_{\mathrm{tr}}+\zeta_{\mathrm{tr}}| \leq \frac{4\log{(2d/\delta_m)}}{k} + \sigma_{\mathrm{tr}}\log^{\frac{1}{2}}(2/\delta) . \nonumber
\end{align}
\end{theorem}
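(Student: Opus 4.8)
\textbf{Proof proposal for Theorem~\ref{thm:subspace}.}

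The plan is to decompose the quantity $|\lambda_{\mathrm{tr}} - \hat{\lambda}_{\mathrm{tr}}+\zeta_{\mathrm{tr}}|$ into a concentration term and a noise term, bound each separately, and then combine them by a union bound. By the triangle inequality,
\begin{align}
    |\lambda_{\mathrm{tr}} - \hat{\lambda}_{\mathrm{tr}}+\zeta_{\mathrm{tr}}| \leq |\lambda_{\mathrm{tr}} - \hat{\lambda}_{\mathrm{tr}}| + |\zeta_{\mathrm{tr}}| . \nonumber
\end{align}
For the second summand, since $\zeta_{\mathrm{tr}} \sim \mathbb{N}(0,\sigma^2_{\mathrm{tr}}\mathbb{I})$ is a (scalar) Gaussian, a standard sub-Gaussian tail bound gives $|\zeta_{\mathrm{tr}}| \leq \sigma_{\mathrm{tr}}\log^{1/2}(2/\delta)$ with probability at least $1-\delta$ (up to absorbing universal constants into the $\mathbb{O}$-style statement). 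This is the routine half.

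For the first summand, the key is to relate $|\lambda_{\mathrm{tr}} - \hat{\lambda}_{\mathrm{tr}}|$ to the operator-norm deviation $\|M - \hat{M}\|_2 = \|V_kV^T_k - \mathbb{E}[V_kV^T_k]\|_2$. Writing $\lambda_{\mathrm{tr}} = \mathrm{tr}(V^T_k uu^T V_k) = u^T V_k V^T_k u = u^T M u$ and similarly $\hat{\lambda}_{\mathrm{tr}} = u^T \hat{M} u$, we get
\begin{align}
    |\lambda_{\mathrm{tr}} - \hat{\lambda}_{\mathrm{tr}}| = |u^T(M - \hat{M})u| \leq \|M - \hat{M}\|_2 \|u\|_2^2 = \|M - \hat{M}\|_2 , \nonumber
\end{align}
using $\|u\|_2 = 1$. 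It then remains to show $\|M - \hat{M}\|_2 \leq 4\log(2d/\delta_m)/k$ with probability $1-\delta_m$. Here I would invoke the Ahlswede--Winter / matrix concentration inequality cited from \cite{wainwright2019high}: the projector $M = V_kV^T_k$ onto a random $k$-dimensional subspace is an average of rank-one (or low-rank) bounded PSD matrices whose expectation is $\hat{M}$, and the matrix Bernstein / Ahlswede--Winter bound controls the operator-norm deviation. The factor $1/k$ arises because averaging over $k$ orthonormal directions reduces the variance proxy by $1/k$, and the $\log(2d/\delta_m)$ factor is the usual dimension-dependent price of a matrix tail bound in $\mathbb{R}^{d\times d}$ (the $2d$ coming from applying a scalar Bernstein bound to $\pm(M-\hat M)$ and taking a union bound over the $d$ eigenvalues, or directly from the intrinsic-dimension form). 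Plugging this in and combining with the Gaussian bound via a union bound over the two failure events (probabilities $\delta_m$ and $\delta$) yields the claim with probability $1-\delta_m-\delta$.

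The main obstacle I anticipate is the matrix-concentration step: one must set up $V_kV^T_k$ as a sum/average of independent bounded random matrices in a way that the cited Ahlswede--Winter inequality applies cleanly, and verify that the variance parameter and the almost-sure norm bound are exactly such that the deviation comes out as $4\log(2d/\delta_m)/k$ rather than, say, $\sqrt{\log(d/\delta_m)/k}$ — i.e., one needs to be in the regime where the Bernstein bound is dominated by its "variance" (fast) term and not its "sub-exponential" (slow) term, or alternatively argue directly from properties of the projector. Care is also needed because the random unit vectors are constrained to be orthonormal (so the columns of $V_k$ are not independent), which may require either a decoupling argument or appeal to a version of the inequality tailored to random projections/Haar-type ensembles; handling this dependence correctly is the delicate part. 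The Gaussian-tail and triangle-inequality steps are otherwise entirely standard.
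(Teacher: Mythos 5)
Your overall architecture matches the paper's: triangle inequality to split off the Gaussian term, a standard Gaussian tail for $|\zeta_{\mathrm{tr}}|\leq\sigma_{\mathrm{tr}}\log^{1/2}(2/\delta)$, and the Ahlswede--Winter inequality applied to $M=\frac{1}{k}\sum_{i=1}^k v_iv_i^T$ with $R=1$ and $m=k$ to control $\Vert M-\hat M\Vert_2$. However, there is a genuine gap at exactly the point you flagged as the anticipated obstacle, and it is not resolvable along your route. Your reduction $|\lambda_{\mathrm{tr}}-\hat\lambda_{\mathrm{tr}}|=|u^T(M-\hat M)u|\leq\Vert M-\hat M\Vert_2$ is linear in the operator-norm deviation, and the cited inequality, $\mathbb{P}(\Vert M-\hat M\Vert_2>\mu)\leq 2d\exp(-k\mu^2/4)$, only yields $\Vert M-\hat M\Vert_2\leq 2\sqrt{\log(2d/\delta_m)/k}$ with probability $1-\delta_m$. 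Plugging that in gives a bound of order $\sqrt{\log(2d/\delta_m)/k}$, which is strictly weaker than the claimed $4\log(2d/\delta_m)/k$ for large $k$; no choice of variance proxy in the Bernstein/Ahlswede--Winter machinery will convert a linear dependence on the deviation into the $1/k$ rate.

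The paper gets the $1/k$ rate by a different reduction: it writes $\lambda_{\mathrm{tr}}=\Vert V_kV_k^T\hat g_t\Vert_2^2$ and bounds
$|\Vert V_kV_k^T\hat g_t\Vert_2^2-\Vert\hat V_k\hat V_k^T\hat g_t\Vert_2^2|\leq\Vert(V_kV_k^T-\hat V_k\hat V_k^T)\hat g_t\Vert_2^2\leq\Vert M-\hat M\Vert_2^2$, i.e.\ the \emph{square} of the operator-norm deviation, and then squares the Ahlswede--Winter bound $2\sqrt{\log(2d/\delta_m)/k}$ to obtain $4\log(2d/\delta_m)/k$. (Note that the intermediate step $|\Vert a\Vert^2-\Vert b\Vert^2|\leq\Vert a-b\Vert^2$ used there is not a valid inequality in general, so the paper's route to the quadratic dependence is itself delicate; your linear bound $|u^T(M-\hat M)u|\leq\Vert M-\hat M\Vert_2$ is the standard and correct estimate, it just proves a $1/\sqrt{k}$ statement rather than the $1/k$ statement asserted in the theorem.) Your concern about the non-independence of the orthonormal columns of $V_k$ applies equally to both arguments and is likewise not addressed in the paper.
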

\begin{remark}
    \normalfont{Because we have normalized per-sample gradient in advance, the upper bound of the trace for per-sample gradient is limited to 1. So, the sensitivity in differential privacy can be regarded as 1. In addition, since $\lambda_{\mathrm{tr}}$ is a constant, the scale $\sigma_{\mathrm{tr}}$ of noise added is small compared to the noise scale $\sigma^2_{\mathrm{dp}}$ added to gradients. In this case, the probability term $ \log{(2d/\delta_m)}/k$ dominates this boundary and decreases as $k$ increases, so the error is negligible when $k$ is large. 
    Since $\lambda_{\mathrm{tr}}$ represents the total variance of the gradient in the $k$-dimensional subspace, we know from Theorem~\ref{thm:subspace} that the upper bound of the error between this value and the variance of the actual distribution is finite. 
    In other words, Theorem~\ref{thm:subspace} indicates that we can accurately identify and classify gradients with a high probability $1-\delta^{\prime}_m$, where $\delta^{\prime}_m = \delta + \delta_m$.}
\end{remark}

\begin{algorithm}[t]
\caption{Discriminative Clipping DPSGD with Subspace Identification}
\label{alg:algorithm1}
\textbf{Input}: Private batch size $B$, heavy-tailed ratio $p$, heavy-tailed clipping threshold $c_1$, light-tailed clipping threshold $c_2$, learning rate $\eta_t$ and subspace dimension $k$.
\begin{algorithmic}[1] 
\STATE Initialize $\mathbf{w}_0$ randomly.
\FOR{$e \in E$}
\STATE Initialize $V_{t,k}$ to None.
\FOR{$t \in T$}
\STATE Take a random batch $B$ with sampling ratio $B/n$ and $g_t(z_i) = \nabla\ell(\mathbf{w}_t,z_i)$.
\STATE Extract orthogonal vectors $[v_1,...,v_k]$ from sub-Weibull distributions and construct projection subspace with $V_{t,k}V^T_{t,k} = \frac{1}{k}\sum^k_{i=1}v_iv^T_i$.
\STATE Normalize per-sample gradient $\hat{g}_t(x_i) = g_t(x_i)/\Vert g_t(x_i) \Vert$.
\STATE Calculate the trace $\lambda_i$ of the projected second moment $V^T_{t,k}\hat{g}_t(x_i)\hat{g}^T_t(x_i)V_{t,k}$.
\STATE Perturb traces with noise $\widetilde{\lambda}_i = \lambda_i + \mathbb{N}(0,\sigma^2_{\mathrm{tr}}\mathbb{I})$ and identify top-$p$ based on sorted $\widetilde{\lambda}_i$.
\STATE Clip per-sample gradient and add noise. 
\\For heavy tail:~ $\overline{g}_t(z^{\mathrm{tail}}_i) = g_t(z^{\mathrm{tail}}_i) / \mathrm{max}(1, \frac{\Vert g_t(z^{\mathrm{tail}}_i)\Vert_2}{c_1}) + \mathbb{N}(0,c_1^2\sigma^2_{\mathrm{dp}}\mathbb{I})$ 
\\For light body:~ $\overline{g}_t(z^{\mathrm{body}}_i) = g_t(z^{\mathrm{body}}_i) / \mathrm{max}(1, \frac{\Vert g_t(z^{\mathrm{body}}_i)\Vert_2}{c_2}) + \mathbb{N}(0,c_2^2\sigma^2_{\mathrm{dp}}\mathbb{I})$ 
\STATE Weighted average $\widetilde{g}_t = \frac{1}{B}\left(\sum^{pB}_{i=1}\overline{g}_t(z^{\mathrm{tail}}_i) + \sum^{(1-p)B}_{i=1}\overline{g}_t(z^{\mathrm{body}}_i) \right)$.
\STATE Update $\mathbf{w}_{t+1} = \mathbf{w}_t - \eta_t\widetilde{g}_t$.
\ENDFOR
\ENDFOR
\end{algorithmic}
\end{algorithm}
\vspace{-0.4cm}
\subsection{Convergence of Discriminative Clipping DPSGD}
Next, we delve into the convergence analysis of \ourtech{} based on the aforementioned clipping mechanism. 
Typically, the tail probability $\mathbb{P}(|x|>t)=\exp{(-I(t))}~\forall{t>0}$ of the sub-Weibull variables $x\sim subW(\theta,K)$ exhibits two different behaviors: 1) For small $t$ values, the tail rate capturing function $I(t)$ decays like a sub-Gaussian tail. 2) For $t$ greater than the normal convergence region, i.e., $t \geq t_{\mathrm{max}}$ is a large deviation region, its decay is slower than that of the normal distribution. 
Existing literature has studied the first region in the optimization analysis for DPSGD~\cite{zhou2020bypassing,bu2024automatic,yang2022normalized,xia2023differentially,cheng2022differentially,xiao2023theory,sha2023pcdp}, but they overlook the heavy-tailed behavior for the second region. 
In our work, 
we not only investigate the optimization performance of one specific region, but also combine the two tailed-rate regions with our proposed discrimination clipping mechanism. To construct a comprehensive optimization framework under heavy-tailed assumptions, we generalize the sharp heavy-tailed concentration~\cite{bakhshizadeh2023sharp} and sub-Weibull Freedman inequality~\cite{madden2020high} to truncated versions. Consequently, we have the following theorem:
\begin{theorem}[\textbf{Convergence of Discriminative Clipping DPSGD}]
\label{thm:DCDPSGD}
Under Assumptions~{\ref{ass:sub-weibull}},~{\ref{ass:smooth}} and~{\ref{ass:G-bounded}}, let $\mathbf{w}_{t}$ be the iterate produced by~\ourtech~and $\eta_t = \frac{1}{\sqrt{T}}$. Define $\hat{\log}(T/\delta) = \log^{\max(0,\theta-1)}(T/\delta)$, $\lambda_{\mathrm{max}} = \frac{\mu I(\lambda)}{\lambda}aK^2$, $a = 2$ if $\theta = \frac{1}{2}$, $a = (4\theta)^{2\theta}e^2$ if $\theta \in (\frac{1}{2},1]$, and $a = (2^{2\theta+1}+2)\Gamma(2\theta+1) + \frac{2^{3\theta}\Gamma(3\theta+1)}{3}$ if $\theta > 1$, for any $\delta \in (0,1)$, then we have:
\begin{enumerate}[leftmargin=*]
\item[(i).]
For the case $0\leq \lambda_{\mathrm{tr}} \leq \lambda_{\mathrm{max}}$, suppose that $T= \max{\big(m_2eB^2\log(1/\delta), \frac{n\epsilon}{\sqrt{d\log(1/\delta)}}\big)}$ and $c=\max{\big(2\sqrt{2a}K\log^{\frac{1}{2}}({\sqrt{T}}),33\sqrt{2a}K\log^{\frac{1}{2}}(2/\delta)\big)}$, with probability $1-\delta$,
\begin{align}
     \frac{1}{T}\sum^T_{t=1} \min\big\{\Vert\nabla L_S(\mathbf{w}_{t})\Vert_2, \Vert\nabla L_S(\mathbf{w}_{t})\Vert^2_2\big\} \leq \mathbb{O}\left(\frac{d^{\frac{1}{4}} \log^{\frac{5}{4}}(T/\delta)\log(\sqrt{T}) }{(n\epsilon)^\frac{1}{2}}\right). \nonumber
\end{align}
\item[(ii).]
For the case $\lambda_{\mathrm{tr}} \geq \lambda_{\mathrm{max}}$, suppose that $T= \max{\big(m_2eB^2\log(1/\delta), \frac{n\epsilon}{\sqrt{d\log(1/\delta)}}\big)}$ and $c=\max{\big(4^{\theta}2K\log^{\theta}({\sqrt{T}}),4^{\theta}33K\log^{\theta}(2/\delta)\big)}$, with probability $1-\delta$,
\vspace{-0.2cm}
\begin{align}
     \frac{1}{T}\sum^T_{t=1} \min\big\{\Vert\nabla L_S(\mathbf{w}_{t})\Vert_2, \Vert\nabla L_S(\mathbf{w}_{t})\Vert^2_2\big\} \leq \mathbb{O}\left(\frac{d^{\frac{1}{4}} \log^{\frac{5}{4}}(T/\delta)\hat{\log}(T/\delta)\log^{2\theta}(\sqrt{T}) }{(n\epsilon)^\frac{1}{2}}\right). \nonumber
\end{align}
\end{enumerate}
\end{theorem}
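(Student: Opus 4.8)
The plan is to follow the same high-probability analysis pipeline used for classical heavy-tailed DPSGD (Theorem~\ref{thm:DPSGD}), but now split the per-sample gradient contributions according to the body/tail partition produced by the subspace identification step. Starting from $\beta$-smoothness, I would write the standard descent inequality $L_S(\mathbf{w}_{t+1}) \le L_S(\mathbf{w}_t) - \eta_t \langle \nabla L_S(\mathbf{w}_t), \widetilde{g}_t\rangle + \frac{\beta \eta_t^2}{2}\|\widetilde{g}_t\|_2^2$, then decompose $\widetilde{g}_t = \frac{1}{B}(\sum_{i\le pB}\overline g_t(z_i^{\mathrm{tail}}) + \sum_{i\le(1-p)B}\overline g_t(z_i^{\mathrm{body}}))$ into a clipped-mean term, a clipping-bias term, and the injected Gaussian noise term. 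The three error sources to control are: (a) the clipping bias from truncating gradients whose norm exceeds $c_1$ (tail) or $c_2$ (body); (b) the stochastic-gradient noise, bounded via the truncated sub-Weibull concentration of \cite{bakhshizadeh2023sharp} and the truncated sub-Weibull Freedman inequality of \cite{madden2020high}; and (c) the DP noise, of scale $c_1\sigma_{\mathrm{dp}}$ resp.\ $c_2\sigma_{\mathrm{dp}}$, whose accumulated effect over $T$ steps is handled by a standard Gaussian martingale tail bound together with the privacy calibration $\sigma^2_{\mathrm{dp}} = m_2 T q^2 \ln(1/\delta)/\epsilon^2_{\mathrm{dp}}$ from the Privacy Guarantee theorem.

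The central new ingredient is the case split on $\lambda_{\mathrm{tr}}$ versus $\lambda_{\mathrm{max}} = \frac{\mu I(\lambda)}{\lambda} a K^2$, which is exactly the crossover between the two tail regimes of a sub-Weibull variable: for $\lambda_{\mathrm{tr}} \le \lambda_{\mathrm{max}}$ the relevant deviations sit inside the sub-Gaussian-like normal region, so the effective moment constant is $a$ and the clipping threshold only needs to scale like $\log^{1/2}(\sqrt T)$ (matching the $\theta=1/2$ line of Theorem~\ref{thm:DPSGD}); for $\lambda_{\mathrm{tr}} \ge \lambda_{\mathrm{max}}$ we are in the large-deviation region where $I(t)$ decays like $t^{1/\theta}$, forcing the threshold to scale like $\log^{\theta}(\sqrt T)$ and introducing the extra $\hat{\log}(T/\delta)\log^{2\theta}(\sqrt T)$ factor. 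In each case I would choose $c$ (i.e.\ the relevant $c_1$, and analogously $c_2$ for the body which always lives in the normal region) as the stated maximum of a $T$-dependent and a $\delta$-dependent term so that (i) the clipping-bias term becomes negligible — the truncated-tail estimate gives that the expected clipped-out mass decays polynomially/exponentially once $c \gtrsim K\log^{\theta}$ of the appropriate argument — and (ii) the martingale concentration for the noise sum closes with probability $1-\delta$.

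Concretely, the steps in order: (1) telescope the descent inequality over $t=1,\dots,T$ and rearrange to isolate $\frac1T\sum_t \min\{\|\nabla L_S\|_2,\|\nabla L_S\|_2^2\}$, using the $\min$ formulation to absorb the regime where normalized/clipped gradients contribute only a first-order term; (2) bound the clipping bias using the generalized truncated sharp concentration of \cite{bakhshizadeh2023sharp}, invoking Assumption~\ref{ass:sub-weibull} and the smoothness/$G$-bounded Assumptions~\ref{ass:smooth},~\ref{ass:G-bounded} to relate $\|\nabla\ell\|$ fluctuations to $\|\nabla L_S\|$; (3) bound the accumulated stochastic-gradient deviation via the truncated sub-Weibull Freedman inequality, which is where the constant $a$ and the $\lambda_{\mathrm{max}}$ threshold enter, yielding the $\log^{1/2}$ versus $\log^{\theta}$ dichotomy; (4) bound the accumulated DP-noise term by $\mathbb{O}(\eta\, c\,\sigma_{\mathrm{dp}}\sqrt{dT\log(1/\delta)})$ and substitute $\sigma_{\mathrm{dp}}$, $T$, $\eta_t=1/\sqrt T$ and the chosen $c$; (5) collect terms and simplify under the stated choice $T = \max(m_2 e B^2\log(1/\delta), n\epsilon/\sqrt{d\log(1/\delta)})$ to get the $d^{1/4}\log^{5/4}(T/\delta)/(n\epsilon)^{1/2}$ prefactor, with the trailing logarithmic factor being $\log(\sqrt T)$ in case (i) and $\hat{\log}(T/\delta)\log^{2\theta}(\sqrt T)$ in case (ii); (6) union-bound over the failure events (subspace identification from Theorem~\ref{thm:subspace}, the two martingale inequalities) to land at total probability $1-\delta$ after rescaling $\delta$.

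The main obstacle I anticipate is step (3): making the truncated version of the sub-Weibull Freedman inequality actually go through with the clip level $c$ acting as the truncation point, so that the ``body'' of the concentration contributes a clean sub-Gaussian-type bound while the truncated tail mass is provably negligible for the stated $c$. This requires carefully tracking how the effective variance proxy picks up the constant $a$ (and hence the $\theta$-dependent cases $a=2$, $a=(4\theta)^{2\theta}e^2$, $a=(2^{2\theta+1}+2)\Gamma(2\theta+1)+\frac{2^{3\theta}\Gamma(3\theta+1)}{3}$) and ensuring the crossover at $\lambda_{\mathrm{max}}=\frac{\mu I(\lambda)}{\lambda}aK^2$ is the point where the two bounds match up to constants — getting this seam right is what legitimizes presenting the result as two cases with genuinely different rates rather than a single conservative bound.
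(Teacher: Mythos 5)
Your proposal follows essentially the same route as the paper's proof: the $\beta$-smoothness descent inequality telescoped over $T$, the split of the error into clipping bias, stochastic-gradient martingale terms, and DP noise, the use of the truncated sharp concentration of~\cite{bakhshizadeh2023sharp} together with the truncated sub-Weibull Freedman inequality of~\cite{madden2020high} to produce the normal-region versus large-deviation-region dichotomy at $\lambda_{\mathrm{max}}$ (which is exactly where the constant $a$ and the $\log^{1/2}$ versus $\log^{\theta}$ thresholds enter), and the choice of $c$ as the maximum of a $T$-dependent term (forcing the truncated tail mass below $1/\sqrt{T}$) and a $\delta$-dependent term (keeping the leading coefficient positive in the large-gradient case). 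The only ingredient you leave implicit is the paper's explicit case split on $\Vert\nabla L_S(\mathbf{w}_t)\Vert_2 \lessgtr c/2$ (handled via the normalized-inner-product bound of Lemma~\ref{lemma:5}), which is what produces the $\min\{\Vert\cdot\Vert_2,\Vert\cdot\Vert_2^2\}$ on the left-hand side, but your step (1) gestures at this and the rest of the argument is the same.
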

\begin{remark}
    \normalfont{From Theorem~\ref{thm:DCDPSGD}, we can infer that when gradients fall into the first light body region, i.e. $0\leq\lambda\leq \lambda_{\mathrm{max}}$, our results no longer contain the heavy-tailed index $\theta$. This implies that in this region, the optimization performance of the algorithm is not directly affected by the heavy-tailed assumption and always converges according to the light-tailed sub-Gaussian rate. However, when gradients are classified into the second heavy-tailed region, i.e. $\lambda \geq \lambda_{\mathrm{max}}$, the behavior of convergence in this part will remain the same as that of classic DPSGD, becoming deteriorated with the increase of $\theta$. Specifically, the optimization performance in the first region is actually a transformation of the second region when $\theta=\frac{1}{2}$. In the first light body part, our guidance for the clipping threshold depends on the logarithmic factor $\log^{1/2}$, but in the second heavy-tailed region, our theoretical clipping threshold increases with the heavy-tailed index $\log^{\theta}$. For $\lambda_{\mathrm{max}}$, it is correlated with the population variance of the underlying distribution in Assumption~\ref{ass:sub-weibull}~\cite{bakhshizadeh2023sharp}, and we empirically use the trace of the second moment to approximate the total variance of the gradients in the subspace, where the approximation error has been bounded in Theorem~\ref{thm:subspace}. In summary, unlike existing optimization results on the heavy-tailed assumption that entirely rely on the heavy-tailed index~\cite{li2022high,madden2020high}, our \ourtech{} bounds are partially free from the dependence on heavy tails and can provide theoretical guidance on large clipping thresholds.}
\end{remark}
\vspace{-5pt}
\subsection{Uniform Bound for Heavy-tailed~\ourtech}
According to Algorithm~\ref{alg:algorithm1} and Theorem~\ref{thm:subspace}, we note that the premise of discriminative clipping relies on the classification of gradients by the subspace. 
However, in practice, this step incurs errors and losses, leading to a misalignment between Theorem~\ref{thm:DCDPSGD} and the algorithm. Considering that the accuracy of subspace identification holds with high probability at $1-\delta^{\prime}_m$, we need to re-analyze the convergence associated with partitioning regions in~\ourtech. Therefore, in this section, we will merge Theorems~\ref{thm:subspace} and~~\ref{thm:DCDPSGD} to derive the final bound for Algorithm~\ref{alg:algorithm1}.
\begin{theorem}[\textbf{Uniform Bound for \ourtech{}}] 
\label{thm:UniB}
Under Assumptions~\ref{ass:sub-weibull},~\ref{ass:smooth}~and~\ref{ass:G-bounded}, combining Theorem~\ref{thm:subspace} and Theorem~\ref{thm:DCDPSGD}, for any $\delta^{\prime}\in(0,1)$, with probability $1-\delta^{\prime}$ and $\mathcal{C}_{\mathrm{u}} := \sum^T_{t=1}\min\{\Vert\nabla \hat{L}_S(\mathbf{w}_t) \Vert^2_2, \Vert\nabla \hat{L}_S(\mathbf{w}_t) \Vert_2\}$, we have
\begin{align}
    \mathcal{C}_{\mathrm{u}} &\leq p*\mathbb{O}\left(\frac{d^{\frac{1}{4}} \log^{\frac{5}{4}}(T/\delta)\hat{\log}(T/\delta)\log^{2\theta}(\sqrt{T}) }{(n\epsilon)^\frac{1}{2}}\right) + (1-p)*\mathbb{O}\left(\frac{d^{\frac{1}{4}} \log^{\frac{5}{4}}(T/\delta)\log(\sqrt{T}) }{(n\epsilon)^\frac{1}{2}}\right), \nonumber
\end{align}
\end{theorem}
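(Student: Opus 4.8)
The plan is to rerun the $\beta$-smoothness descent argument that underlies Theorem~\ref{thm:DCDPSGD}, but now applied to a \emph{heterogeneous} mini-batch in which the $pB$ samples routed to the ``tail'' branch carry clipping threshold $c_1$ (the $\log^\theta$-scaled value from case~(ii)) and DP-noise scale $c_1\sigma_{\mathrm{dp}}$, while the $(1-p)B$ samples routed to the ``body'' branch carry $c_2$ (the $\log^{1/2}$-scaled value from case~(i)) and $c_2\sigma_{\mathrm{dp}}$. First I would condition on the event $\mathcal{E}_{\mathrm{id}}$ that the subspace identification in line~9 of Algorithm~\ref{alg:algorithm1} is correct, i.e. every sample sent to the $c_1$ branch genuinely lies in the large-deviation region $\lambda_{\mathrm{tr}}\ge\lambda_{\mathrm{max}}$ and every sample sent to the $c_2$ branch lies in the normal region $0\le\lambda_{\mathrm{tr}}\le\lambda_{\mathrm{max}}$. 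By Theorem~\ref{thm:subspace}, $\mathbb{P}(\mathcal{E}_{\mathrm{id}})\ge 1-\delta_m'$ with $\delta_m'=\delta_m+\delta$, and the identification error $4\log(2d/\delta_m)/k+\sigma_{\mathrm{tr}}\log^{1/2}(2/\delta)$ is of $\mathbb{O}(1/k)$ type and dominated by every other term, so it is absorbed into the $\mathbb{O}(\cdot)$.

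Next, on $\mathcal{E}_{\mathrm{id}}$ I would split the per-step descent into a ``body contribution'' and a ``tail contribution'' using linearity of the weighted average $\widetilde g_t$ in line~11; the two branches enter additively both in the clipping-bias term and in the injected-noise variance (the latter being exactly $\tfrac{1}{B}(p\,c_1^2+(1-p)\,c_2^2)\sigma_{\mathrm{dp}}^2 d$ per coordinate, which is already a $p$-weighted mixture of the two regimes). For the body contribution I invoke case~(i) of Theorem~\ref{thm:DCDPSGD}: since those gradients sit in the sub-Gaussian region and use $c_2$, the descent telescopes over $t=1,\dots,T$ to $\mathbb{O}\!\big(d^{1/4}\log^{5/4}(T/\delta)\log(\sqrt T)/(n\epsilon)^{1/2}\big)$ with probability $1-\delta$; for the tail contribution I invoke case~(ii), which uses $c_1$ and telescopes to $\mathbb{O}\!\big(d^{1/4}\log^{5/4}(T/\delta)\hat\log(T/\delta)\log^{2\theta}(\sqrt T)/(n\epsilon)^{1/2}\big)$ with probability $1-\delta$. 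Combining the two telescoped inequalities with their batch weights $p$ and $1-p$ yields the claimed convex combination for $\mathcal{C}_{\mathrm{u}}$, and a final union bound over the three bad events ($\mathcal{E}_{\mathrm{id}}^c$, plus one concentration failure per branch) gives $\delta'=\delta_m'+2\delta$; rescaling the constants hidden in $\mathbb{O}(\cdot)$ produces the statement at confidence $1-\delta'$.

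The main obstacle I anticipate is the descent analysis on a mixed batch. The proof of Theorem~\ref{thm:DCDPSGD} implicitly assumes all samples share one clipping threshold, so I must re-derive the truncated sharp heavy-tailed concentration of \cite{bakhshizadeh2023sharp} and the truncated sub-Weibull Freedman inequality of \cite{madden2020high} for a martingale-difference sequence whose increments are a $p{:}(1-p)$ mixture of the two tail-rate regimes, checking that the cross terms between the two branches do not degrade either rate. I also need to verify that the potentially misclassified mass — which has probability at most $\delta_m'$ but, if it occurs, could route a heavy-tailed gradient through the small threshold $c_2$ and incur an uncontrolled clip — contributes only through the already-charged event $\mathcal{E}_{\mathrm{id}}^c$ and never inside the bound itself; this is precisely why conditioning on $\mathcal{E}_{\mathrm{id}}$ before applying the two cases of Theorem~\ref{thm:DCDPSGD} is essential, and why the $\delta_m$ from Theorem~\ref{thm:subspace} must be propagated into the final $\delta'$.
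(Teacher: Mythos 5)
Your proposal follows essentially the same route as the paper: condition on the identification event guaranteed by Theorem~\ref{thm:subspace}, apply cases (i) and (ii) of Theorem~\ref{thm:DCDPSGD} to the body and tail portions respectively, form the $p$-weighted combination, and union-bound the failure probabilities into $\delta' = \delta'_m + \delta$. The mixed-batch descent re-derivation you flag as the main obstacle is in fact not carried out in the paper either --- its proof simply asserts the decomposition $\mathcal{C}_{\mathrm{u}} = (1-\delta'_m)p\,\mathcal{C}_{\mathrm{tail}}(c_1) + (1-\delta'_m)(1-p)\,\mathcal{C}_{\mathrm{body}}(c_2) + \delta'_m|\mathcal{C}_{\mathrm{tail}}(c_1)-\mathcal{C}_{\mathrm{body}}(c_2)|$ at the level of the final bounds and then absorbs the misclassification term, so your plan is, if anything, more demanding than the published argument.
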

where $\delta^{\prime}=\delta^{\prime}_m + \delta$, $\hat{\log}(T/\delta) = \log^{\max(0,\theta-1)}(T/\delta)$ and $p$ is ratio of heavy-tailed gradients.
\begin{remark}
    \normalfont{Theorem~\ref{thm:UniB} states that when the proportion of the tail region is $p$, the optimization performance of \ourtech{} with subspace identification is composed of $p$-weighted average bounds, where the heavy-tailed convergence rate merely accounts for a portion of $p$, with the rest made up of the light rate. Therefore, our bound minimizes the dependency on $\theta$ from $\hat{\log}(T/\delta)\log^{2\theta}(\sqrt{T})$ to $\log(\sqrt{T})$ with percentage $(1-p)*(1-\delta^{\prime})$, which is tighter than DPSGD. According to the statistical properties~\cite{wainwright2019high,vershynin2018high}, around 5\% -10\% data points will fall to the tail, that is, $p\in[0.05, 0.1]$. The probability term $\delta^{\prime}$ includes both $\delta^{\prime}_m$ and $\delta$, with $\delta^{\prime}_m$ being the error of subspace identification and $\delta$ being the convergence probability of~\ourtech.}
\end{remark}

\vspace{-0.2cm}
\section{Experiments}
\vspace{-0.075cm}
\subsection{Experimental Setup}
\vspace{-0.075cm}
We use four real-world datasets in the experiments, including MNIST, FMNIST, CIFAR10, and ImageNette~(a subset of ImageNet~\cite{deng2009imagenet}). We further utilize two heavy-tailed versions: namely CIFAR10-HT~\cite{cao2019learning}~(a heavy-tailed version of CIFAR10) and ImageNette-HT~(modified on \cite{park2021influence}), to evaluate the performance under heavy tail assumption. 
For MNIST and FMNIST, we use a two-layer CNN with batch size $B=128$. For CIFAR10 and CIFAR10-HT, we take $B=256$ and fine-tune on model SimCLRv2 pre-trained by unlabeled ImageNet and ResNeXt-29 pre-trained by CIFAR100~\cite{tramer2020differentially} with a linear classifier, respectively. For ImageNette and ImageNette-HT, we adopt the same settings~\cite{bu2024automatic} and ResNet9 without pre-train. 

We compare \ourtech{} with three differentially private baselines: DPSGD with Abadi's clipping~\cite{abadi2016deep}, Auto-S/NSGD~\cite{bu2024automatic,yang2022normalized}, DP-PSAC~\cite{xia2023differentially}, and a non-private baseline: non-DP~($\epsilon=\infty$).
In addition, we set $c_2=0.1$ and $\eta=0.1$ for MNIST and FMNIST. For CIFAR10 tasks, we set $c_2=0.01$ and $\eta=10$, and let $c_2=0.15$, $\eta=0.0001$ and $B=1000$ on ImageNette tasks. The large clipping threshold is set as $c_1=10*c_2$. We implement per-sample clipping in private SGD by BackPACK~\cite{dangel2020backpack} and allocate privacy budget fairly according to $\epsilon = \epsilon_{\mathrm{dp}} + \epsilon_{\mathrm{tr}}$. 

\subsection{Effectiveness Evaluation}
Table~\ref{Table:test_acc} summarizes the test accuracy of \ourtech{} and baselines. 
We can observe that, on normal datasets, \ourtech{} outperforms DPSGD, Auto-S, and DP-PSAC by up to 4.57\%, 5.42\%, and 4.99\%, respectively. While on heavy-tailed datasets, the corresponding improvements are 8.34\%, 9.72\%, and 9.55\%.
The reason is that 
our approach places greater emphasis on the clipping weight of heavy-tailed gradients, thereby preserving more information about heavy-tailed gradients and improving accuracy. 
Moreover, we demonstrate the trajectories of training accuracy in Figure~\ref{fig:3}, indicating that the optimization performance of~\ourtech~is superior to existing clipping mechanisms.

\renewcommand{\arraystretch}{1.1}
\begin{table}[htbp]
\vspace{-15pt}
\caption{Test accuracy of baselines and~\ourtech.}
\label{Table:test_acc}
\resizebox{\textwidth}{!}{
\begin{tabular}{c|c|ccccc}
\toprule
\multirow{2}{*}{Dataset} & \multirow{2}{*}{\begin{tabular}[c]{@{}c@{}}DP\\ ($\epsilon, \delta$)\end{tabular} }  & \multicolumn{5}{c}{Accuracy \%}                                                                                               \\ \cline{3-7} 
                         &                              & \multicolumn{1}{c|}{DPSGD~\cite{abadi2016deep}} & \multicolumn{1}{c|}{Auto-S~\cite{bu2024automatic,yang2022normalized}} & \multicolumn{1}{c|}{DP-PSAC~\cite{xia2023differentially}} & \multicolumn{1}{c|}{Ours} & non-DP \\ \midrule    
MNIST                    & (8,$1e^{-5}$)                              & \multicolumn{1}{c|}{97.65$\pm$0.09}      & \multicolumn{1}{c|}{97.55$\pm$0.16}            & \multicolumn{1}{c|}{97.67$\pm$0.06}     & \multicolumn{1}{c|}{\textbf{98.72$\pm$0.02}}       &\multicolumn{1}{c}{99.10$\pm$0.02}       \\ 
FMNIST                   & (8,$1e^{-5}$)                              & \multicolumn{1}{c|}{83.23$\pm$0.10}      & \multicolumn{1}{c|}{82.38$\pm$0.15}            & \multicolumn{1}{c|}{82.81$\pm$0.18}     & \multicolumn{1}{c|}{\textbf{87.80$\pm$0.47}}       &\multicolumn{1}{c}{89.95$\pm$0.32}     \\ 
CIFAR10                  & (8,$1e^{-5}$)                                & \multicolumn{1}{c|}{93.31$\pm$0.01}      & \multicolumn{1}{c|}{93.28$\pm$0.06}            & \multicolumn{1}{c|}{93.30$\pm$0.03}     & \multicolumn{1}{c|}{\textbf{94.05$\pm$0.11}}       &\multicolumn{1}{c}{94.62$\pm$0.03}      \\ 
CIFAR10                  & (4,$1e^{-5}$)                                & \multicolumn{1}{c|}{93.06$\pm$0.09}      & \multicolumn{1}{c|}{93.08$\pm$0.06}            & \multicolumn{1}{c|}{93.11$\pm$0.08}     & \multicolumn{1}{c|}{\textbf{93.42$\pm$0.14}}       &\multicolumn{1}{c}{94.62$\pm$0.03}      \\ 
ImageNette               & (8,$1e^{-4}$)                              & \multicolumn{1}{c|}{66.81$\pm$0.42}      & \multicolumn{1}{c|}{65.57$\pm$0.85}            & \multicolumn{1}{c|}{65.68$\pm$1.71}     & \multicolumn{1}{c|}{\textbf{69.29$\pm$0.19}}       &\multicolumn{1}{c}{71.67$\pm$0.49}       \\ 
CIFAR10-HT                               & (8,$1e^{-5}$)               & \multicolumn{1}{c|}{57.98$\pm$0.59}      & \multicolumn{1}{c|}{58.30$\pm$0.61}            & \multicolumn{1}{c|}{57.99$\pm$0.58}     & \multicolumn{1}{c|}{\textbf{62.57$\pm$1.03}}       &\multicolumn{1}{c}{71.74$\pm$0.65}       \\ 
ImageNette-HT                &(8,$1e^{-4}$)                              & \multicolumn{1}{c|}{25.36$\pm$1.71}      & \multicolumn{1}{c|}{23.98$\pm$2.00}            & \multicolumn{1}{c|}{24.15$\pm$1.99}     & \multicolumn{1}{c|}{\textbf{33.70$\pm$0.91}}       &\multicolumn{1}{c}{39.91$\pm$1.46}       \\ \bottomrule
\end{tabular}
}
\vspace{-5pt}
\end{table}

We then evaluate the effects of three parameters on test accuracy, including the subspace-$k$, the allocation of privacy budget $\epsilon$, and the heavy tail index sub-Weibull-$\theta$. The results are shown in Table~\ref{Table:ablation_study}.
We can see that the test accuracy increases with the value of $k$, which aligns with the theory that the trace error is related to $\mathbb{O}(1/k)$ and has a small impact on the results. 
For the allocation of privacy budget between subspace identification and privacy oracle, we find that allocation biased towards moderate or $\epsilon_{\mathrm{tr}}$ is better due to the high dimensionality of gradients. 
For subspace distribution, since the `HT' dataset is extracted through sub-Exponential distributions, the gradient exhibits a heavier tail phenomenon in networks. Therefore, the accuracy increases as $\theta$ becomes larger.

\renewcommand{\arraystretch}{1.1}
\begin{table}[htbp]
\vspace{-15pt}
\caption{Effects of parameters on test accuracy.}
\label{Table:ablation_study}
\resizebox{\textwidth}{!}{
\begin{tabular}{c|cccc|ccc|ccc}
\toprule
\multirow{2}{*}{Dataset} & \multicolumn{4}{c|}{Subspace-$k$}                                                               & \multicolumn{3}{c|}{$\epsilon_{\mathrm{tr}}$ + $\epsilon_{\mathrm{dp}}$}                                  & \multicolumn{3}{c}{sub-Weibull-$\theta$}                      \\ \cline{2-11} 
                         & \multicolumn{1}{c}{None} & \multicolumn{1}{c}{100} & \multicolumn{1}{c}{150} & 200 & \multicolumn{1}{c}{2+6} & \multicolumn{1}{c}{4+4} & \multicolumn{1}{c|}{6+2} & \multicolumn{1}{c}{1/2} & \multicolumn{1}{c}{1} & 2 \\ \midrule
CIFAR10                  & \multicolumn{1}{c}{93.07}            & \multicolumn{1}{c}{93.82}    & \multicolumn{1}{c}{93.96}    &94.05     & \multicolumn{1}{c}{93.92}  & \multicolumn{1}{c}{94.05}  & \multicolumn{1}{c|}{93.37}     & \multicolumn{1}{c}{93.88}    & \multicolumn{1}{c}{93.99}  &94.05   \\ 
CIFAR10-HT               & \multicolumn{1}{c}{57.27}            & \multicolumn{1}{c}{61.60}    & \multicolumn{1}{c}{62.48 }    &62.57    & \multicolumn{1}{c}{62.54}  & \multicolumn{1}{c}{62.57}  & \multicolumn{1}{c|}{60.07}      & \multicolumn{1}{c}{61.58}    & \multicolumn{1}{c}{62.28}  &62.57   \\ \bottomrule
\end{tabular}
}
\end{table}

\begin{figure}[h!]
    \vspace{-0.3cm}
    \centering
    \begin{minipage}[]{0.33\textwidth}
        \includegraphics[width=0.99\textwidth]{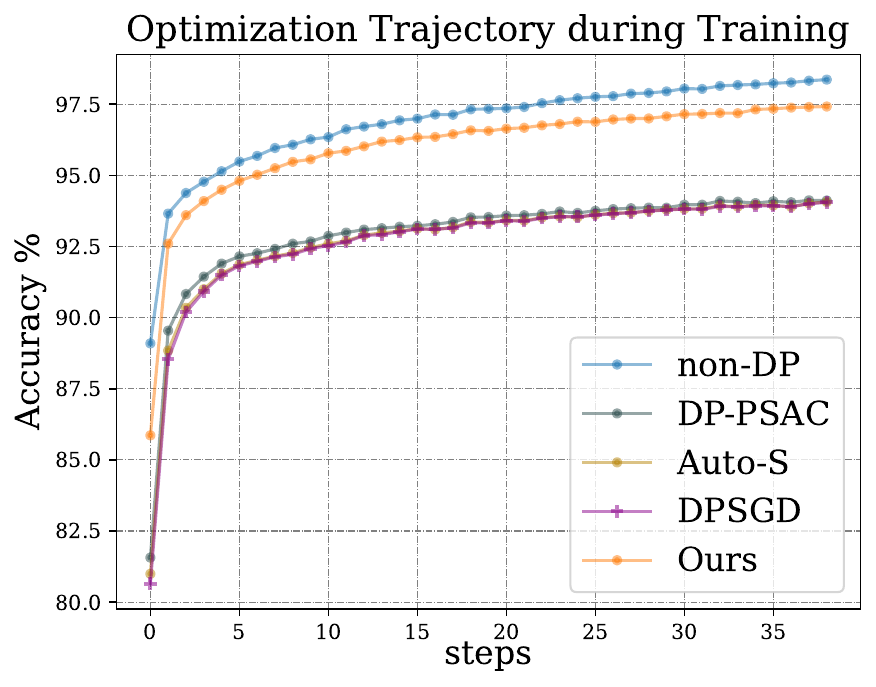}
        \caption{Optimization performance during CIFAR10 Training.}
    \label{fig:3}
    \end{minipage}
    \begin{minipage}[]{0.66\textwidth}
        \includegraphics[width=0.5\textwidth]{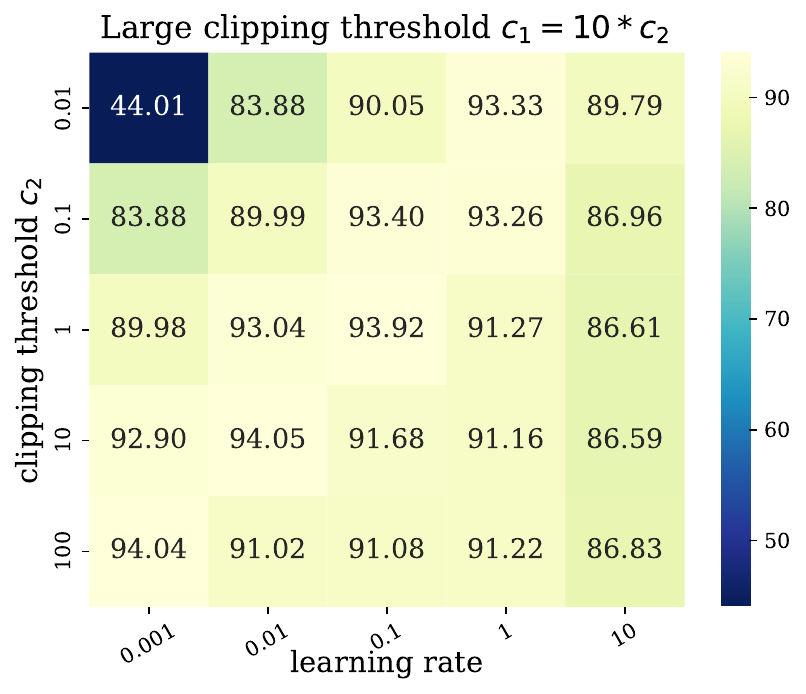}
        \includegraphics[width=0.5\textwidth]{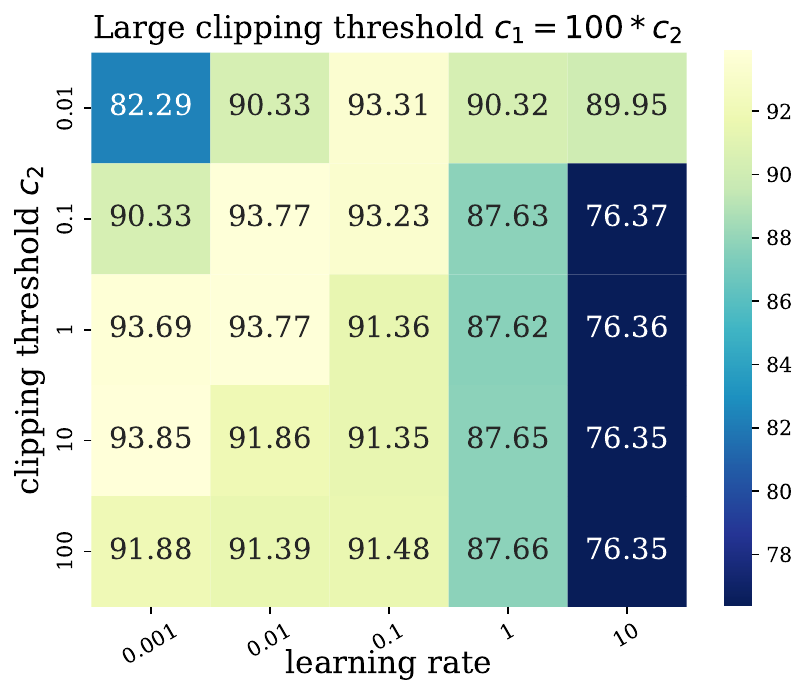}
        \caption{Test accuracy heatmap on CIFAR10 with $c_1$, $c_2$ and $\eta$.}
        \label{fig:4}
    \end{minipage}
\end{figure}

\vspace{-10pt}
\subsection{Guidance for Large Clipping Threshold}
Based on the theoretical analysis in~Theorem~\ref{thm:DCDPSGD} and experimental results in Figure~\ref{fig:4}, we can provide a recommended interval of clipping threshold for~\ourtech. Taking CIFAR-10 as an example, where $\delta = 1e^{-5}$ and $\eta/B=0.04$, we combine the empirical $\theta\approx 2$ with the theoretical guidance in~\cite{gurbuzbalaban2021heavy}. Consequently, we obtain $c_1=\mathbb{O}(\log^{\theta}(1/\delta))$ is
$\sqrt{125}$ times larger than $c_2=\mathbb{O}(\log^{1/2}(1/\delta))$, that is, $c_1 =\log^{3/2}(1/\delta)c_2 $ and then $c_1 \approx 10c_2$.


\section{Conclusion}
In this paper, we propose a novel approach \ourtech{} under the heavy-tailed assumption, 
which effectively reduces extra clipping loss in the heavy-tailed region. 
We rigorously analyze the high-probability bound of the classic heavy-tailed DPSGD under non-convex conditions and obtain results matching the expectation bounds. Furthermore, 
we characterize the weighted average optimization performance of~\ourtech{}. Extensive experiments on four real-world datasets validate that~\ourtech{} outperforms three state-of-the-art clipping mechanisms for heavy-tailed gradients. 


\clearpage

{
    \small
    \bibliographystyle{plain}
    \bibliography{neurips_2024}
}


\clearpage
\setcounter{page}{1}
\setcounter{theorem}{0}
\setcounter{theorem}{0}
\appendix

\section*{Appendix}
\section{Preliminaries}
A random variable $X$ called a sub-Weibull random variable with tail parameter $\theta$ and scale factor $K$, which is denoted by $X \sim subW(\theta,K)$. We next introduce the equivalent properties and theoretical tools of sub-Weibull distributions.
\subsection{Properties}
\begin{myDef}[Sub-Weibull Equivalent Properties~\cite{vladimirova2020sub}] Let $X$ be a random variable and $\theta \geq 0$, and there exists some constant $K_1,K_2,K_3,K_4$ depending on $\theta$. Then the following characterizations are equivalent:
\begin{enumerate}
    \item The tails of $X$ satisfy 
    \begin{align}
        \exists K_1>0~\text{such that}~\mathbb{P}(|X|>t)\leq 2\mathrm{exp}(-(t/K_1)^{\frac{1}{\theta}} ), \forall t>0. \nonumber
    \end{align}
    \item The moments of $X$ satisfy
    \begin{align}
        \exists K_2>0~\text{such that}~\Vert X\Vert_p \leq K_2p^{\theta}, \forall k \geq 1. \nonumber
    \end{align}
    \item The moment generating function~(MGF) of $|X|^{\frac{1}{\theta}}$ satisfies
    \begin{align}
        \exists K_3>0~\text{such that}~\mathbb{E}[\mathrm{exp}((\lambda|X|)^{\frac{1}{\theta}})] \leq \mathrm{exp}((\lambda K_3)^{\frac{1}{\theta}}), \forall \lambda\in(0,1/K_3). \nonumber
    \end{align}
    \item The MGF of $|X|^{\frac{1}{\theta}}$ is bounded at some point,
    \begin{align}
        \exists K_4>0~\text{such that}~\mathbb{E}[\mathrm{exp}((|X|/K_4)^{\frac{1}{\theta}})] \leq 2. \nonumber 
    \end{align}
\end{enumerate}
\label{myDef:1}
\end{myDef}

\begin{fact}
For any $V_k \in \mathbb{R}^{d\times k}$, $\mathrm{tr}(V^T_k\nabla \ell\nabla \ell^T V_k)= \Vert V^T_k \nabla \ell\Vert^2_2$. Moreover, if the condition $V^T_kV_k = \mathbb{I}$ holds, then $\Vert V^T_k \nabla \ell\Vert^2_2 = \Vert V_kV^T_k \nabla \ell\Vert^2_2$.
\label{fact:1}
\end{fact}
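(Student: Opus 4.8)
The plan is to prove both identities directly by manipulating the relevant quadratic forms, using only the cyclic invariance of the trace and the orthonormality hypothesis. Throughout, write $g := \nabla\ell \in \mathbb{R}^d$ for brevity, so that $V_k^T g \in \mathbb{R}^k$ and $g g^T \in \mathbb{R}^{d\times d}$ is the (rank-one) second-moment matrix.

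For the first identity, I would observe that $V_k^T g g^T V_k = (V_k^T g)(V_k^T g)^T$ is the outer product of the vector $V_k^T g$ with itself, hence a rank-one $k\times k$ matrix whose trace equals the squared Euclidean norm of that vector. Equivalently, and perhaps cleaner to state, I would invoke the cyclic property of the trace:
\begin{align}
    \mathrm{tr}(V_k^T g g^T V_k) = \mathrm{tr}(g^T V_k V_k^T g) = g^T V_k V_k^T g = (V_k^T g)^T (V_k^T g) = \Vert V_k^T g \Vert_2^2 , \nonumber
\end{align}
where the second equality uses that $g^T V_k V_k^T g$ is a scalar (a $1\times 1$ matrix equals its own trace). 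This establishes $\mathrm{tr}(V_k^T \nabla\ell\,\nabla\ell^T V_k) = \Vert V_k^T \nabla\ell\Vert_2^2$ with no assumption on $V_k$ beyond its dimensions.

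For the second identity, I would expand the squared norm on the left as a quadratic form and then substitute the orthonormality condition $V_k^T V_k = \mathbb{I}_k$. Concretely,
\begin{align}
    \Vert V_k V_k^T g \Vert_2^2 = (V_k V_k^T g)^T (V_k V_k^T g) = g^T V_k (V_k^T V_k) V_k^T g = g^T V_k V_k^T g = \Vert V_k^T g \Vert_2^2 , \nonumber
\end{align}
where the third equality replaces the inner factor $V_k^T V_k$ by $\mathbb{I}_k$. This yields $\Vert V_k^T \nabla\ell\Vert_2^2 = \Vert V_k V_k^T \nabla\ell\Vert_2^2$ as claimed.

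Since the statement is a pure linear-algebra identity, there is no real analytic obstacle; the only point requiring care is bookkeeping of the matrix shapes—in particular recognizing that the identity appearing in the hypothesis is the $k\times k$ matrix $\mathbb{I}_k$ (the Gram matrix of the orthonormal columns of $V_k$), not the $d\times d$ projector $V_k V_k^T$, which is generally not the identity when $k<d$. I would make this dimensional distinction explicit so the reader sees precisely where orthonormality is used and why the first identity needs no such assumption while the second does.
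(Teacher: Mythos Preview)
Your proof is correct. The paper states this as a fact without proof, and your argument via the cyclic property of the trace and direct expansion under the orthonormality hypothesis $V_k^T V_k = \mathbb{I}_k$ is exactly the standard justification one would supply.
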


\subsection{Theoretical tools}
Based on the properties of sub-Weibull variables, we have the following high probability bounds and concentration inequalities for heavier tails as theoretical tools. Besides, We define $l_p$ norm as $\Vert \Vert_p$, for any $p\geq 1$.
\begin{lemma}
\label{lemma:1} 
    Let a variable $X \sim subW(\theta,K)$, for any $\delta \in (0,1)$, then with probability $(1-\delta)$ we have 
    \begin{align}
        |X| \leq K\log^{\theta}{(2/\delta)}. \nonumber
    \end{align}   
\end{lemma}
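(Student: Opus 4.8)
The plan is to obtain the tail bound directly from the defining moment generating function inequality of the sub-Weibull family, i.e.\ property~4 of Definition~\ref{myDef:1}, which for $X\sim subW(\theta,K)$ reads $\mathbb{E}[\exp((|X|/K)^{1/\theta})]\leq 2$. Working from this characterization (rather than from the equivalent tail bound in property~1) is important because it keeps the \emph{same} constant $K$ and therefore matches the normalization used in Assumption~\ref{ass:sub-weibull}; the equivalence in Definition~\ref{myDef:1} only guarantees matching up to distribution-dependent constants, which would not give the clean statement claimed.

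First I would fix an arbitrary threshold $t>0$ and apply Markov's inequality to the nonnegative random variable $\exp((|X|/K)^{1/\theta})$. Since $s\mapsto s^{1/\theta}$ and $u\mapsto e^{u}$ are both strictly increasing on $[0,\infty)$, the event $\{|X|>t\}$ coincides with $\{\exp((|X|/K)^{1/\theta})>\exp((t/K)^{1/\theta})\}$, so
\begin{align}
\mathbb{P}(|X|>t)\;\leq\;\frac{\mathbb{E}\big[\exp((|X|/K)^{1/\theta})\big]}{\exp((t/K)^{1/\theta})}\;\leq\;2\exp\!\big(-(t/K)^{1/\theta}\big). \nonumber
\end{align}
Next I would choose $t$ so that the right-hand side equals $\delta$: solving $2\exp(-(t/K)^{1/\theta})=\delta$ gives $(t/K)^{1/\theta}=\log(2/\delta)$, hence $t=K\big(\log(2/\delta)\big)^{\theta}=K\log^{\theta}(2/\delta)$. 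This value is well defined and positive because $\delta\in(0,1)$ forces $2/\delta>2>1$, so $\log(2/\delta)>0$, and $\theta>1/2>0$.

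Substituting back yields $\mathbb{P}\big(|X|>K\log^{\theta}(2/\delta)\big)\leq\delta$, so with probability at least $1-\delta$ we have $|X|\leq K\log^{\theta}(2/\delta)$, which is the claim. There is no real obstacle here: the only points requiring a line of care are the monotonicity argument justifying the event rewriting (so that Markov applies to the exponential moment) and checking that the inversion $2e^{-(t/K)^{1/\theta}}=\delta$ is legitimate for all $\delta\in(0,1)$; both are immediate. This lemma will then serve as the elementary per-sample deviation bound reused throughout the convergence analysis (e.g.\ in controlling clipped and truncated gradient terms).
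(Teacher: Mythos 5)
Your proof is correct. The paper's own argument is a one-liner: it invokes the tail characterization (property~1 of Definition~\ref{myDef:1}) with $K_1=K$ and evaluates it at $t=K\log^{\theta}(2/\delta)$. You instead start from the MGF characterization (property~4), which is literally what Assumption~\ref{ass:sub-weibull} provides, and derive the tail bound $\mathbb{P}(|X|>t)\leq 2\exp(-(t/K)^{1/\theta})$ with the \emph{same} constant $K$ via Markov's inequality before inverting at the same threshold. The two routes are essentially the same inversion of an exponential tail, but your version is the more careful one: the equivalence between the characterizations in Definition~\ref{myDef:1} only preserves the scale parameter up to $\theta$-dependent constants, so the paper's step of simply setting $K_1=K$ is an implicit identification that your Markov argument makes rigorous at no extra cost. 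No gaps.
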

\begin{proof}
    Let $K_1 = K$ in Definition~\ref{myDef:1}, and take $t=K\log^{\theta}{(2/\delta)}$, then the inequality holds with probability $1-\delta$.
\end{proof}
\begin{lemma}[\cite{vladimirova2020sub,madden2020high}]
    Let $X_1,...,X_n$ are $subW(\theta,K_i)$ random variables with scale parameters $K_1,...K_n$. $\forall t \geq 0$, we have
    \begin{align}
        \mathbb{P}(|\sum^n_{i=1}X_i|\geq t) \leq 2\mathrm{exp}(-(\frac{t}{g(\theta)\sum^n_{i=1}K_i} )^{\frac{1}{\theta}} ) \nonumber
    \end{align}
    where $g(\theta) = (4e)^{\theta}$ for $\theta \leq 1$ and $g(\theta) = 2(2e\theta)^{\theta}$ for $\theta \geq 1$.
\label{lemma:2}
\end{lemma}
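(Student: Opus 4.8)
The plan is to show that $S_n := \sum_{i=1}^n X_i$ is again sub-Weibull of index $\theta$ with scale factor of order $\sum_{i=1}^n K_i$, and then read off the stated tail bound from the sub-Weibull tail characterization. The key is to route the argument through the $L_p$-moment characterization (property~2 of Definition~\ref{myDef:1}) rather than the MGF or tail forms, since $L_p$ norms add cleanly across summands via Minkowski's inequality, whereas tails and moment generating functions do not combine directly.

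First I would convert each summand's MGF bound (property~4 of Definition~\ref{myDef:1}) into a moment bound. Expanding $\mathbb{E}[\exp((|X_i|/K_i)^{1/\theta})] \leq 2$ termwise, the $m$-th term yields $\mathbb{E}[|X_i|^{m/\theta}] \leq 2\,m!\,K_i^{m/\theta}$; taking $m = \lceil p\theta\rceil$ and applying Stirling's bound to $m!$ gives $\Vert X_i\Vert_p \leq c_\theta\,K_i\,p^\theta$ for all $p \geq 1$, where $c_\theta$ collects the Stirling, ceiling, and $\theta^\theta$ factors. This is the quantitative form of the implication $(4)\Rightarrow(2)$ in Definition~\ref{myDef:1}.

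Next I would apply Minkowski's inequality, $\Vert S_n\Vert_p \leq \sum_{i=1}^n \Vert X_i\Vert_p \leq c_\theta\,p^\theta \sum_{i=1}^n K_i$, so that $S_n$ satisfies the moment characterization with scale $c_\theta \sum_i K_i$ and is therefore sub-Weibull of the same index. I would then convert back to a tail bound: by Markov's inequality, $\mathbb{P}(|S_n|\geq t) \leq (\Vert S_n\Vert_p/t)^p \leq (c_\theta\,p^\theta \sum_i K_i / t)^p$ for every $p\geq 1$, and minimizing the right-hand side over $p$ --- the optimizer scales like $(t/\sum_i K_i)^{1/\theta}$ --- collapses this to $2\exp(-(t/(g(\theta)\sum_i K_i))^{1/\theta})$, with the prefactor $2$ and the explicit $g(\theta)$ produced by the optimization together with the constant $c_\theta$ carried over from the moment step.

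The main obstacle is tracking the constants tightly enough to recover the stated $g(\theta)$ and, in particular, the split at $\theta = 1$. The split originates in the Stirling estimate of $m! = (\lceil p\theta\rceil)!$: when $\theta \leq 1$ the effective growth keeps $c_\theta$ within the $(4e)^\theta$ budget, whereas for $\theta \geq 1$ the non-integer exponent $p\theta$ forces an additional $\theta^\theta$-type factor, yielding $2(2e\theta)^\theta$ after the $p$-optimization. Verifying that the optimizing $p$ can be taken $\geq 1$ in the relevant range of $t$ (and treating small $t$ separately, where the bound is trivial since the right-hand side exceeds $1$) is the remaining bookkeeping; the Minkowski and Markov steps themselves are routine.
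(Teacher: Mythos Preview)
The paper does not supply its own proof of this lemma; it is quoted as a known tool from \cite{vladimirova2020sub,madden2020high} and used as a black box. So there is no ``paper's proof'' to compare against beyond the argument in those references.

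That said, your route is exactly the one those references take: pass from the MGF/tail characterization to the $L_p$-moment form, add moments via Minkowski, then return to a tail bound by Markov with $p$ optimized. This is the standard mechanism and it is correct. The only place that needs care is the constant tracking you flag yourself: the clean statement $\|X\|_p \le K_2\,p^{\theta}$ in Definition~\ref{myDef:1} hides a $\theta$-dependent ratio between $K_2$ and the MGF/tail constant $K_4$ (or $K_1$), and it is precisely this ratio, together with the $p$-optimization constant $e^{-1}\cdot(\cdot)$, that produces the stated $g(\theta)$ with the case split at $\theta=1$. If you want the exact $g(\theta)=(4e)^{\theta}$ for $\theta\le 1$ and $g(\theta)=2(2e\theta)^{\theta}$ for $\theta\ge 1$, you should quote the explicit constant in the equivalence $(4)\Leftrightarrow(2)$ from \cite{vladimirova2020sub} (or the analogous lemma in \cite{madden2020high}) rather than redo the Stirling bookkeeping from scratch, since slightly different Stirling bounds give slightly different but equally valid $g(\theta)$. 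Your handling of small $t$ (where the bound exceeds $1$ and is vacuous) and the check that the optimizing $p$ is $\ge 1$ are the right closing details.
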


\begin{lemma}[Sub-Weibull Freedman Inequality~\cite{madden2020high}] Let $(\Omega,\mathcal{F},(\mathcal{F}_i),\mathbb{P})$ be a filtered probability space. Let $(\xi_i)$ and $(K_i)$ be adapted to $(\mathcal{F}_i)$. Let $n \in \mathbb{N}$, then $\forall i \in [n]$, assume $K_{i-1} \geq 0$, $\mathbb{E}[\xi_i|\mathcal{F}_{i-1}]=0$, and $\mathbb{E}[\mathrm{exp}((|\xi_i|/K_{i-1} )^{\frac{1}{\theta}} )|\mathcal{F}_{i-1}] \leq 2$ where $\theta \geq 1/2.$ If $\theta > 1/2$, assume there exists $(m_i)$ such that $K_{i-1}\leq m_i$.

if $\theta = 1/2$, let $a=2$, then $\forall x,\beta \geq 0$, $\alpha > 0$, and $\lambda\in [0,\frac{1}{2\alpha}]$, 
\begin{align}
    \mathbb{P}\left( \bigcup_{k\in[n]}\Big\{\sum^k_{i=1}\xi_i\geq x~\mathrm{and}~\sum^k_{i=1}aK^2_{i-1}\leq \alpha\sum^k_{i=1}\xi_i+\beta \Big\} \right) \leq \mathrm{exp}(-\lambda x + 2\lambda^2\beta),
\end{align}
and $\forall x,\beta,\lambda\geq0$,
\begin{align}
    \mathbb{P}\left( \bigcup_{k\in[n]}\Big\{\sum^k_{i=1}\xi_i\geq x~\mathrm{and}~\sum^k_{i=1}aK^2_{i-1}\leq \beta \Big\} \right) \leq \mathrm{exp}(-\lambda x + \frac{\lambda^2}{2}\beta).
\end{align}
If $\theta\in(\frac{1}{2},1]$, let $a=(4\theta)^{2\theta}e^2$ and $b=(4\theta)^{\theta}e$. $\forall x,\beta\geq 0$, and $\alpha\geq b\mathrm{max}_{i\in[n]}m_i$, and $\lambda\in[0,\frac{1}{2\alpha}]$,
\begin{align}
    \mathbb{P}\left( \bigcup_{k\in[n]}\Big\{\sum^k_{i=1}\xi_i\geq x~\mathrm{and}~\sum^k_{i=1}aK^2_{i-1}\leq \alpha\sum^k_{i=1}\xi_i+\beta \Big\} \right) \leq \mathrm{exp}(-\lambda x + 2\lambda^2\beta),
\end{align}
and $\forall x,\beta\geq 0$, and $\lambda\in[0,\frac{1}{b\mathrm{max}_{i\in[n]}m_i}]$,
\begin{align}
    \mathbb{P}\left( \bigcup_{k\in[n]}\Big\{\sum^k_{i=1}\xi_i\geq x~\mathrm{and}~\sum^k_{i=1}aK^2_{i-1}\leq \beta \Big\} \right) \leq \mathrm{exp}(-\lambda x + \frac{\lambda^2}{2}\beta).
\end{align}
 If $\theta > 1$, let $\delta\in(0,1)$. Let $a=(2^{2\theta+1}+2 )\Gamma(2\theta+1) + 2^{3\theta}\Gamma(3\theta+1)/3$ and $b=2\log{n/\delta}^{\theta-1}$, where $\Gamma(x) = \int^{\infty}_0t^{x-1}e^{-t}dt$. $\forall x,\beta\geq0$, $\alpha\geq b\mathrm{max}_{i\in[n]}m_i$, and $\lambda\in[0,\frac{1}{2\alpha}]$,
 \begin{align}
     \mathbb{P}\left( \bigcup_{k\in[n]}\Big\{\sum^k_{i=1}\xi_i\geq x~\mathrm{and}~\sum^k_{i=1}aK^2_{i-1}\leq \alpha\sum^k_{i=1}\xi_i+\beta \Big\} \right) \leq \mathrm{exp}(-\lambda x + 2\lambda^2\beta)+2\delta,
 \end{align}
 and $\forall x,\beta\geq0$, and $\lambda\in[0,\frac{1}{b\mathrm{max}_{i\in[n]}m_i}]$,
 \begin{align}
    \mathbb{P}\left( \bigcup_{k\in[n]}\Big\{\sum^k_{i=1}\xi_i\geq x~\mathrm{and}~\sum^k_{i=1}aK^2_{i-1}\leq \beta \Big\} \right) \leq \mathrm{exp}(-\lambda x + \frac{\lambda^2}{2}\beta)+2\delta.
\end{align}
\label{lemma:3}
\end{lemma}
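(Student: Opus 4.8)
The plan is to prove this generalized Freedman inequality by the classical exponential-supermartingale (Ville) method, adapted to sub-Weibull increments. The three ingredients are: (i) a conditional moment generating function (MGF) bound for each increment, yielding a sub-Gaussian-type estimate $\mathbb{E}[\exp(\lambda\xi_i)\mid\mathcal{F}_{i-1}]\leq\exp(\tfrac{\lambda^2}{2}aK_{i-1}^2)$ valid on the stated range of $\lambda$; (ii) construction of a nonnegative supermartingale from these one-step bounds; and (iii) a maximal inequality to pass from a fixed index $k$ to the union over $k\in[n]$ that appears in the statement. The stated constant $a$ and the $\lambda$-range are exactly what this program forces in each regime of $\theta$.

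First I would establish the conditional MGF bound, which is the analytic core. By hypothesis $\mathbb{E}[\exp((|\xi_i|/K_{i-1})^{1/\theta})\mid\mathcal{F}_{i-1}]\leq 2$, so conditioned on $\mathcal{F}_{i-1}$ each $\xi_i$ is $\mathrm{subW}(\theta,K_{i-1})$ and, by the moment characterization in Definition~\ref{myDef:1}, satisfies $\mathbb{E}[|\xi_i|^p\mid\mathcal{F}_{i-1}]\leq (cK_{i-1})^p p^{\theta p}$. Expanding $\exp(\lambda\xi_i)=\sum_{j\geq0}\lambda^j\xi_i^j/j!$, deleting the linear term via $\mathbb{E}[\xi_i\mid\mathcal{F}_{i-1}]=0$, and bounding the surviving moments term-by-term reduces everything to controlling $\sum_{j\geq2}(\lambda cK_{i-1}e)^j j^{(\theta-1)j}$. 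The three cases of $a$ correspond to how this series behaves: for $\theta=\tfrac12$ the factor $j^{(\theta-1)j}$ decays super-exponentially and the series sums for every $\lambda$ (pure sub-Gaussian, $a=2$); for $\theta\in(\tfrac12,1]$ the MGF is still finite but the clean bound $\exp(\tfrac{\lambda^2}{2}aK^2)$ holds only for $\lambda\leq1/(b\,m_i)$ with $b=(4\theta)^{\theta}e$, which is exactly the stated range, with $a=(4\theta)^{2\theta}e^2$; for $\theta>1$ the factorial can no longer tame $j^{\theta j}$, the series diverges for any $\lambda>0$, and the MGF is infinite, so the bound fails without truncation.

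To handle $\theta>1$ I would truncate each increment at a level $\tau\asymp K_{i-1}(\log(n/\delta))^{\theta}$, chosen via Lemma~\ref{lemma:1} so that $\mathbb{P}(|\xi_i|>\tau)\leq\delta/n$; a union bound over $i\in[n]$ contributes the additive $2\delta$ in the $\theta>1$ conclusions, and on the complementary event the truncated increments admit the same sub-Gaussian MGF estimate on the range $\lambda\in[0,1/(b\max_i m_i)]$ with $b=2(\log(n/\delta))^{\theta-1}$ and $a$ given by the stated Gamma-function expression, obtained from the tail integral after truncation. With the one-step bound in hand I would set $V_k:=\sum_{i=1}^k aK_{i-1}^2$, $S_k:=\sum_{i=1}^k\xi_i$, and define $Z_k:=\exp(\lambda S_k-\tfrac{\lambda^2}{2}V_k)$; the tower rule together with the conditional MGF bound gives $\mathbb{E}[Z_k\mid\mathcal{F}_{k-1}]\leq Z_{k-1}$, so $(Z_k)$ is a nonnegative supermartingale with $\mathbb{E}[Z_0]=1$.

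Finally I would convert the supermartingale into the uniform-in-$k$ tail bounds by Ville's maximal inequality $\mathbb{P}(\sup_k Z_k\geq t)\leq 1/t$. For the simple event $\{S_k\geq x,\ V_k\leq\beta\}$ one has $Z_k\geq\exp(\lambda x-\tfrac{\lambda^2}{2}\beta)$ on that event, so the union is contained in $\{\sup_k Z_k\geq\exp(\lambda x-\tfrac{\lambda^2}{2}\beta)\}$ and inherits the bound $\exp(-\lambda x+\tfrac{\lambda^2}{2}\beta)$. For the self-bounded event $\{S_k\geq x,\ V_k\leq\alpha S_k+\beta\}$ I would substitute the constraint into $Z_k$ and use $\lambda\leq1/(2\alpha)$ (compatible with the MGF range via $\alpha\geq b\max_i m_i$) to absorb the $\alpha S_k$ term, keeping the coefficient of $S_k$ a positive multiple of $\lambda$ and producing the residual $2\lambda^2\beta$ exponent. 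The main obstacle is the $\theta>1$ regime: choosing the truncation level so that the truncation error is genuinely negligible while the post-truncation constant $a$ stays sharp (matching the Gamma-function expression) and simultaneously yielding the correct range $b=2(\log(n/\delta))^{\theta-1}$; reconciling these two is the delicate bookkeeping that separates this inequality from a routine sub-Gaussian Freedman bound.
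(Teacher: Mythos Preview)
The paper does not prove this lemma at all: it is stated as a preliminary tool and attributed directly to \cite{madden2020high}, with no proof or even a sketch provided in the paper's own appendix. So there is nothing to compare against here.

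That said, your outline is the standard route to this kind of result and matches the approach in the original reference: bound the conditional MGF of each increment case-by-case in $\theta$ (with truncation for $\theta>1$), assemble the exponential supermartingale $Z_k=\exp(\lambda S_k-\tfrac{\lambda^2}{2}V_k)$, and apply Ville's maximal inequality to pass to the union over $k$. One small point worth tightening: when $\theta>1$ and you truncate at level $\tau\asymp K_{i-1}(\log(n/\delta))^{\theta}$, the truncated increment no longer has conditional mean zero, so you must either re-center (and track the centering error) or show that the bias is absorbed into the constants; this is routine but should be stated explicitly since the supermartingale argument relies on $\mathbb{E}[\xi_i\mid\mathcal{F}_{i-1}]=0$ to kill the linear term in the Taylor expansion.
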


\begin{lemma}[\cite{zhang2005data}]
Let $z_1,...,z_n$ be a sequence of randoms variables such that $z_k$ may depend the previous variables $z_1,...,z_{k-1}$ for all $k=1,...,n$. Consider a sequence of functionals $\xi_k(z_1,...,z_k)$, $k=1,...,n$. Let $\sigma^2_n=\sum^n_{k=1}\mathbb{E}_{z_k}[(\xi_k-\mathbb{E}_{z_k}[\xi_k])^2]$ be the conditional variance. Assume $|\xi_k - \mathbb{E}_{z_k}[\xi_k]|\leq b$ for each $k$. Let $\rho \in (0,1]$ and $\delta\in(0,1)$. With probability at least $1-\delta$ we have
\begin{align}
    \sum^n_{k=1}\xi_k-\sum^n_{k=1}\mathbb{E}_{z_k}[\xi_k] \leq \frac{\rho\sigma^2_n}{b} + \frac{b\log{\frac{1}{\delta}}}{\rho}.
\end{align}
\label{lemma:4}
\end{lemma}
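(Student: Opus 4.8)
The plan is to establish this as a one-sided Freedman-type martingale concentration bound through the exponential supermartingale method. First I would introduce the filtration $\mathcal{F}_k=\sigma(z_1,\dots,z_k)$ and define the centered increments $D_k:=\xi_k-\mathbb{E}_{z_k}[\xi_k]=\xi_k-\mathbb{E}[\xi_k\mid\mathcal{F}_{k-1}]$. By construction $(D_k)$ is a martingale difference sequence, i.e. $\mathbb{E}[D_k\mid\mathcal{F}_{k-1}]=0$ and $|D_k|\le b$, and its predictable quadratic variation is exactly $\sigma_n^2=\sum_{k=1}^n\mathbb{E}[D_k^2\mid\mathcal{F}_{k-1}]$. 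Writing $S_n:=\sum_{k=1}^n D_k=\sum_{k=1}^n\xi_k-\sum_{k=1}^n\mathbb{E}_{z_k}[\xi_k]$, the claim reduces to a one-sided upper tail bound on $S_n$ expressed in terms of its own (random) quadratic variation $\sigma_n^2$.

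The key estimate is a conditional moment-generating-function bound. Using the elementary fact that $x\mapsto(e^x-1-x)/x^2$ is nondecreasing, any increment with $|D_k|\le b$ satisfies $e^{\lambda D_k}-1-\lambda D_k\le g(\lambda)D_k^2$ with $g(\lambda):=(e^{\lambda b}-1-\lambda b)/b^2$. Taking conditional expectation and cancelling the linear term via $\mathbb{E}[D_k\mid\mathcal{F}_{k-1}]=0$ yields $\mathbb{E}[e^{\lambda D_k}\mid\mathcal{F}_{k-1}]\le\exp(g(\lambda)\,\mathbb{E}[D_k^2\mid\mathcal{F}_{k-1}])$ for every $\lambda\ge0$.

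Next I would build the process $M_k:=\exp(\lambda S_k-g(\lambda)\sum_{j=1}^k\mathbb{E}[D_j^2\mid\mathcal{F}_{j-1}])$ with $M_0=1$. The conditional MGF bound gives $\mathbb{E}[M_k\mid\mathcal{F}_{k-1}]\le M_{k-1}$, so $(M_k)$ is a supermartingale and $\mathbb{E}[M_n]\le1$. Markov's inequality then produces, for every $s\ge0$, the bound $\mathbb{P}(\lambda S_n-g(\lambda)\sigma_n^2\ge s)\le e^{-s}$; choosing $s=\log(1/\delta)$ shows that with probability at least $1-\delta$ one has $S_n\le\frac{g(\lambda)}{\lambda}\sigma_n^2+\frac{\log(1/\delta)}{\lambda}$.

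It remains to optimize the free parameter. Setting $\lambda=\rho/b$ with $\rho\in(0,1]$ makes the second term exactly $\frac{b\log(1/\delta)}{\rho}$ and turns the coefficient of $\sigma_n^2$ into $\frac{e^\rho-1-\rho}{\rho b}$. The elementary inequality $e^\rho-1-\rho\le\rho^2$ valid on $[0,1]$ (checked by verifying that $\rho^2-e^\rho+1+\rho$ vanishes and has nonnegative derivative at the origin and stays increasing on $[0,1]$) bounds this coefficient by $\frac{\rho}{b}$, giving precisely $S_n\le\frac{\rho\sigma_n^2}{b}+\frac{b\log(1/\delta)}{\rho}$, which is the assertion. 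The main obstacle --- and the reason an ordinary independent-sum Bernstein argument is insufficient --- is that $\sigma_n^2$ is itself a random variable (the predictable quadratic variation) rather than a fixed variance proxy; the exponential supermartingale construction is exactly the device that lets the random $\sigma_n^2$ appear inside the high-probability event rather than forcing a separate bound on it. A minor point is that the inequality is one-sided, so no absolute value or union step is needed, and the restriction $\rho\le1$ is precisely what the final elementary inequality requires.
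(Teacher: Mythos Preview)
Your proof is correct; this is the standard exponential-supermartingale (Freedman-type) derivation and every step checks out, including the elementary inequality $e^{\rho}-1-\rho\le\rho^{2}$ on $[0,1]$. The paper itself does not prove this lemma at all---it is simply quoted as a known result from \cite{zhang2005data}---so there is nothing to compare against.
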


\begin{lemma}[\cite{cutkosky2020momentum}]
For any vector $\mathbf{g}\in\mathbb{R}^d$, $\langle \mathbf{g}/\Vert \mathbf{g}\Vert_2,\nabla L_S(\mathbf{w})\rangle \geq \frac{\Vert\nabla L_S(\mathbf{w})\Vert_2}{3} - \frac{8\Vert \mathbf{g}-L_S(\mathbf{w})\Vert_2}{3}$.
\label{lemma:5}
\end{lemma}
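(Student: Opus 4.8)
The plan is to prove the inequality by a dichotomy on the size of the gradient error $\epsilon := \Vert \mathbf{g} - \nabla L_S(\mathbf{w})\Vert_2$ relative to the true gradient norm. Write $\mathbf{v} := \nabla L_S(\mathbf{w})$ for brevity, so that the claim becomes $\langle \mathbf{g}/\Vert\mathbf{g}\Vert_2, \mathbf{v}\rangle \geq \Vert\mathbf{v}\Vert_2/3 - 8\epsilon/3$. The threshold I would use to split the argument is $\epsilon = \Vert\mathbf{v}\Vert_2/2$, chosen so that in the ``large error'' regime the right-hand side is already negative enough that the trivial Cauchy--Schwarz bound on the left suffices, while in the ``small error'' regime the normalization by $\Vert\mathbf{g}\Vert_2$ is well conditioned.

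First, the large-error case $\epsilon \geq \Vert\mathbf{v}\Vert_2/2$. Here I bound the left-hand side from below by Cauchy--Schwarz: since $\Vert \mathbf{g}/\Vert\mathbf{g}\Vert_2\Vert_2 = 1$, we have $\langle \mathbf{g}/\Vert\mathbf{g}\Vert_2, \mathbf{v}\rangle \geq -\Vert\mathbf{v}\Vert_2$. On the other side, substituting $\epsilon \geq \Vert\mathbf{v}\Vert_2/2$ into the right-hand side gives $\Vert\mathbf{v}\Vert_2/3 - 8\epsilon/3 \leq \Vert\mathbf{v}\Vert_2/3 - 4\Vert\mathbf{v}\Vert_2/3 = -\Vert\mathbf{v}\Vert_2$, so the inequality holds. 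This case also subsumes the degenerate situation $\mathbf{v}=0$, where $\epsilon = \Vert\mathbf{g}\Vert_2 \geq 0 = \Vert\mathbf{v}\Vert_2/2$.

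Second, the small-error case $\epsilon < \Vert\mathbf{v}\Vert_2/2$, where in particular $\mathbf{v}\neq 0$ and, by the triangle inequality, $\Vert\mathbf{g}\Vert_2 \geq \Vert\mathbf{v}\Vert_2 - \epsilon > 0$, so dividing by $\Vert\mathbf{g}\Vert_2$ is legitimate. I would decompose $\langle \mathbf{g}, \mathbf{v}\rangle = \Vert\mathbf{v}\Vert_2^2 + \langle \mathbf{g}-\mathbf{v}, \mathbf{v}\rangle \geq \Vert\mathbf{v}\Vert_2^2 - \epsilon\Vert\mathbf{v}\Vert_2 = \Vert\mathbf{v}\Vert_2(\Vert\mathbf{v}\Vert_2 - \epsilon)$, which is strictly positive in this regime. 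Since the numerator is positive, I lower-bound the quotient by upper-bounding the denominator via $\Vert\mathbf{g}\Vert_2 \leq \Vert\mathbf{v}\Vert_2 + \epsilon$, obtaining $\langle \mathbf{g}/\Vert\mathbf{g}\Vert_2, \mathbf{v}\rangle \geq \Vert\mathbf{v}\Vert_2(\Vert\mathbf{v}\Vert_2 - \epsilon)/(\Vert\mathbf{v}\Vert_2 + \epsilon)$.

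Finally, writing $r := \epsilon/\Vert\mathbf{v}\Vert_2 \in [0, 1/2)$ and dividing through by $\Vert\mathbf{v}\Vert_2$, the target reduces to the scalar inequality $(1-r)/(1+r) \geq (1 - 8r)/3$. Clearing the positive denominators turns this into $3(1-r) \geq (1-8r)(1+r)$, i.e.\ $2 + 4r + 8r^2 \geq 0$, which holds for every real $r$. I expect no genuine obstacle here; the only care points are the choice of the split threshold (so the two regimes glue together), keeping the sign of the numerator under control before inverting the denominator, and noting the harmless typo $L_S \mapsto \nabla L_S$ inside the error term. Concatenating the two cases establishes the bound for all $\mathbf{g}\in\mathbb{R}^d$.
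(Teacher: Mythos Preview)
Your argument is correct. The case split at $\epsilon = \Vert\mathbf{v}\Vert_2/2$ works cleanly: in the large-error regime Cauchy--Schwarz combined with the negativity of the right-hand side suffices, and in the small-error regime your chain of inequalities and the final quadratic $2+4r+8r^2\ge 0$ are all valid. The only point left implicit is that $\mathbf{g}\neq 0$ is required for $\mathbf{g}/\Vert\mathbf{g}\Vert_2$ to make sense, but this is already assumed by the statement itself.

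As for comparison: the paper does not prove this lemma at all --- it simply quotes it from \cite{cutkosky2020momentum} as a black-box tool and applies it in the analysis of the case $\Vert\nabla L_S(\mathbf{w}_t)\Vert_2 \ge c/2$. Your self-contained derivation is essentially the same two-case argument that appears in the original reference, so there is no meaningful methodological difference to discuss.
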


\begin{lemma}[\cite{madden2020high}]
If $X \sim subW(\theta,K)$, then $\mathbb{E}[|X^p |] \leq 2\Gamma(p\theta +1)K^p~\forall p>0$. In particular, $\mathbb{E}[X^2]\leq 2\Gamma(2\theta +1)K^2$.
\label{lemma:6}
\end{lemma}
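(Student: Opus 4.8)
The plan is to pass directly from the tail bound that defines a sub-Weibull random variable to its moments via the standard layer-cake identity. By the equivalent characterization in Definition~\ref{myDef:1} (property~1), with $K_1 = K$, we have the tail bound $\mathbb{P}(|X| > t) \leq 2\exp(-(t/K)^{1/\theta})$ for all $t > 0$. The first step is to write the $p$-th absolute moment using $|X|^p = \int_0^{|X|} p\,t^{p-1}\,dt$ and Fubini's theorem:
\begin{align}
    \mathbb{E}[|X|^p] = \int_0^\infty p\, t^{p-1}\, \mathbb{P}(|X| > t)\, dt .
\end{align}

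The second step substitutes the sub-Weibull tail and reduces the resulting integral to a Gamma function. Inserting the tail bound gives
\begin{align}
    \mathbb{E}[|X|^p] \leq 2p \int_0^\infty t^{p-1} \exp\big(-(t/K)^{1/\theta}\big)\, dt .
\end{align}
I would then apply the change of variables $u = (t/K)^{1/\theta}$, so that $t = K u^{\theta}$ and $dt = K\theta\, u^{\theta-1}\, du$, with the limits mapping $0 \mapsto 0$ and $\infty \mapsto \infty$. Collecting the powers of $u$, the exponents combine as $\theta(p-1) + (\theta - 1) = \theta p - 1$, so
\begin{align}
    \mathbb{E}[|X|^p] \leq 2p\theta K^p \int_0^\infty u^{\theta p - 1} e^{-u}\, du = 2p\theta K^p\, \Gamma(\theta p) .
\end{align}

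The final step invokes the functional equation $\Gamma(x+1) = x\,\Gamma(x)$ with $x = \theta p$, giving $p\theta\,\Gamma(\theta p) = \Gamma(\theta p + 1)$ and hence the claimed bound $\mathbb{E}[|X|^p] \leq 2\Gamma(p\theta+1)K^p$. Setting $p = 2$ and using $\mathbb{E}[X^2] = \mathbb{E}[|X|^2]$ recovers the stated special case $\mathbb{E}[X^2] \leq 2\Gamma(2\theta+1)K^2$. This argument is essentially routine, so there is no substantial obstacle; the only points requiring care are bookkeeping the powers of $u$ through the substitution so that the integrand reduces exactly to $u^{\theta p - 1}e^{-u}$, and checking integrability at $u = 0$, which holds because $\theta p > 0$ ensures the exponent $\theta p - 1 > -1$.
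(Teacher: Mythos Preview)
Your argument is correct and is the standard derivation of sub-Weibull moment bounds from the tail characterization via the layer-cake formula and a Gamma substitution. The paper does not supply its own proof of this lemma; it is quoted as a known result from the cited reference, so there is nothing further to compare.
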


\begin{lemma}[\cite{bakhshizadeh2023sharp}]
Suppose $X_1,...,X_m\overset{d}{=} X$ are independent and identically distributed random variables whose right tails are captured by an increasing and continuous function $I:\mathbb{R}\rightarrow\mathbb{R}^{\geq0}$ with the property $I(t) = \mathbb{O}(t)$ as $t\rightarrow \infty$. Let $X^L = X\mathbb{I}(X\leq L)$, $S_m = \sum^m_{i=1}X_i$ and $Z^L := X^L - \mathbb{E}[X]$. Define $t_{\mathrm{max}}(\mu):=\mathrm{sup}\{t\geq0: t\leq\mu v(mt,\mu)\frac{I(mt)}{mt}\}$, then \\
\begin{align}
    \mathbb{P}(S_m-\mathbb{E}[S_m] > mt) \leq \left\{ \begin{array}{ll} \mathrm{exp}(-c_t\mu I(mt)) + m\mathrm{exp}(-I(mt)) , & \text{if } t\geq t_{\mathrm{max}}(\mu), \\ \\ \displaystyle\mathrm{exp}(-\frac{mt^2}{2v(mt_{\mathrm{max}}(\mu),\mu)}) + m\mathrm{exp}(-\frac{mt^2_{\mathrm{max}}(\mu)}{\mu v(mt_{\mathrm{max}}(\mu),\mu)}), & \text{if } 0\leq t\leq t_{\mathrm{max}}(\mu), \end{array} \right.
\end{align}
where $c_t = 1- \frac{\mu v(mt,\mu)I(mt)}{2mt^2} $ and $v(L,\mu)=\mathbb{E}\big[(Z^L)^2\mathbb{I}(Z^L\leq0) + (Z^L)^2~\mathrm{exp}(\mu\frac{I(L)}{L}Z^L)\mathbb{I}(Z^L>0)\big], \forall\beta\in(0,1]$.
\label{lemma:7}
\end{lemma}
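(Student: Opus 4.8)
The plan is to treat Theorem~\ref{thm:UniB} as a conditional composition of the identification guarantee (Theorem~\ref{thm:subspace}) and the two-regime convergence rates (Theorem~\ref{thm:DCDPSGD}). First I would define the ``correct identification'' event $\mathcal{E}_{\mathrm{id}}$ under which the top-$p$ perturbed traces $\widetilde{\lambda}_i$ computed in line~9 of Algorithm~\ref{alg:algorithm1} separate exactly the gradients with $\lambda_{\mathrm{tr}} \geq \lambda_{\mathrm{max}}$ (the heavy-tail region) from those with $0 \leq \lambda_{\mathrm{tr}} \leq \lambda_{\mathrm{max}}$ (the light-body region). Theorem~\ref{thm:subspace} bounds the trace approximation error $|\lambda_{\mathrm{tr}} - \hat{\lambda}_{\mathrm{tr}} + \zeta_{\mathrm{tr}}|$ by a quantity that vanishes as $k$ grows, so $\mathbb{P}(\mathcal{E}_{\mathrm{id}}) \geq 1 - \delta^{\prime}_m$, i.e.\ the partition used by the algorithm coincides with the regime partition assumed in Theorem~\ref{thm:DCDPSGD}.

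Next, I would condition on $\mathcal{E}_{\mathrm{id}}$ and exploit the fact that the update $\widetilde{g}_t$ in line~11 is the $B$-averaged sum of a tail block of size $pB$ clipped at $c_1$ and a body block of size $(1-p)B$ clipped at $c_2$. Running the descent analysis under Assumption~\ref{ass:smooth}, the per-step decrease of $\hat{L}_S$ splits additively into a tail contribution weighted by $p$ and a body contribution weighted by $1-p$. For the tail block I would invoke the machinery behind Theorem~\ref{thm:DCDPSGD}(ii) --- the truncated sharp-concentration (Lemma~\ref{lemma:7}) and the sub-Weibull Freedman inequality (Lemma~\ref{lemma:3}) with $c=\max(4^{\theta}2K\log^{\theta}(\sqrt{T}),4^{\theta}33K\log^{\theta}(2/\delta))$ --- to obtain the heavy rate $\mathbb{O}(d^{1/4}\log^{5/4}(T/\delta)\hat{\log}(T/\delta)\log^{2\theta}(\sqrt{T})/(n\epsilon)^{1/2})$; for the body block I would run the $\theta=1/2$ specialization behind Theorem~\ref{thm:DCDPSGD}(i) with the smaller threshold to get the light rate $\mathbb{O}(d^{1/4}\log^{5/4}(T/\delta)\log(\sqrt{T})/(n\epsilon)^{1/2})$. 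Telescoping over $t=1,\dots,T$ and collecting the two weighted contributions yields $\mathcal{C}_{\mathrm{u}}$ as the $p$-weighted average of the two rates.

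Finally, I would discharge the conditioning by a union bound: the convergence argument fails with probability at most $\delta$ and the identification event fails with probability at most $\delta^{\prime}_m$, so the stated bound holds with probability at least $1-\delta^{\prime}$ where $\delta^{\prime}=\delta+\delta^{\prime}_m$, matching the theorem and the reported $p\in[0.05,0.1]$ regime of the remark.

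The main obstacle I expect is the coupling between the two blocks inside a single mini-batch: the proofs of Theorem~\ref{thm:DCDPSGD} are stated for homogeneous batches living in one regime, whereas here each step mixes both. I would need to verify that conditioning on $\mathcal{E}_{\mathrm{id}}$ does not bias the centered martingale differences $\xi_i$ fed into Lemma~\ref{lemma:3} (so that $\mathbb{E}[\xi_i \mid \mathcal{F}_{i-1}]=0$ still holds within each block), and that the weights $p$ and $1-p$ factor cleanly through the truncation thresholds $t_{\mathrm{max}}$ and the clipping-bias terms rather than entangling the two concentration estimates. Confirming that the thresholds $c_2$ and $c_1$ remain valid for the sub-batches of sizes $(1-p)B$ and $pB$, and that the separation survives the trace perturbation $\zeta_{\mathrm{tr}}$, is the crux of the argument.
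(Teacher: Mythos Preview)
Your proposal addresses the wrong statement. The statement you were asked to prove is Lemma~\ref{lemma:7}, the sharp heavy-tailed concentration inequality quoted verbatim from \cite{bakhshizadeh2023sharp}: a tail bound for $\mathbb{P}(S_m-\mathbb{E}[S_m]>mt)$ in terms of the rate function $I(\cdot)$, the truncated-variance proxy $v(L,\mu)$, and the threshold $t_{\max}(\mu)$. What you have written instead is a proof sketch for Theorem~\ref{thm:UniB} (the uniform bound for \ourtech{}), combining Theorems~\ref{thm:subspace} and~\ref{thm:DCDPSGD} via an identification event and a $p$-weighted average. That is a different theorem entirely; none of the objects in your proposal --- $\mathcal{E}_{\mathrm{id}}$, the top-$p$ traces, the clipping thresholds $c_1,c_2$, the per-block descent decomposition --- appear in or are relevant to Lemma~\ref{lemma:7}.

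In the paper, Lemma~\ref{lemma:7} is not proved at all: it is a cited external result used as a black-box tool (together with Lemma~\ref{lemma:8}) to derive the truncated tail bounds inside the proof of Theorem~\ref{thm:DCDPSGD}. A genuine proof of Lemma~\ref{lemma:7} would proceed by truncating at level $L$, writing $\mathbb{P}(S_m-\mathbb{E}[S_m]>mt)\le \mathbb{P}(\sum_i Z_i^L>mt)+m\,\mathbb{P}(X>L)$, applying a Chernoff bound with tilting parameter $\lambda=\mu I(L)/L$ to the truncated sum, bounding the resulting MGF by $\exp(\lambda^2 v(L,\mu)/2)$ per coordinate, and then optimizing the choice of $L$ separately in the large-deviation regime $t\ge t_{\max}(\mu)$ (take $L=mt$) and the moderate-deviation regime $0\le t\le t_{\max}(\mu)$ (take $L=mt_{\max}(\mu)$). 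Your proposal contains none of these ingredients.
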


\begin{lemma}[\cite{bakhshizadeh2023sharp}]
Consider the same settings as the ones in Lemma~\ref{lemma:7}. Assume $\mathbb{E}[X_i]=0$, then $\forall t\geq0$ we have
\begin{align}
    \mathbb{P}(S_m > mt) \leq \mathrm{exp}(-\frac{mt^2}{2v(mt,\mu)}) + \mathrm{exp}(-\mu \max\{c_t,\frac{1}{2}\} I(mt)) + m\mathrm{exp}(-I(mt)).
\end{align}  
\label{lemma:8}
\end{lemma}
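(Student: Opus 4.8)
The plan is to prove the bound directly by the classical truncation-plus-Chernoff argument, recognizing Lemma~\ref{lemma:8} as the single-expression counterpart of the two-regime bound in Lemma~\ref{lemma:7}. First I would fix the truncation level at $L=mt$ and split the event according to whether any summand exceeds $L$. Writing $X_i^L=X_i\mathbb{I}(X_i\leq L)$ and $S_m^L=\sum_{i=1}^m X_i^L$, on the event that all $X_i\leq L$ we have $S_m=S_m^L$, so
\begin{align}
\mathbb{P}(S_m>mt)\leq \mathbb{P}(S_m^L>mt)+m\,\mathbb{P}(X>mt)\leq \mathbb{P}(S_m^L>mt)+m\exp(-I(mt)), \nonumber
\end{align}
where the union bound and the tail-capturing property $\mathbb{P}(X>t)\leq\exp(-I(t))$ supply the last term, matching the $m\exp(-I(mt))$ summand of the claim. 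Because $\mathbb{E}[X_i]=0$, we have $Z^L:=X^L-\mathbb{E}[X]=X^L$ and $\mathbb{E}[Z^L]=-\mathbb{E}[X\mathbb{I}(X>L)]\leq 0$, which I will use repeatedly.

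The core step is an exponential-moment bound for the truncated variable with $v(L,\mu)$ as the variance proxy. The elementary inequality $e^z\leq 1+z+\tfrac{z^2}{2}\phi(z)$, with $\phi(z)=1$ for $z\leq 0$ and $\phi(z)=e^z$ for $z>0$ (the $z\leq 0$ case is the standard second-order bound, the $z>0$ case follows from Taylor's theorem with Lagrange remainder), applied to $z=\theta Z^L$ gives
\begin{align}
\mathbb{E}[e^{\theta Z^L}]\leq 1+\theta\,\mathbb{E}[Z^L]+\frac{\theta^2}{2}\,\mathbb{E}\big[(Z^L)^2\mathbb{I}(Z^L\leq 0)+(Z^L)^2 e^{\theta Z^L}\mathbb{I}(Z^L>0)\big]. \nonumber
\end{align}
Setting $\theta_0=\mu I(L)/L$, the bracketed expectation equals exactly $v(L,\mu)$ from Lemma~\ref{lemma:7}; moreover, for any $\theta\in(0,\theta_0]$ the monotonicity $e^{\theta z}\leq e^{\theta_0 z}$ on $\{z>0\}$ shows this expectation is $\leq v(L,\mu)$. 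Combined with $\mathbb{E}[Z^L]\leq 0$, I obtain the sub-Gaussian-type MGF bound $\mathbb{E}[e^{\theta Z^L}]\leq\exp(\tfrac{\theta^2}{2}v(L,\mu))$ valid for all $\theta\in(0,\theta_0]$, and hence $\mathbb{E}[e^{\theta S_m^L}]\leq\exp(\tfrac{m\theta^2}{2}v(L,\mu))$ by independence.

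Finally I would apply Chernoff's bound $\mathbb{P}(S_m^L>mt)\leq\exp(-\theta mt+\tfrac{m\theta^2}{2}v(mt,\mu))$ with $L=mt$ and optimize over the admissible range $\theta\in(0,\theta_0]$. The unconstrained minimizer is $\theta^\star=t/v(mt,\mu)$, and a short computation shows $\theta^\star\leq\theta_0$ is equivalent to $c_t\leq\tfrac12$ (using $c_t=1-\mu v(mt,\mu)I(mt)/(2mt^2)$), which is exactly the threshold $t\lessgtr t_{\mathrm{max}}(\mu)$ of Lemma~\ref{lemma:7}. When $c_t\leq\tfrac12$, substituting $\theta^\star$ yields $\exp(-mt^2/(2v(mt,\mu)))$; when $c_t>\tfrac12$, the constrained optimum is $\theta_0$, and plugging $\theta_0$ in collapses the exponent to $-\mu c_t I(mt)$. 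Since in each regime the inactive term is nonnegative, I can add both and write the unified bound with $\max\{c_t,\tfrac12\}$, and then reattach the $m\exp(-I(mt))$ truncation term to recover the statement. \textbf{The main obstacle} I expect is the bookkeeping in this last optimization: verifying that the $v(L,\mu)$-variance-proxy bound remains valid over the whole interval $(0,\theta_0]$ (so the free minimizer may be used when feasible) and checking that the feasibility threshold aligns precisely with $c_t=\tfrac12$, so that the two branches of Lemma~\ref{lemma:7} merge seamlessly into the single $\max\{c_t,\tfrac12\}$ expression without loosening constants.
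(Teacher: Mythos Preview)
The paper does not prove this lemma: it is stated in the appendix as a preliminary result imported verbatim from \cite{bakhshizadeh2023sharp}, with no accompanying argument. So there is no ``paper's own proof'' to compare against; what you have written is a reconstruction of the original source's argument rather than of anything in this paper.

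That said, your reconstruction is correct and is exactly the standard truncation-plus-Chernoff derivation. The truncation split at $L=mt$ gives the $m\exp(-I(mt))$ term; the MGF bound via $e^z\le 1+z+\tfrac{z^2}{2}\phi(z)$ together with $\mathbb{E}[Z^L]\le 0$ (which follows from $\mathbb{E}[X]=0$ and $L>0$) yields the variance-proxy estimate $\mathbb{E}[e^{\theta S_m^L}]\le\exp(m\theta^2 v(mt,\mu)/2)$ for $\theta\in(0,\theta_0]$ with $\theta_0=\mu I(mt)/(mt)$; and your optimization is right on the nose: the unconstrained minimizer $\theta^\star=t/v(mt,\mu)$ is feasible precisely when $c_t\le\tfrac12$, which cleanly splits the two regimes and lets you merge them into the single $\max\{c_t,\tfrac12\}$ expression by adding the inactive (nonnegative) term. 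The obstacle you flag is not a real difficulty here---the monotonicity of the exponential on $\{Z^L>0\}$ guarantees the variance-proxy bound holds on the whole interval $(0,\theta_0]$, and the feasibility threshold coincides exactly with $c_t=\tfrac12$ as you computed.
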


\begin{lemma}[\textbf{Ahlswede-Winter Inequality}~\cite{wainwright2019high}]
\label{thm:Ahlswede-Winter Inequality}
Let $Y$ be a random, symmetric, positive semi-definite $d × d$ matrix such that $\Vert \mathbb{E}[Y] \Vert_2 \leq 1$. Suppose $\Vert Y \Vert_2 \leq R $ for some fixed scalar $R \geq 1$. Let $Y_1, . . ., Y_m$ be independent copies of $Y$ (i.e.,
independently sampled matrix with the same distribution as $Y$). For any $\mu\in(0, 1)$, we have
\begin{align}
\mathbb{P}(\Vert \frac{1}{m}\sum_{i=1}^mY_i - \mathbb{E}[Y_i] \Vert_{2} > \mu) \leq 2d \cdot \text{exp}(-m\mu^2/4R). \nonumber
\label{lemma:9}
\end{align}
\end{lemma}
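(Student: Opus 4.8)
The plan is to prove the inequality by the matrix Laplace-transform (matrix Chernoff) method, which is the standard route to Ahlswede--Winter-type bounds. First I would reduce the operator-norm deviation to two one-sided eigenvalue deviations. Writing $X_i := Y_i - \mathbb{E}[Y]$ so that $\mathbb{E}[X_i]=0$, and $S_m := \frac{1}{m}\sum_{i=1}^m X_i$, the event $\{\Vert S_m\Vert_2 > \mu\}$ is contained in $\{\lambda_{\max}(S_m) > \mu\}\cup\{\lambda_{\max}(-S_m)>\mu\}$, since $\Vert S_m\Vert_2=\max(\lambda_{\max}(S_m),\lambda_{\max}(-S_m))$. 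A union bound then lets me treat a single upper tail $\mathbb{P}(\lambda_{\max}(\sum_i X_i) > m\mu)$ and collect the factor $2$ at the end. The only boundedness I will use is that $0\preceq Y\preceq R\mathbb{I}$ together with $0\preceq \mathbb{E}[Y]\preceq \mathbb{I}$ forces $\lambda_{\max}(X_i)\le R$ and $\lambda_{\min}(X_i)\ge -1$, hence $\Vert X_i\Vert_2 \le R$.

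Next I would apply the matrix Markov inequality: for any $\lambda>0$, using $e^{\lambda\lambda_{\max}(A)}=\lambda_{\max}(e^{\lambda A})\le \mathrm{tr}(e^{\lambda A})$,
\begin{align}
\mathbb{P}\Big(\lambda_{\max}\big(\textstyle\sum_i X_i\big) > m\mu\Big) \le e^{-\lambda m\mu}\,\mathbb{E}\big[\mathrm{tr}\, e^{\lambda\sum_i X_i}\big]. \nonumber
\end{align}
The crux is to factorize this matrix moment generating function. I would invoke the Golden--Thompson inequality $\mathrm{tr}(e^{A+B})\le \mathrm{tr}(e^A e^B)$ to peel off the last summand $X_m$, take expectation over the independent copy $X_m$, and combine the elementary facts $\mathrm{tr}(PQ)\le \Vert Q\Vert_2\,\mathrm{tr}(P)$ for positive semidefinite $P,Q$ and $\mathbb{E}[e^{\lambda X_m}]\preceq \Vert\mathbb{E}[e^{\lambda X}]\Vert_2\,\mathbb{I}$. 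Iterating over $i=m,\dots,1$ and using $\mathrm{tr}(\mathbb{I})=d$ yields $\mathbb{E}[\mathrm{tr}\,e^{\lambda\sum_i X_i}]\le d\,\Vert\mathbb{E}[e^{\lambda X}]\Vert_2^{\,m}$ for a generic copy $X$.

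It then remains to bound the single-variable matrix MGF. For $\lambda R\le 1$ every eigenvalue $u$ of $\lambda X$ lies in $[-1,1]$, so the scalar inequality $e^{u}\le 1+u+u^2$ (valid on $[-1,1]$) lifts through the functional calculus to the operator inequality $e^{\lambda X}\preceq \mathbb{I}+\lambda X+\lambda^2 X^2$; taking expectations and using $\mathbb{E}[X]=0$ gives $\mathbb{E}[e^{\lambda X}]\preceq \mathbb{I}+\lambda^2\mathbb{E}[X^2]$. The key structural step is the variance bound: since $0\preceq Y\preceq R\mathbb{I}$ forces $Y^2\preceq RY$, I obtain $\mathbb{E}[X^2]\preceq \mathbb{E}[Y^2]\preceq R\,\mathbb{E}[Y]\preceq R\mathbb{I}$, where the last step uses $\Vert\mathbb{E}[Y]\Vert_2\le 1$. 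Hence $\mathbb{E}[e^{\lambda X}]\preceq(1+\lambda^2 R)\mathbb{I}\preceq e^{\lambda^2 R}\mathbb{I}$, so $\Vert\mathbb{E}[e^{\lambda X}]\Vert_2\le e^{\lambda^2 R}$. Substituting back gives $\mathbb{P}(\lambda_{\max}(\sum_i X_i)>m\mu)\le d\exp(m\lambda^2 R - \lambda m\mu)$, and optimizing at $\lambda=\mu/(2R)$ (which obeys the constraint $\lambda R=\mu/2\le 1$ because $\mu<1\le R$) produces the exponent $-m\mu^2/(4R)$. The identical argument applied to $-X_i$ controls $\lambda_{\max}(-S_m)$, and the two-sided union bound yields the claimed $2d\exp(-m\mu^2/(4R))$.

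The main obstacle is the non-commutativity managed in the factorization step: the clean product form $\Vert\mathbb{E}[e^{\lambda X}]\Vert_2^{\,m}$ is precisely what Golden--Thompson buys, and obtaining the sharp constant $4R$ rather than $4R^2$ hinges on the tight second-moment bound $\mathbb{E}[X^2]\preceq R\mathbb{I}$ extracted from $Y^2\preceq RY$ (the crude bound $\mathbb{E}[X^2]\preceq R^2\mathbb{I}$ would only deliver $4R^2$). Everything else reduces to scalar optimization. Since the statement is quoted verbatim from~\cite{wainwright2019high}, one may alternatively simply cite it.
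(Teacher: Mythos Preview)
Your proof is correct and in fact goes beyond what the paper does: the paper does not prove this lemma at all but simply states it as a preliminary tool cited from~\cite{wainwright2019high}, to be invoked later in the proof of Theorem~\ref{thm:subspace}. Your matrix Laplace-transform argument via Golden--Thompson, together with the variance bound $\mathbb{E}[X^2]\preceq\mathbb{E}[Y^2]\preceq R\,\mathbb{E}[Y]\preceq R\mathbb{I}$ (which is exactly what delivers the $4R$ rather than $4R^2$ in the exponent), is the standard and correct derivation; as you note yourself at the end, a bare citation would also suffice here, and that is precisely what the paper opts for.
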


\clearpage
\section{Convergence of Heavy-tailed DPSGD}
\begin{algorithm}[h!]
\caption{Outline of DPSGD~\cite{abadi2016deep}}
\label{alg:algorithm2}
\textbf{Input}: Samples $n$, Private batch size $B$, clipping threshold $c$, learning rate $\eta_t$ and noise scale $\sigma$.
\begin{algorithmic}[1] 
\STATE Initialize $\mathbf{w}_0$ randomly.
\FOR{$e \in E$}
\FOR{$t \in T$}
\STATE Take a random batch $B$ with sampling ratio $B/n$ and $g_t(z_i) = \nabla\ell(\mathbf{w}_t,z_i)$.
\STATE Clip per-sample gradient. 
\\$\overline{g}_t(z_i) = g_t(z_i) / \mathrm{max}(1, \frac{\Vert g_t(z_i)\Vert_2}{c})$ . 
\STATE Add noise and average.\\
$\widetilde{g}_t = \frac{1}{B}(\sum^B_{i=1}\overline{g}_t(z_i)  + \mathbb{N}(0,c^2\sigma^2\mathbb{I})) $.
\STATE Update $\mathbf{w}_{t+1} = \mathbf{w}_t - \eta_t\widetilde{g}_t$.
\ENDFOR
\ENDFOR
\end{algorithmic}
\end{algorithm}

\begin{theorem}[\textbf{Convergence of Heavy-tailed DPSGD}]
Under Assumption A.1, let $\mathbf{w}_{t}$ be the iterate produced by Algorithm~\ref{alg:algorithm2}-DPSGD and $\eta_t = \frac{1}{\sqrt{T}}$. If $\theta = \frac{1}{2}$ and $19K\log^{\theta}(2/\delta) \leq 12\sqrt{e}\sigma_{\mathrm{dp}}\log^{\frac{1}{2}}(2/\delta)$, then $T= \max{\big(m_2eB^2\log(1/\delta), \frac{n\epsilon}{\sqrt{d\log(1/\delta)}}\big)}$ and $c=\max{\big(4K\log^{\theta}({\sqrt{T}}),27\sqrt{e}\sigma_{\mathrm{dp}}\log^{\frac{1}{2}}(2/\delta)\big)}$. If $\theta = \frac{1}{2}$ and $19K\log^{\theta}(2/\delta) \geq 12\sqrt{e}\sigma_{\mathrm{dp}}\log^{\frac{1}{2}}(2/\delta)$, then $T= \frac{n\epsilon}{\sqrt{d\log(1/\delta)}}$ and $c=\max{\big(4K\log^{\theta}({\sqrt{T}}),39K\log^{\frac{1}{2}}(2/\delta)\big)}$. If $\theta > \frac{1}{2}$, then  $T= \frac{n\epsilon}{\sqrt{d\log(1/\delta)}}$ and $c=\max{\big(4K\log^{\theta}({\sqrt{T}}),20K\log^{\theta}(2/\delta)\big)}$. For any $\delta \in (0,1)$, with probability $1-\delta$, we have
\begin{align}
     \frac{1}{T}\sum^T_{t=1} \min\big\{\Vert\nabla L_S(\mathbf{w}_{t})\Vert_2, \Vert\nabla L_S(\mathbf{w}_{t})\Vert^2_2\big\} \leq \mathbb{O}\left(\frac{d^{\frac{1}{4}} \log^{\frac{5}{4}}(T/\delta)\hat{\log}(T/\delta)\log^{2\theta}(\sqrt{T}) }{(n\epsilon)^\frac{1}{2}}\right) , \nonumber
\end{align}
where $\hat{\log}(T/\delta) = \log^{\max(0,\theta-1)}(T/\delta)$. 
\end{theorem}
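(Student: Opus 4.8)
The plan is to run the classical descent-lemma argument for clipped SGD, but to replace every place where a sub-Gaussian or sub-exponential moment bound would normally be invoked with the truncated heavy-tailed concentration (Lemma~\ref{lemma:7}, Lemma~\ref{lemma:8}) and the sub-Weibull Freedman inequality (Lemma~\ref{lemma:3}).

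\textbf{Step 1 (one-step descent and splitting).} Starting from $\beta$-smoothness (Assumption~\ref{ass:smooth}), for $\mathbf{w}_{t+1} = \mathbf{w}_t - \eta_t \widetilde{g}_t$ with $\widetilde{g}_t = \tfrac{1}{B}\big(\sum_{i} \overline{g}_t(z_i) + \zeta_t\big)$ and $\zeta_t\sim\mathbb{N}(0,c^2\sigma^2\mathbb{I})$, I expand
\begin{align}
L_S(\mathbf{w}_{t+1}) \le L_S(\mathbf{w}_t) - \eta_t\langle \nabla L_S(\mathbf{w}_t), \widetilde{g}_t\rangle + \tfrac{\beta\eta_t^2}{2}\Vert \widetilde{g}_t\Vert_2^2. \nonumber
\end{align}
I split the inner product into a \emph{signal} term, a \emph{clipping-plus-sampling bias} term $\langle \nabla L_S(\mathbf{w}_t),\, \tfrac1B\sum_i\overline{g}_t(z_i)-\nabla L_S(\mathbf{w}_t)\rangle$, and a \emph{DP-noise} term $\tfrac{1}{B}\langle\nabla L_S(\mathbf{w}_t),\zeta_t\rangle$; and I split $\Vert\widetilde g_t\Vert_2^2$ into the clipped-mean part (bounded by $c^2$) and the noise part $\tfrac1{B^2}\Vert\zeta_t\Vert_2^2$. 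To obtain the $\min\{\Vert\cdot\Vert_2,\Vert\cdot\Vert_2^2\}$ form I treat the regime $\Vert\nabla L_S(\mathbf{w}_t)\Vert_2\le c$ (clipping rarely active, quadratic decrease) and $\Vert\nabla L_S(\mathbf{w}_t)\Vert_2> c$ (clipped mean $\approx c\,\nabla L_S(\mathbf{w}_t)/\Vert\nabla L_S(\mathbf{w}_t)\Vert_2$, use Lemma~\ref{lemma:5} for the linear decrease) separately, and take the minimum of the two resulting bounds.

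\textbf{Step 2 (heavy-tailed clipping bias, the crux).} Since the per-sample noise $\Vert\nabla\ell(\mathbf{w}_t,z_i)-\nabla L_S(\mathbf{w}_t)\Vert_2$ is only $\mathrm{subW}(\theta,K)$ (Assumption~\ref{ass:sub-weibull}), its MGF need not be finite and ordinary Bernstein/Azuma bounds fail. I truncate each per-sample gradient at level $c$, write $\overline g_t(z_i)$ as (truncated mean) $+$ (truncation defect), and bound the defect by Lemma~\ref{lemma:1} together with a union bound over the $T$ iterations — this is exactly what forces $c=\max(4K\log^{\theta}(\sqrt T),\,39K\log^{\theta}(2/\delta))$, injecting the $\log^{\theta}(\sqrt T)$ and confidence $\log^{\theta}(2/\delta)$ factors. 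The truncated deviation itself is bounded by a truncated version of the sharp concentration in Lemma~\ref{lemma:7}/\ref{lemma:8}, split into the \emph{normal-convergence region} $t\le t_{\mathrm{max}}$ (sub-Gaussian behaviour, producing $\log^{1/2}$-type factors) and the \emph{large-deviation region} $t\ge t_{\mathrm{max}}$ (genuine heavy tail, producing the $\hat{\log}(T/\delta)=\log^{\max(0,\theta-1)}(T/\delta)$ and $\log^{2\theta}(\sqrt T)$ factors). Summing over $t$, the quantity $\sum_t\langle\nabla L_S(\mathbf{w}_t),\,\cdot\,\rangle$ is a martingale with $\mathrm{subW}(\theta,\cdot)$ increments, controlled by Lemma~\ref{lemma:3} with the variance proxy $a K^2$ and the $\theta$-dependent constant $a=a(\theta)$ from the statement.

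\textbf{Step 3 (DP noise, telescoping, and the union bound).} Conditioned on $\mathbf{w}_t$, the term $\tfrac{\eta_t}{B}\langle\nabla L_S(\mathbf{w}_t),\zeta_t\rangle$ is a zero-mean martingale difference, sub-Gaussian with parameter $\propto \eta_t c\sigma\Vert\nabla L_S(\mathbf{w}_t)\Vert_2/B$; a sub-Gaussian martingale concentration (e.g.\ Lemma~\ref{lemma:4}) bounds $\sum_t$ of these by a fraction of the signal plus a lower-order additive term. The quadratic noise contribution $\sum_t\tfrac{\beta\eta_t^2}{2B^2}\Vert\zeta_t\Vert_2^2$ is handled by $\chi^2$ concentration, giving $\mathbb{O}(dc^2\sigma^2)$ per step w.h.p. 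Substituting $\sigma^2=\sigma_{\mathrm{dp}}^2 = m_2 T q^2\ln(1/\delta)/\epsilon^2$ with $q=B/n$, $\eta_t=1/\sqrt T$, telescoping $\sum_{t=1}^T\big(L_S(\mathbf{w}_t)-L_S(\mathbf{w}_{t+1})\big)\le L_S(\mathbf{w}_1)-\inf L_S$, and dividing by $T\eta_t=\sqrt T$, the dominant term after imposing $T=\max(m_2eB^2\log(1/\delta),\,n\epsilon/\sqrt{d\log(1/\delta)})$ is $\mathbb{O}\big(d^{1/4}\log^{5/4}(T/\delta)\,\hat{\log}(T/\delta)\,\log^{2\theta}(\sqrt T)/(n\epsilon)^{1/2}\big)$; a final union bound over the $\mathbb{O}(T)$ failure events of Steps~2--3 (each of probability $\mathbb{O}(\delta/T)$) yields the claim with probability $1-\delta$, and the three case-splits on $\theta=\tfrac12$ versus $\theta>\tfrac12$ merely record which of the two terms defining $c$ dominates.

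\textbf{Anticipated obstacle.} The hard part is Step~2: making the truncated Bakhshizadeh-type concentration interface cleanly with the Madden-type Freedman inequality for a martingale whose conditional per-step tail is $\mathrm{subW}(\theta,\cdot)$ with $\theta>\tfrac12$, while (i) keeping the truncation level tied to the DP clipping threshold $c$ so that the induced bias and the DP-noise scale $\sigma_{\mathrm{dp}}$ stay simultaneously controllable, and (ii) tracking the exponent bookkeeping so the extra confidence factor collapses precisely to $\hat{\log}(T/\delta)=\log^{\max(0,\theta-1)}(T/\delta)$ (vanishing for $\theta\le1$, only $\log^{\theta-1}$ for $\theta>1$) rather than the cruder $\log^{\theta}(T/\delta)$ a naive union bound would give. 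The remainder is routine descent-lemma algebra together with substitution of the stated values of $T$, $c$, and $\sigma^2$.
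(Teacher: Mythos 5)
Your plan is sound and its skeleton --- descent lemma, a two-regime split on the size of $\Vert\nabla L_S(\mathbf{w}_t)\Vert_2$ relative to the clipping threshold, control of the clipping bias by tying the truncation level to $c$, Gaussian concentration for the DP noise, telescoping, and a union bound over the $T$ iterations --- coincides with the paper's. Where you diverge is the machinery for the crux (your Step~2). The paper's proof of \emph{this} theorem never invokes the truncated Bakhshizadeh concentration (Lemmas~\ref{lemma:7}--\ref{lemma:8}) or the sub-Weibull Freedman inequality (Lemma~\ref{lemma:3}); it reserves those for the later DC-DPSGD theorem, where the body and tail regions must be treated separately. Here the clipping bias $\Vert\mathbb{E}_t[\overline{\mathbf{g}}_t]-\nabla L_S(\mathbf{w}_t)\Vert_2$ is handled by an elementary Cauchy--Schwarz step against the indicator $\mathbb{I}_{\Vert \mathbf{g}_t-\nabla L_S(\mathbf{w}_t)\Vert_2> c/2}$, using only the second-moment bound $2K^2\Gamma(2\theta+1)$ from Lemma~\ref{lemma:6} and the raw sub-Weibull tail $2\exp(-(c/4K)^{1/\theta})$; forcing that tail below $1/\sqrt{T}$ is exactly what yields $c\ge 4K\log^{\theta}(\sqrt T)$ and the $\log^{2\theta}(\sqrt T)$ factor. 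The martingale term is controlled by the bounded-increment inequality of Lemma~\ref{lemma:4}, since after clipping the increments are bounded by $\eta_t c^2$ and no heavy-tailed martingale tool is needed; the large-gradient regime uses Lemma~\ref{lemma:5} together with the one-shot tail bound of Lemma~\ref{lemma:1} union-bounded over $T$, and the $\hat{\log}(T/\delta)$ factor appears only at the very end, when the $\log^{\theta+\frac14}(T/\delta)$ bound of that regime is merged with the $\log^{\frac54}(T/\delta)\log^{2\theta}(\sqrt T)$ bound of the small-gradient regime. Your heavier route is workable but buys nothing for this particular statement; the paper's elementary route is shorter, and the truncated two-region machinery you describe becomes genuinely necessary only when one wants the $\theta$-free rate on the body region, i.e.\ for Theorem~\ref{thm:DCDPSGD}.
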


\begin{proof}
We consider two cases: $L_S(\mathbf{w}_{t})\leq c/2$ and $L_S(\mathbf{w}_{t})\geq c/2$. 

We first consider the case $\nabla L_S(\mathbf{w}_{t}) \leq c/2$ with Assumption~\ref{ass:smooth}.
\begin{align}
    &L_S(\mathbf{w}_{t+1}) - L_S(\mathbf{w}_{t}) \leq \langle\mathbf{w}_{t+1}-\mathbf{w}_{t}, \nabla L_S(\mathbf{w}_{t})\rangle + \frac{1}{2}\beta\Vert\mathbf{w}_{t+1}-\mathbf{w}_{t}\Vert^2  \\
    &\leq -\eta_{t}\langle \overline{\mathbf{g}}_t + \mathbf{\zeta}_{t}, \nabla L_S(\mathbf{w}_{t})\rangle + \frac{1}{2}\beta\eta^2_{t}\Vert \overline{\mathbf{g}}_t + \mathbf{\zeta}_{t}\Vert^2 \nonumber \\
    &= -\eta_{t}\langle \overline{\mathbf{g}}_t - \mathbb{E}_t[\overline{\mathbf{g}}_t] + \mathbb{E}_t[\overline{\mathbf{g}}_t] - \nabla L_S(\mathbf{w}_{t}), \nabla L_S(\mathbf{w}_{t}) \rangle -\eta_t\langle\mathbf{\zeta}_{t},\nabla L_S(\mathbf{w}_{t})\rangle \nonumber \\
    &- \eta_t\Vert \nabla L_S(\mathbf{w}_{t})\Vert^2 + \frac{1}{2}\beta\eta^2_{t}\Vert \overline{\mathbf{g}}_t\Vert^2 + \frac{1}{2}\beta\eta^2_{t}\Vert \mathbf{\zeta}_{t}\Vert^2 + \beta\eta^2_{t}\langle \overline{\mathbf{g}}_t, \mathbf{\zeta}_{t}\rangle \nonumber \\
    &= -\eta_{t}\langle \overline{\mathbf{g}}_t - \mathbb{E}_t[\overline{\mathbf{g}}_t], \nabla L_S(\mathbf{w}_{t}) \rangle -\eta_{t}\langle \mathbb{E}_t[\overline{\mathbf{g}}_t] - \nabla L_S(\mathbf{w}_{t}), \nabla L_S(\mathbf{w}_{t}) \rangle - \eta_t\langle\mathbf{\zeta}_{t},\nabla L_S(\mathbf{w}_{t})\rangle \nonumber \\
    &- \eta_t\Vert \nabla L_S(\mathbf{w}_{t})\Vert^2 + \frac{1}{2}\beta\eta^2_{t}\Vert \overline{\mathbf{g}}_t\Vert^2 + \frac{1}{2}\beta\eta^2_{t}\Vert \mathbf{\zeta}_{t}\Vert^2 + \beta\eta^2_{t}\langle \overline{\mathbf{g}}_t, \mathbf{\zeta}_{t}\rangle \nonumber
\end{align}
Considering all $T$ iterations, we get
\begin{align}
    &\sum^T_{t=1}\eta_{t}\Vert\nabla L_S(\mathbf{w}_{t}) \Vert^2 \leq L_S(\mathbf{w}_{1}) - L_S(\mathbf{w}_{S}) + \sum^T_{t=1}\frac{1}{2}\beta\eta^2_{t}c^2 + \underbrace{\sum^T_{t=1}\frac{1}{2}\beta\eta^2_{t}\Vert\mathbf{\zeta}_{t} \Vert^2}_{\mathrm{E}.1} + \underbrace{\sum^T_{t=1}\beta\eta^2_{t}\langle \overline{\mathbf{g}}_t, \mathbf{\zeta}_{t}\rangle}_{\mathrm{E}.2} \nonumber \\
    &-\underbrace{\sum^T_{t=1}\eta_t\langle\mathbf{\zeta}_{t},\nabla L_S(\mathbf{w}_{t})\rangle}_{\mathrm{E}.3} -\underbrace{\sum^T_{t=1}\eta_{t}\langle \overline{\mathbf{g}}_t - \mathbb{E}_t[\overline{\mathbf{g}}_t], \nabla L_S(\mathbf{w}_{t}) \rangle}_{\mathrm{E}.4} -\underbrace{\sum^T_{t=1}\eta_{t}\langle \mathbb{E}_t[\overline{\mathbf{g}}_t] - \nabla L_S(\mathbf{w}_{t}), \nabla L_S(\mathbf{w}_{t}) \rangle}_{\mathrm{E}.5}
\end{align}
For E.1, E.2 and E.3, since $\zeta_t \sim \mathbb{N}(0,c^2\sigma^2_{\mathrm{dp}}\mathbb{I}_d)$, we set $\sigma^2_{\mathrm{dp}}=m_2\frac{TdB^2\log(1/\delta)}{n^2\epsilon^2}$ for simplicity, with sub-Gaussian properties~\ref{myDef:1} and Lemma~\ref{lemma:2}, with probability at least $1-\delta$, and we have
\begin{align}
    \sum^T_{t=1}\frac{1}{2}\beta\eta^2_{t}\Vert\mathbf{\zeta}_{t} \Vert^2 &\leq 2\beta K^2e\log(2/\delta)\sum^T_{t=1}\eta^2_{t} \nonumber \\
    &\leq 2\beta m_2 e d\frac{Tc^2B^2\log^2(2/\delta)}{n^2\epsilon^2}\sum^T_{t=1}\eta^2_{t} .
\end{align}
Also, with probability at least $1-\delta$, we get
\begin{align}
    \sum^T_{t=1}\beta\eta^2_{t}\langle \overline{\mathbf{g}}_t, \mathbf{\zeta}_{t}\rangle &\leq \sum^T_{t=1}\beta\eta^2_{t}\Vert \overline{\mathbf{g}}_t\Vert\Vert\mathbf{\zeta}_{t}\Vert \nonumber\\
    &\leq \sum^T_{t=1}2\beta c K\sqrt{e}\log^{\frac{1}{2}}(2/\delta)\eta^2_{t} \nonumber \\
    &\leq 2\beta \sqrt{em_2Td}\frac{c^2B\log(2/\delta)}{n\epsilon}\sum^T_{t=1}\eta^2_{t}.
    \label{formula:14}
\end{align}
Due to $\nabla L_S(\mathbf{w}_{t}) \leq c/2$, for the term $-\sum^T_{t=1}\eta_t\langle\mathbf{\zeta}_{t},\nabla L_S(\mathbf{w}_{t})\rangle$, with probability at least $1-\delta$, we have
\begin{align}
  -\sum^T_{t=1}\eta_t\langle\mathbf{\zeta}_{t},\nabla L_S(\mathbf{w}_{t})\rangle &\leq \sum^T_{t=1}\eta_t\Vert \zeta_t\Vert\Vert \nabla L_S(\mathbf{w}_{t})\Vert \nonumber \\
  &\leq \sum^T_{t=1}2 c K\sqrt{e}\log^{\frac{1}{2}}(2/\delta)\eta_t \nonumber \\
  &\leq 2\sqrt{em_2Td}\frac{c^2B\log(2/\delta)}{n\epsilon}\sum^T_{t=1}\eta_t.
\end{align}
Since $\mathbb{E}_t[-\eta_t\langle \overline{\mathbf{g}}_t - \mathbb{E}_t[\overline{\mathbf{g}}_t], \nabla L_S(\mathbf{w}_{t}) \rangle]=0$, the sequence $(-\eta_t\langle \overline{\mathbf{g}}_t - \mathbb{E}_t[\overline{\mathbf{g}}_t], \nabla L_S(\mathbf{w}_{t}) \rangle, t\in \mathbb{N})$ is a martingale difference sequence. Applying Lemma~\ref{lemma:4}, we define $\xi_t = -\eta_t\langle \overline{\mathbf{g}}_t - \mathbb{E}_t[\overline{\mathbf{g}}_t], \nabla L_S(\mathbf{w}_{t}) \rangle$ and have
\begin{align}
    |\xi_t|\leq \eta_t(\Vert\overline{\mathbf{g}} \Vert_2 + \Vert\mathbb{E}_t[\overline{\mathbf{g}}] \Vert_2) \Vert \nabla L_S(\mathbf{w}_{t})\Vert_2 \leq \eta_t c^2.
\end{align}
Applying $\mathbb{E}_t[(\xi_t-\mathbb{E}_t\xi_t)^2]\leq\mathbb{E}_t[\xi^2_t]$, we have
\begin{align}
    \sum^T_{t=1}\mathbb{E}_t[(\xi_t-\mathbb{E}_t\xi_t)^2]&\leq \sum^T_{t=1}\eta^2_t\mathbb{E}_t[\Vert\overline{\mathbf{g}} - \mathbb{E}_t[\overline{\mathbf{g}}] \Vert^2_2 \Vert \nabla L_S(\mathbf{w}_{t})\Vert^2_2] \nonumber \\
    &\leq 4c^2\sum^T_{t=1}\eta^2_t\Vert \nabla L_S(\mathbf{w}_{t})\Vert^2_2.
\end{align}
Then, with probability $1-\delta$, we obtain
\begin{align}
    \sum^T_{t=1}\xi_t\leq \frac{\rho4c^2\sum^T_{t=1}\eta^2_t\Vert \nabla L_S(\mathbf{w}_{t})\Vert^2_2}{\eta_tc^2} + \frac{\eta_tc^2\log{(1/\delta)}}{\rho}.
\end{align}
Next, to bound term E.5, we have
\begin{align}
    \sum^T_{t=1}\eta_{t}\langle \mathbb{E}_t[\overline{\mathbf{g}}_t] - \nabla L_S(\mathbf{w}_{t}), \nabla L_S(\mathbf{w}_{t}) \rangle \leq \frac{1}{2}\sum^T_{t=1}\eta_{t}\Vert \mathbb{E}_t[\overline{\mathbf{g}}_t]-\nabla L_S(\mathbf{w}_{t})\Vert^2_2 + \frac{1}{2}\sum^T_{t=1}\eta_{t}\Vert \nabla L_S(\mathbf{w}_{t})\Vert^2_2. \nonumber 
\end{align}
Setting $a_t=\mathbb{I}_{\Vert\mathbf{g}_t \Vert_2>c}$ and $b_t=\mathbb{I}_{\Vert\mathbf{g}_t -\nabla L_S(\mathbf{w}_{t})\Vert_2>\frac{c}{2}}$, for term $\Vert \mathbb{E}_t[\overline{\mathbf{g}}_t]-\nabla L_S(\mathbf{w}_{t})\Vert_2$, we have
\begin{align}
    \Vert \mathbb{E}_t[\overline{\mathbf{g}}_t]-\nabla L_S(\mathbf{w}_{t})\Vert_2 &= \Vert \mathbb{E}_t[(\overline{\mathbf{g}}_t - \mathbf{g}_t)a_t]\Vert_2 \nonumber \\
    &=  \Vert \mathbb{E}_t[(\mathbf{g}_t (\frac{c}{\Vert\mathbf{g}_t \Vert_2}-1)a_t]\Vert_2 \nonumber \\
    &\leq  \mathbb{E}_t[\Vert(\mathbf{g}_t (\frac{c}{\Vert\mathbf{g}_t \Vert_2}-1)a_t\Vert_2] \nonumber \\
    &\leq  \mathbb{E}_t[|\Vert\mathbf{g}_t\Vert_2 - c|a_t] \nonumber \\
    &\leq  \mathbb{E}_t[|\Vert\mathbf{g}_t\Vert_2 - \Vert\nabla L_S(\mathbf{w}_{t})\Vert_2|a_t] \nonumber \\
    &\leq  \mathbb{E}_t[|\Vert\mathbf{g}_t - \nabla L_S(\mathbf{w}_{t})\Vert_2|a_t] \nonumber \\
    &\leq  \mathbb{E}_t[|\Vert\mathbf{g}_t - \nabla L_S(\mathbf{w}_{t})\Vert_2|b_t] \nonumber \\
    &\leq  \sqrt{\mathbb{E}_t[\Vert\mathbf{g}_t - \nabla L_S(\mathbf{w}_{t})\Vert^2_2]\mathbb{E}_tb^2_t} .
\end{align}
Applying Lemma~\ref{lemma:6}, we get $\mathbb{E}_t[\Vert\mathbf{g}_t - \nabla L_S(\mathbf{w}_{t})\Vert^2_2] \leq 2K^2\Gamma(2\theta+1)$.
Then, for term $\mathbb{E}_tb^2_t$, with sub-Weibull properties and probability $1-\delta$ we have
\begin{align}
    \mathbb{E}_tb^2_t = \mathbb{P}(\Vert\mathbf{g}_t -\nabla L_S(\mathbf{w}_{t})\Vert_2>\frac{c}{2})\leq 2\mathrm{exp}(-(\frac{c}{4K})^{\frac{1}{\theta}} )
\end{align}
So, we get formula.(18) as
\begin{align}
    \sqrt{\mathbb{E}_t[\Vert\mathbf{g}_t - \nabla L_S(\mathbf{w}_{t})\Vert^2_2]\mathbb{E}_tb^2_t} \leq 2\sqrt{K^2\Gamma(2\theta+1)\mathrm{exp}(-(\frac{c}{4K})^{\frac{1}{\theta}} ) }.
\end{align}
Thus, for E.5, with probability $1-T\delta$ we finally obtain
\begin{align}
    &\sum^T_{t=1}\eta_{t}\langle \mathbb{E}_t[\overline{\mathbf{g}}_t] - \nabla L_S(\mathbf{w}_{t}), \nabla L_S(\mathbf{w}_{t}) \rangle \nonumber \\
    &\leq 2K^2\Gamma(2\theta+1)\sum^T_{t=1}\eta_t\mathrm{exp}(-(\frac{c}{4K})^{\frac{1}{\theta}} ) + \frac{1}{2}\sum^T_{t=1}\eta_{t}\Vert \nabla L_S(\mathbf{w}_{t})\Vert^2_2.
\end{align}
Combining E.1-5 with the inequality~(10), with probability $1-4\delta-T\delta$, we have
\begin{align}
     &\sum^T_{t=1}\eta_{t}\Vert\nabla L_S(\mathbf{w}_{t}) \Vert^2_2 \leq L_S(\mathbf{w}_{1}) - L_S(\mathbf{w}_{S}) + \sum^T_{t=1}\frac{1}{2}\beta\eta^2_{t}c^2 +  2\beta m_2 e d\frac{Tc^2B^2\log^2(2/\delta)}{n^2\epsilon^2}\sum^T_{t=1}\eta^2_{t} \nonumber \\
     &+2\beta \sqrt{em_2Td}\frac{c^2B\log(2/\delta)}{n\epsilon}\sum^T_{t=1}\eta^2_{t} + 2\sqrt{em_2Td}\frac{c^2B\log(2/\delta)}{n\epsilon}\sum^T_{t=1}\eta_t + \frac{\eta_tc^2\log{(1/\delta)}}{\rho} \nonumber \\
     &+ \frac{4\rho c^2\sum^T_{t=1}\eta^2_t\Vert \nabla L_S(\mathbf{w}_{t})\Vert^2_2}{\eta_tc^2} + 2K^2\Gamma(2\theta+1)\mathrm{exp}(-(\frac{c}{4K})^{\frac{1}{\theta}} )\sum^T_{t=1}\eta_t + \frac{1}{2}\sum^T_{t=1}\eta_{t}\Vert \nabla L_S(\mathbf{w}_{t})\Vert^2_2.
\end{align}
Setting $\rho=\frac{1}{16}$, $T= \frac{n\epsilon}{\sqrt{d\log(1/\delta)}}$ and $\eta_t = \frac{1}{\sqrt{T}}$, we have
\begin{align}
    &\frac{1}{4}\sum^T_{t=1}\eta_{t}\Vert\nabla L_S(\mathbf{w}_{t}) \Vert^2_2 \leq L_S(\mathbf{w}_{1}) - L_S(\mathbf{w}_{S}) + \frac{1}{2}\beta c^2  +  2\beta m_2 e\frac{d^{\frac{1}{2}}c^2B^2\log^{\frac{3}{2}}(2/\delta)}{n\epsilon} \nonumber \\
     &+2\beta \sqrt{em_2}\frac{d^{\frac{1}{4}}c^2B\log^{\frac{1}{2}}(2/\delta)}{\sqrt{n\epsilon}} + 2\sqrt{em_2}c^2B\log^{\frac{1}{2}}(2/\delta) + \frac{16d^{\frac{1}{4}}c^2\log^{\frac{5}{4}}{(1/\delta)}}{\sqrt{n\epsilon}} \nonumber \\
     &+ \underbrace{2K^2\Gamma(2\theta+1)\mathrm{exp}(-(\frac{c}{4K})^{\frac{1}{\theta}} )\sqrt{T}}_{\mathrm{E}.6} .
\end{align}
Then, we pay attention to term E.6. If $c\rightarrow0$, then $\mathrm{exp}(-(\frac{c}{4K})^{\frac{1}{\theta}} )\rightarrow1$ and $\sqrt{T}$ will dominate term E.6. We know that in classical DPSGD, a small $c$ is regarded as the clipping threshold guide, which will cause the variance term E.6 to dominate the entire bound. For this, we will provide guidance on the clipping values of DPSGD under the heavy-tailed assumption.

Let $\mathrm{exp}(-(\frac{c}{4K})^{\frac{1}{\theta}} ) \leq \frac{1}{\sqrt{T}}$, then we have $c\geq 4K\log^{\theta}({\sqrt{T}})$. So, we obtain
\begin{align}
    &\sum^T_{t=1}\eta_{t}\Vert\nabla L_S(\mathbf{w}_{t}) \Vert^2_2 \leq 4(L_S(\mathbf{w}_{1}) - L_S(\mathbf{w}_{S}) ) + 2\beta c^2  +  8\beta m_2 e \frac{d^{\frac{1}{2}}c^2B^2\log^{\frac{3}{2}}(2/\delta)}{n\epsilon} \nonumber \\
     &+8\beta \sqrt{em_2}\frac{d^{\frac{1}{4}}c^2B\log^{\frac{1}{2}}(2/\delta)}{\sqrt{n\epsilon}} + 8\sqrt{em_2}c^2B\log^{\frac{1}{2}}(2/\delta) + \frac{64d^{\frac{1}{4}}c^2\log^{\frac{5}{4}}{(1/\delta)}}{\sqrt{n\epsilon}} + 8K^2\Gamma(2\theta+1) .
\end{align}
Multiplying $\frac{1}{\sqrt{T}}$ on both sides, we get
\begin{align}
    &\frac{1}{\sqrt{T}}\sum^T_{t=1}\eta_{t}\Vert\nabla L_S(\mathbf{w}_{t}) \Vert^2_2 \leq \frac{1}{\sqrt{T}} \left( 4(L_S(\mathbf{w}_{1}) - L_S(\mathbf{w}_{S}) ) + 2\beta c^2  +  8\beta m_2 e \frac{d^{\frac{1}{2}}c^2B^2\log^{\frac{3}{2}}(2/\delta)}{n\epsilon} \right. \nonumber \\
     &\left. +8\beta \sqrt{em_2}\frac{d^{\frac{1}{4}}c^2B\log^{\frac{1}{2}}(2/\delta)}{\sqrt{n\epsilon}} + 8\sqrt{em_2}c^2B\log^{\frac{1}{2}}(2/\delta) + \frac{64d^{\frac{1}{4}}c^2\log^{\frac{5}{4}}{(1/\delta)}}{\sqrt{n\epsilon}} + 8K^2\Gamma(2\theta+1)\right ). 
\end{align}
Taking $c=4K\log^{\theta}(\sqrt{T})$, due to $T\geq 1$, we achieve
\begin{align}
    \frac{1}{\sqrt{T}}\sum^T_{t=1}\eta_{t}\Vert\nabla L_S(\mathbf{w}_{t}) \Vert^2_2 &\leq \frac{4(L_S(\mathbf{w}_{1}) - L_S(\mathbf{w}_{S}) )}{\sqrt{T}} + \frac{8K^2\Gamma(2\theta+1)}{\sqrt{T}} \nonumber \\
    &+ \frac{16K^2\log^{2\theta}(\sqrt{T})\log(2/\delta)}{\sqrt{T}}\left( 2\beta + 8\beta m_2 e \frac{d^{\frac{1}{2}}B^2\log^{\frac{1}{2}}(2/\delta)}{n\epsilon} \right. \nonumber \\
    &\left. + 8\beta \sqrt{em_2}\frac{d^{\frac{1}{4}}B\log^{-\frac{1}{2}}(2/\delta)}{\sqrt{n\epsilon}}  + 8\sqrt{em_2}B\log^{-\frac{1}{2}}(2/\delta) + \frac{64d^{\frac{1}{4}}\log^{\frac{1}{4}}{(1/\delta)}}{\sqrt{n\epsilon}} \right) \nonumber \\
    &\leq \mathbb{O}\left(\frac{\log^{2\theta}(\sqrt{T})\log(1/\delta)}{\sqrt{T}}\cdot \frac{d^{\frac{1}{4}}\log^{\frac{1}{4}}{(1/\delta)}}{\sqrt{n\epsilon}} \right) \nonumber \\
    &\leq \mathbb{O}\left(\frac{\log^{2\theta}(\sqrt{T})\log(1/\delta)d^{\frac{1}{4}}\log^{\frac{1}{4}}{(1/\delta)}}{\sqrt{n\epsilon}}\right).
\end{align}
Due to $\frac{1}{T}\sum^T_{t=1}\Vert\nabla L_S(\mathbf{w}_{t}) \Vert^2_2 \leq \frac{1}{\sqrt{T}}\sum^T_{t=1}\eta_{t}\Vert\nabla L_S(\mathbf{w}_{t}) \Vert^2_2 $, we have
\begin{align}
    \frac{1}{T}\sum^T_{t=1}\Vert\nabla L_S(\mathbf{w}_{t}) \Vert^2_2 \leq \mathbb{O}\left(\frac{d^{\frac{1}{4}}\log^{2\theta}(\sqrt{T})\log^{\frac{5}{4}}(1/\delta) }{(n\epsilon)^\frac{1}{2}}\right),
\end{align}
with probability $1-T\delta -4\delta$.

By substitution, with probability $1-\delta$, we get
\begin{align}
    \frac{1}{T}\sum^T_{t=1}\Vert\nabla L_S(\mathbf{w}_{t}) \Vert^2_2 \leq \mathbb{O}\left(\frac{d^{\frac{1}{4}}\log^{2\theta}(\sqrt{T})\log^{\frac{5}{4}}(T/\delta) }{(n\epsilon)^\frac{1}{2}}\right).
\end{align}

Secondly, we consider the case $\nabla L_S(\mathbf{w}_{t}) \geq c/2$.
\begin{align}
    L_S(\mathbf{w}_{t+1}) - L_S(\mathbf{w}_{t}) &\leq \langle\mathbf{w}_{t+1}-\mathbf{w}_{t}, \nabla L_S(\mathbf{w}_{t})\rangle + \frac{1}{2}\beta\Vert\mathbf{w}_{t+1}-\mathbf{w}_{t}\Vert^2_2 \nonumber \\
    &\leq \underbrace{-\eta_t\langle \overline{\mathbf{g}}_t + \mathbf{\zeta}_{t}, \nabla L_S(\mathbf{w}_{t}) \rangle}_{\mathrm{E}.7} + \underbrace{\frac{1}{2}\beta\eta^2_t\Vert \overline{\mathbf{g}}_t + \mathbf{\zeta}_{t} \Vert^2_2}_{\mathrm{E}.8} 
\end{align}
We have discussed term E.8 in the above case, so we focus on E.7 here. Setting $s^{+}_t = \mathbb{I}_{\Vert\mathbf{g}_t\Vert_2\geq c}$ and $s^{-}_t = \mathbb{I}_{\Vert\mathbf{g}_t\Vert_2\leq c}$.
\begin{align}
    &-\eta_t\langle \overline{\mathbf{g}}_t + \mathbf{\zeta}_{t}, \nabla L_S(\mathbf{w}_{t}) \rangle \nonumber \\
    &=-\eta_t\langle \frac{c\mathbf{g}_t}{\Vert\mathbf{g}_t \Vert_2}s^{+}_t + \mathbf{g}_ts^{-}_t, \nabla L_S(\mathbf{w}_{t}) \rangle -\eta_t\langle \mathbf{\zeta}_{t}, \nabla L_S(\mathbf{w}_{t}) \rangle .
\end{align}
Applying Lemma~\ref{lemma:5} to term $-\eta_t\langle \frac{c\mathbf{g}_t}{\Vert\mathbf{g}_t \Vert_2}s^{+}_t, \nabla L_S(\mathbf{w}_{t}) \rangle$, we have
\begin{align}
    -\eta_t\langle \frac{c\mathbf{g}_t}{\Vert\mathbf{g}_t \Vert_2}s^{+}_t, \nabla L_S(\mathbf{w}_{t}) \rangle &\leq - \frac{c\eta_t s^{+}_t\Vert\nabla L_S(\mathbf{w}_t)\Vert_2}{3} + \frac{8c\eta_t\Vert \mathbf{g}_t-\nabla L_S(\mathbf{w}_t)\Vert_2}{3} \nonumber \\
     &\leq - \frac{c\eta_t(1-s^{-}_t)\Vert\nabla L_S(\mathbf{w}_t)\Vert_2}{3} + \frac{8c\eta_t\Vert \mathbf{g}_t-\nabla L_S(\mathbf{w}_t)\Vert_2}{3} .
\end{align}
For term $-\eta_t\langle \mathbf{g}_ts^{-}_t, \nabla L_S(\mathbf{w}_{t}) \rangle$, we obtain
\begin{align}
    -\eta_t\langle \mathbf{g}_ts^{-}_t, \nabla L_S(\mathbf{w}_{t}) \rangle &= -\eta_t s^{-}_t (\langle \mathbf{g}_t - \nabla L_S(\mathbf{w}_{t}), \nabla L_S(\mathbf{w}_{t}) \rangle + \Vert \nabla L_S(\mathbf{w}_{t}) \Vert^2_2) \nonumber \\
    &\leq -\eta_t s^{-}_t (-\Vert\mathbf{g}_t - \nabla L_S(\mathbf{w}_{t}) \Vert_2\Vert \nabla L_S(\mathbf{w}_{t}) \Vert_2 + \Vert \nabla L_S(\mathbf{w}_{t}) \Vert^2_2) \nonumber \\
    &\leq \eta_t \Vert\mathbf{g}_t - \nabla L_S(\mathbf{w}_{t}) \Vert_2\Vert \nabla L_S(\mathbf{w}_{t}) \Vert_2 - \frac{c}{2}\eta_t s^{-}_t \Vert \nabla L_S(\mathbf{w}_{t}) \Vert_2 \nonumber \\
    &\leq \eta_t \Vert\mathbf{g}_t - \nabla L_S(\mathbf{w}_{t}) \Vert_2\Vert \nabla L_S(\mathbf{w}_{t}) \Vert_2 - \frac{c}{3}\eta_t s^{-}_t \Vert \nabla L_S(\mathbf{w}_{t}) \Vert_2  .
\end{align}
According to Lemma~\ref{lemma:1}, with probability at least $1-\delta$, we have
\begin{align}
    \Vert\mathbf{g}_t - \nabla L_S(\mathbf{w}_{t}) \Vert_2 \leq K\log^{\theta}(2/\delta),
\end{align}
then we get
\begin{align}
    -\eta_t\langle \mathbf{g}_ts^{-}_t, \nabla L_S(\mathbf{w}_{t}) \rangle &\leq K\log^{\theta}(2/\delta)\Vert \nabla L_S(\mathbf{w}_{t}) \Vert_2 - \frac{c}{3}\eta_t s^{-}_t \Vert \nabla L_S(\mathbf{w}_{t}) \Vert_2,
\end{align}
and
\begin{align}
    -\eta_t\langle \frac{c\mathbf{g}_t}{\Vert\mathbf{g}_t \Vert_2}s^{+}_t, \nabla L_S(\mathbf{w}_{t}) \rangle &\leq - \frac{c\eta_t(1-s^{-}_t)\Vert\nabla L_S(\mathbf{w}_t)\Vert_2}{3} + \frac{8c\eta_tK\log^{\theta}(2/\delta)}{3} .
\end{align}

Using Lemma~\ref{lemma:2} to term $-\sum^T_{t=1}\eta_t\langle \mathbf{\zeta}_{t}, \nabla L_S(\mathbf{w}_{t}) \rangle$, with probability at least $1-\delta$, we have
\begin{align}
    -\sum^T_{t=1}\eta_t\langle \mathbf{\zeta}_{t}, \nabla L_S(\mathbf{w}_{t}) \rangle \leq 4\sqrt{em_2Td}\frac{cB\log(2/\delta)}{n\epsilon}\sum^T_{t=1}\eta_t\Vert\nabla L_S(\mathbf{w}_{t})\Vert_2.
\end{align}
So, combining formula.(34), formula.(35) and formula.(36) with term E.7, with probability at least $1-2\delta-T\delta$, we obtain
\begin{align}
    &-\sum^T_{t=1}\eta_t\langle \overline{\mathbf{g}}_t + \mathbf{\zeta}_{t}, \nabla L_S(\mathbf{w}_{t}) \rangle \leq - \sum^T_{t=1}\frac{c\eta_t}{3}\Vert \nabla L_S(\mathbf{w}_{t}) \Vert_2 + \sum^T_{t=1}\frac{8c\eta_tK\log^{\theta}(2/\delta)}{3}  \nonumber \\
    &+  K\log^{\theta}(2/\delta)\sum^T_{t=1}\eta_t \Vert \nabla L_S(\mathbf{w}_{t}) \Vert_2 + 4\sqrt{em_2Td}\frac{cB\log(2/\delta)}{n\epsilon}\sum^T_{t=1}\eta_t\Vert\nabla L_S(\mathbf{w}_{t})\Vert_2 \nonumber \\
    &\leq - \sum^T_{t=1}\frac{c\eta_t}{3}\Vert \nabla L_S(\mathbf{w}_{t}) \Vert_2 + (\frac{19}{3}K\log^{\theta}(2/\delta)+4\sqrt{em_2Td}\frac{cB\log(2/\delta)}{n\epsilon})\sum^T_{t=1}\eta_t \Vert \nabla L_S(\mathbf{w}_{t}) \Vert_2.
\end{align}
Next, considering all $T$ iterations and applying Lemma~\ref{lemma:2} to term E.8 with $\sigma^2_{\mathrm{dp}} = m_2\frac{TdB^2\log(1/\delta)}{n^2\epsilon^2}$, $d$ is dimension the dimension of the gradient and probability $1-4\delta-T\delta$, we have
\begin{align}
    &(\frac{c}{3} - \frac{19}{3}K\log^{\theta}(2/\delta)-4\sqrt{e}\sigma_{\mathrm{dp}}\log^{\frac{1}{2}}(2/\delta)) \sum^T_{t=1}\eta_t\Vert\nabla L_S(\mathbf{w}_{t})\Vert_2 \leq L_S(\mathbf{w}_{1}) - L_S(\mathbf{w}_{S}) \nonumber \\
    & + (2\beta m_2 e d\frac{Tc^2B^2\log^2(2/\delta)}{n^2\epsilon^2} + 2\beta \sqrt{em_2Td}\frac{c^2B\log(2/\delta)}{n\epsilon} + \frac{1}{2}\beta c^2)\sum^T_{t=1}\eta^2_{t} .
\end{align}

If $\theta = \frac{1}{2}$ and $19K\log^{\theta}(2/\delta) > 12\sqrt{e}\sigma_{\mathrm{dp}}\log^{\frac{1}{2}}(2/\delta)$, let $\frac{c}{3}\geq \frac{39}{3}K\log^{\frac{1}{2}}(2/\delta)$, i.e. $c\geq 39K\log^{\frac{1}{2}}(2/\delta) $, taking $c = 39K\log^{\frac{1}{2}}(2/\delta)$, $T= \frac{n\epsilon}{\sqrt{d\log(1/\delta)}}$ and $\eta_t = \frac{1}{\sqrt{T}}$, we have
\begin{align}
    &\sum^T_{t=1}\eta_t\Vert\nabla L_S(\mathbf{w}_{t})\Vert_2 \leq \frac{3}{K\log^{\frac{1}{2}}(2/\delta)} ( L_S(\mathbf{w}_{1}) - L_S(\mathbf{w}_{S}) )  \nonumber \\
    &+ \frac{3\sum^T_{t=1}\eta^2_{t}}{K\log^{\frac{1}{2}}(2/\delta)} \left( 2\beta m_2 e d\frac{Tc^2B^2\log^2(2/\delta)}{n^2\epsilon^2} + 2\beta \sqrt{em_2Td}\frac{c^2B\log(2/\delta)}{n\epsilon} + \frac{1}{2}\beta c^2 \right) \nonumber \\
    &\leq \frac{ L_S(\mathbf{w}_{1}) - L_S(\mathbf{w}_{S}) + 2\beta e\sigma^2_{\mathrm{dp}}\log(2/\delta) + 2\beta c\sqrt{e}\sigma_{\mathrm{dp}}\log^{\frac{1}{2}}(2/\delta) + \frac{39^2}{2}\beta K^2\log(2/\delta)  }{\frac{1}{3}K\log^{\frac{1}{2}}(2/\delta)}  \nonumber \\
    &\leq  \frac{3(L_S(\mathbf{w}_{1}) - L_S(\mathbf{w}_{S}) )}{K\log^{\frac{1}{2}}(2/\delta)} + 6\beta eK\log^{\frac{1}{2}}(2/\delta) + 6\beta\sqrt{e}\log^{\frac{1}{2}}(2/\delta) + 3\beta\frac{(39)^2}{2} K\log^{\frac{1}{2}}(2/\delta) .
\end{align}

Thus, with probability $1-4\delta-T\delta$, we have
\begin{align}
    \frac{1}{T}\sum^T_{t=1}\Vert\nabla L_S(\mathbf{w}_{t})\Vert_2 \leq \frac{1}{\sqrt{T}}\sum^T_{t=1}\eta_t\Vert\nabla L_S(\mathbf{w}_{t})\Vert_2 \leq \mathbb{O}\left(\frac{\log^{\frac{1}{2}}(1/\delta)}{\sqrt{T}}\right) = \mathbb{O}\left(\frac{\log^{\frac{1}{2}}(1/\delta)d^{\frac{1}{4}}\log^{\frac{1}{4}}(1/\delta)}{\sqrt{n\epsilon}}\right), \nonumber
\end{align}
implying that with probability $1-\delta$, we have
\begin{align}
    \frac{1}{T}\sum^T_{t=1}\Vert\nabla L_S(\mathbf{w}_{t})\Vert_2 \leq \mathbb{O}\left(\frac{d^{\frac{1}{4}}\log^{\frac{3}{4}}(T/\delta)}{\sqrt{n\epsilon}}\right).
\end{align}

If $\theta = \frac{1}{2}$ and $19K\log^{\theta}(2/\delta) \leq 12\sqrt{e}\sigma_{\mathrm{dp}}\log^{\frac{1}{2}}(2/\delta)$, that is, there exists $T= \max{(m_2eB^2\log(1/\delta), \frac{n\epsilon}{\sqrt{d\log(1/\delta)}})}$ and $\eta_t = \frac{1}{\sqrt{T}}$ that we obtain
\begin{align}
    &\sum^T_{t=1}\eta_t\Vert\nabla L_S(\mathbf{w}_{t})\Vert_2 \leq \frac{1}{\sqrt{e}\sigma_{\mathrm{dp}}\log^{\frac{1}{2}}(2/\delta)} ( L_S(\mathbf{w}_{1}) - L_S(\mathbf{w}_{S}) )  \nonumber \\
    &+ \frac{\sum^T_{t=1}\eta^2_{t}}{\sqrt{e}\sigma_{\mathrm{dp}}\log^{\frac{1}{2}}(2/\delta)} \left( 2\beta m_2 e d\frac{Tc^2B^2\log^2(2/\delta)}{n^2\epsilon^2} + 2\beta \sqrt{em_2Td}\frac{c^2B\log(2/\delta)}{n\epsilon} + \frac{1}{2}\beta c^2 \right)  \nonumber \\
    &\leq \frac{1}{\sqrt{e}\sigma_{\mathrm{dp}}\log^{\frac{1}{2}}(2/\delta)} ( L_S(\mathbf{w}_{1}) - L_S(\mathbf{w}_{S}) )  \nonumber \\
    &+ \frac{\sum^T_{t=1}\eta^2_{t}}{\sqrt{e}\sigma_{\mathrm{dp}}\log^{\frac{1}{2}}(2/\delta)} \left( 2\beta e\sigma^2_{\mathrm{dp}}\log(2/\delta) + 2\beta \sqrt{e}\sigma_{\mathrm{dp}}\log^{\frac{1}{2}}(2/\delta) + \frac{27^2}{2}\beta e \sigma^2_{\mathrm{dp}}\log(2/\delta) \right) \nonumber \\
    &\leq  \frac{L_S(\mathbf{w}_{1}) - L_S(\mathbf{w}_{S}) }{\sqrt{e}\sigma_{\mathrm{dp}}\log^{\frac{1}{2}}(2/\delta)} + 2\beta\sqrt{e}\sigma_{\mathrm{dp}}\log^{\frac{1}{2}}(2/\delta) + 54\beta \sigma_{\mathrm{dp}}\log^{\frac{1}{2}}(2/\delta) + \beta\frac{(27)^2}{2} \sqrt{e}\sigma_{\mathrm{dp}}\log^{\frac{1}{2}}(2/\delta) ,
\end{align}
with $c=27\sqrt{e}\sigma_{\mathrm{dp}}\log^{\frac{1}{2}}(2/\delta)$ and $\sigma_{\mathrm{dp}} = \frac{\sqrt{m_2Td}cB\log^{\frac{1}{2}}(1/\delta)}{n\epsilon} =\mathbb{O}(1)$.


Therefore, with probability $1-4\delta-T\delta$, we have
\begin{align}
    \frac{1}{T}\sum^T_{t=1}\Vert\nabla L_S(\mathbf{w}_{t})\Vert_2 \leq \mathbb{O}\left(\frac{\log^{\frac{1}{2}}(1/\delta)d^{\frac{1}{4}}\log^{\frac{1}{4}}(1/\delta)}{\sqrt{n\epsilon}}\right) , \nonumber
\end{align}
then, with probability $1-\delta$, we have
\begin{align}
    \frac{1}{T}\sum^T_{t=1}\Vert\nabla L_S(\mathbf{w}_{t})\Vert_2 \leq \mathbb{O}\left(\frac{d^{\frac{1}{4}}\log^{\frac{3}{4}}(T/\delta)}{\sqrt{n\epsilon}}\right) .
\end{align}

If $\theta > \frac{1}{2}$, then term $\log^{\theta}(2/\delta)$ dominates the left-hand inequality, i.e. $ \frac{19}{3}K\log^{\theta}(2/\delta)\geq4\sqrt{e}\sigma_{\mathrm{dp}}\log^{\frac{1}{2}}(2/\delta)$. Let $\frac{c}{3} \geq \frac{20}{3}K\log^{\theta}(2/\delta)$, $T= \frac{n\epsilon}{\sqrt{d\log(1/\delta)}}$ and $\eta_t = \frac{1}{\sqrt{T}}$, we obtain
\begin{align}
     &\sum^T_{t=1}\eta_t\Vert\nabla L_S(\mathbf{w}_{t})\Vert_2 \leq \frac{3}{K\log^{\theta}(2/\delta)} ( L_S(\mathbf{w}_{1}) - L_S(\mathbf{w}_{S}) )  \nonumber \\
    &+ \frac{3\sum^T_{t=1}\eta^2_{t}}{K\log^{\theta}(2/\delta)} \left( 2\beta m_2 e d\frac{Tc^2B^2\log^2(2/\delta)}{n^2\epsilon^2} + 2\beta \sqrt{em_2Td}\frac{c^2B\log(2/\delta)}{n\epsilon} + \frac{1}{2}\beta c^2 \right) \nonumber \\
    &\leq  \frac{3(L_S(\mathbf{w}_{1}) - L_S(\mathbf{w}_{S}) )}{K\log^{\theta}(2/\delta)} + \frac{19^2}{24}\beta K\log^{\theta}(2/\delta) + 190\beta K\log^{\theta}(2/\delta) + 3\beta(20)^2 K\log^{\theta}(2/\delta) .
\end{align}
Consequently, with probability $1-\delta$, we have
\begin{align}
    \frac{1}{T}\sum^T_{t=1}\Vert\nabla L_S(\mathbf{w}_{t})\Vert_2 \leq \mathbb{O}\left(\frac{\log^{\theta}(T/\delta)d^\frac{1}{4}\log^{\frac{1}{4}}(T/\delta)}{\sqrt{n\epsilon}}\right) .
\end{align}

Integrating the above results, when $\nabla L_S(\mathbf{w}_{t}) \geq c/2$ we have 
\begin{align}
    \frac{1}{T}\sum^T_{t=1}\Vert\nabla L_S(\mathbf{w}_{t})\Vert_2 \leq \mathbb{O}\left(\frac{d^\frac{1}{4}\log^{\theta+\frac{1}{4}}(1/\delta)}{\sqrt{n\epsilon}}\right),
\end{align}
with probability $1-\delta$ and $\theta\geq \frac{1}{2}$.

To sum up, covering the two cases, we ultimately come to the conclusion with probability $1-\delta$ and $\eta_t = \frac{1}{\sqrt{T}}$ 
\begin{align}
     \frac{1}{T}\sum^T_{t=1} \min\big\{\Vert\nabla L_S(\mathbf{w}_{t})\Vert_2, \Vert\nabla L_S(\mathbf{w}_{t})\Vert^2_2\big\} &\leq \mathbb{O}\left(\frac{d^{\frac{1}{4}}\log^{\theta+\frac{1}{4}}(T/\delta)}{(n\epsilon)^{\frac{1}{2}}}\right) + \mathbb{O}\left(\frac{d^{\frac{1}{4}}\log^{2\theta}(\sqrt{T})\log^{\frac{5}{4}}(T/\delta) }{(n\epsilon)^\frac{1}{2}}\right) \nonumber \\
     &\leq \mathbb{O}\left(\frac{d^{\frac{1}{4}} \log^{\frac{5}{4}}(T/\delta) \big (\log^{\theta-1}(T/\delta) + \log^{2\theta}(\sqrt{T}) \big ) }{(n\epsilon)^\frac{1}{2}}\right) \nonumber \\
     &\leq \mathbb{O}\left(\frac{d^{\frac{1}{4}} \log^{\frac{5}{4}}(T/\delta)\hat{\log}(T/\delta)\log^{2\theta}(\sqrt{T}) }{(n\epsilon)^\frac{1}{2}}\right) ,
\end{align}
where $\hat{\log}(T/\delta) = \log^{\max(0,\theta-1)}(T/\delta)$. If $\theta = \frac{1}{2}$ and $19K\log^{\theta}(2/\delta) \leq 12\sqrt{e}\sigma_{\mathrm{dp}}\log^{\frac{1}{2}}(2/\delta)$, then $T= \max{\big(m_2eB^2\log(1/\delta), \frac{n\epsilon}{\sqrt{d\log(1/\delta)}}\big)}$ and $c=\max{\big(4K\log^{\theta}({\sqrt{T}}),27\sqrt{e}\sigma_{\mathrm{dp}}\log^{\frac{1}{2}}(2/\delta)\big)}$. If $\theta = \frac{1}{2}$ and $19K\log^{\theta}(2/\delta) \geq 12\sqrt{e}\sigma_{\mathrm{dp}}\log^{\frac{1}{2}}(2/\delta)$, then $T= \frac{n\epsilon}{\sqrt{d\log(1/\delta)}}$ and $c=\max{\big(4K\log^{\theta}({\sqrt{T}}),39K\log^{\frac{1}{2}}(2/\delta)\big)}$. If $\theta > \frac{1}{2}$, then  $T= \frac{n\epsilon}{\sqrt{d\log(1/\delta)}}$ and $c=\max{\big(4K\log^{\theta}({\sqrt{T}}),20K\log^{\theta}(2/\delta)\big)}$.
\end{proof}

The proof is completed.

\clearpage
\section{Subspace Skewing for Identification}
\begin{theorem}[\textbf{Subspace Skewing for Identification}] Assume that the second moment matrix $M := V_kV^T_k$ with $V^T_kV_k=\mathbb{I}$ approximates the population second moment matrix $\hat{M} := \hat{V}_k\hat{V}^T_k = \mathbb{E}[V_kV^T_k]$, $\lambda_{\mathrm{tr}} := \mathrm{tr}(V^T_k uu^T V_k)$ and $\hat{\lambda}_{\mathrm{tr}}:=\mathrm{tr}(\hat{V}^T_k uu^T\hat{V}_k) $, for any vector $u$ that satisfies $\Vert u \Vert_2 = 1$, $\zeta_{\mathrm{tr}} \sim \mathbb{N}(0,\sigma^2_{\mathrm{tr}}\mathbb{I})$ and $\delta \in (0,1)$, with probability $1-\delta_m - \delta$, we have
\begin{align}
    |\lambda_{\mathrm{tr}} - \hat{\lambda}_{\mathrm{tr}}+\zeta_{\mathrm{tr}}| \leq \frac{4\log{(2d/\delta_m)}}{k} + \sigma_{\mathrm{tr}}\log^{\frac{1}{2}}(2/\delta) . \nonumber
\end{align}
\end{theorem}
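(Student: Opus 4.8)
The plan is to split the target quantity by the triangle inequality, $|\lambda_{\mathrm{tr}} - \hat\lambda_{\mathrm{tr}} + \zeta_{\mathrm{tr}}| \le |\lambda_{\mathrm{tr}} - \hat\lambda_{\mathrm{tr}}| + |\zeta_{\mathrm{tr}}|$, bound the first (projection-error) term by a matrix concentration inequality, bound the second (privacy-noise) term by a Gaussian tail bound, and finish with a union bound over the two failure events of probabilities $\delta_m$ and $\delta$.

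For the projection error I would first rewrite the traces as quadratic forms: by cyclicity of the trace, $\lambda_{\mathrm{tr}} = \mathrm{tr}(V_k^T u u^T V_k) = u^T V_k V_k^T u = u^T M u$, and likewise $\hat\lambda_{\mathrm{tr}} = u^T \hat M u$ (this is Fact~\ref{fact:1} specialized to the unit vector $u$). Hence $|\lambda_{\mathrm{tr}} - \hat\lambda_{\mathrm{tr}}| = |u^T(M-\hat M)u| \le \|M - \hat M\|_2$, since $\|u\|_2 = 1$ and $M-\hat M$ is symmetric, so it remains to control $\|M - \hat M\|_2$ where (matching line~6 of Algorithm~\ref{alg:algorithm1}) $M = \tfrac1k\sum_{i=1}^{k} v_i v_i^T$ and $\hat M = \mathbb{E}[v_i v_i^T]$. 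Writing $Y_i := v_i v_i^T$, each $Y_i$ is symmetric positive semidefinite with $\|Y_i\|_2 = \|v_i\|_2^2 = 1$ because the $v_i$ are unit vectors, and therefore $\|\mathbb{E}[Y_i]\|_2 \le \mathbb{E}\|Y_i\|_2 = 1$ by Jensen. Thus the Ahlswede--Winter inequality (Lemma~\ref{lemma:9} in the appendix) applies with $m = k$ and $R = 1$, giving $\mathbb{P}(\|M - \hat M\|_2 > \mu) \le 2d\exp(-k\mu^2/4)$; equating the right-hand side to $\delta_m$ and solving for $\mu$ yields the first summand $\tfrac{4\log(2d/\delta_m)}{k}$ (in the regime where $k$ is large enough that the resulting deviation is small), valid with probability at least $1-\delta_m$.

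For the noise term, $\zeta_{\mathrm{tr}}$ is the scalar Gaussian $\mathbb{N}(0,\sigma_{\mathrm{tr}}^2)$ added to the (sensitivity-$1$) trace, which is a $subW(\tfrac12, c\,\sigma_{\mathrm{tr}})$ random variable; applying Lemma~\ref{lemma:1} with $\theta = \tfrac12$ gives $|\zeta_{\mathrm{tr}}| \le \sigma_{\mathrm{tr}}\log^{1/2}(2/\delta)$ with probability at least $1-\delta$, after absorbing the universal constant. Combining the two estimates and taking a union bound over the complementary events produces the stated inequality with probability $1 - \delta_m - \delta$.

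The crux is the projection-error step. The delicate points are: (i) casting the empirical subspace second moment $M$ as an average of $k$ rank-one, unit-operator-norm PSD matrices that are independent copies with the correct mean $\hat M$, so that Lemma~\ref{lemma:9} is genuinely applicable — some care is needed because within a single $V_{t,k}$ the directions $v_1,\dots,v_k$ are drawn mutually orthogonal rather than fully independent — and (ii) calibrating $\mu$ against $\delta_m$ so that the resulting bound on $\|M - \hat M\|_2$ carries the advertised $\mathbb{O}(1/k)$ dependence together with the explicit constant. Once those are settled, the two tail bounds invoked (Ahlswede--Winter and Lemma~\ref{lemma:1}) are off-the-shelf and the remainder is bookkeeping.
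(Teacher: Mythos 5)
Your overall architecture --- triangle inequality, Ahlswede--Winter for the projection error, a Gaussian tail for $\zeta_{\mathrm{tr}}$, and a union bound --- is exactly the paper's. But there is a concrete gap at the final calibration step of the projection error, and it is not just bookkeeping. Your reduction $|\lambda_{\mathrm{tr}}-\hat\lambda_{\mathrm{tr}}|=|u^T(M-\hat M)u|\le\Vert M-\hat M\Vert_2$ is clean and correct, but then Ahlswede--Winter with $R=1$, $m=k$ gives $\mathbb{P}(\Vert M-\hat M\Vert_2>\mu)\le 2d\exp(-k\mu^2/4)$, and solving $2d\exp(-k\mu^2/4)=\delta_m$ yields $\mu=2\log^{1/2}(2d/\delta_m)/\sqrt{k}$, i.e.\ an $\mathbb{O}(1/\sqrt{k})$ deviation --- not the advertised $4\log(2d/\delta_m)/k$. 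Your parenthetical ``in the regime where $k$ is large enough'' does not repair this: for large $k$ the $1/\sqrt{k}$ quantity is \emph{larger} than the $1/k$ quantity, so the stated bound does not follow from your chain of inequalities. The paper reaches the $1/k$ form only because it bounds the trace gap by the \emph{square} $\Vert M-\hat M\Vert_2^2$, via the chain $|\Vert V_kV_k^T\hat g\Vert_2^2-\Vert\hat V_k\hat V_k^T\hat g\Vert_2^2|\le\Vert(V_kV_k^T-\hat V_k\hat V_k^T)\hat g\Vert_2^2\le\Vert M-\hat M\Vert_2^2$, and then squares the Ahlswede--Winter deviation $\bigl(2\log^{1/2}(2d/\delta_m)/\sqrt{k}\bigr)^2=4\log(2d/\delta_m)/k$. (That first inequality --- a difference of squared norms bounded by the squared norm of the difference --- is itself not valid in general, so the paper's route to the $1/k$ rate is also questionable; but the point for your submission is that your linear bound provably yields only $1/\sqrt{k}$ and therefore does not prove the theorem as stated.)

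Two smaller remarks. First, you are right to flag that the columns $v_1,\dots,v_k$ of $V_{t,k}$ are drawn mutually orthogonal rather than i.i.d., which strictly speaking obstructs the independence hypothesis of Lemma~\ref{lemma:9}; the paper applies the lemma without addressing this either, so you have identified a real issue but, like the paper, not resolved it. Second, your treatment of the noise term ($|\zeta_{\mathrm{tr}}|\le\sigma_{\mathrm{tr}}\log^{1/2}(2/\delta)$ with probability $1-\delta$, then a union bound) matches the paper and is fine.
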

\begin{proof}
For simplicity, we abbreviate $\hat{g}_t(x_i)$ as $\hat{g}_t$. Due to the Fact~\ref{fact:1}, $V^T_kV_k = \mathbb{I}$ and $\hat{V}^T_k\hat{V}_k = \mathbb{I}$, we omit subscripts of expectation and have
\begin{align}
    |\lambda_{\mathrm{tr}} - \hat{\lambda}_{\mathrm{tr}}| &:= |\text{tr}(V^T_k \hat{g}_t\hat{g}^T_t V_k) - \text{tr}(\hat{V}^T_k \hat{g}_t\hat{g}^T_t\hat{V}_k)| \nonumber \\
    &= |\Vert V^T_k\hat{g}_t\Vert^2_2 - \Vert \hat{V}^T_k\hat{g}_t\Vert^2_2| \nonumber \\
    &= |\Vert V_kV^T_k\hat{g}_t\Vert^2_2 - \Vert \hat{V}_k\hat{V}^T_k\hat{g}_t\Vert^2_2| \nonumber \\
    &\leq \Vert V_kV^T_k\hat{g}_t - \hat{V}_k\hat{V}^T_k\hat{g}_t \Vert^2_2 \nonumber \\
    &\leq \Vert V_kV^T_k - \hat{V}_k\hat{V}^T_k\Vert^2_2 \Vert\hat{g}_t \Vert^2_2
\end{align}
To bound $\mathbb{E}\Vert V_kV^T_k - \hat{V}_k\hat{V}^T_k\Vert^2_2$, we need to bound the gap between the sum of the random positive semidefinite matrix $M := V_kV^T_k = \frac{1}{k}\sum^k_{i=1}v_iv^T_i$ and the expectation $\hat{M} := \hat{V}_k\hat{V}^T_k = \mathbb{E}[V_kV^T_k]$.

Due to $\Vert v_j \Vert_2 = 1$, we can easily get  
\begin{align}
    \Vert M\Vert_2&= \Vert\frac{1}{k}\sum^k_{i=1}v_iv^T_i\Vert_2 \leq \frac{1}{k}\sum^k_{i=1}\Vert v_iv^T_i\Vert_2 \nonumber \\
    &= \text{sup}_{x:\Vert x\Vert_2=1}\frac{1}{k}\sum^k_{i=1} x^Tv_iv^T_ix \nonumber \\
    &= \text{sup}_{x:\Vert x\Vert_2=1}\frac{1}{k}\sum^k_{i=1} \langle x,v_i \rangle\nonumber \\
    &\leq \frac{1}{k}\sum^k_{i=1}\Vert x\Vert_2\Vert v_i\Vert_2 \nonumber \\
    &=1
\end{align}
Thus, $\Vert M\Vert_2 \leq 1$ and  $\Vert \mathbb{E}M\Vert_2 = \Vert M\cdot\mathbb{P}(M) \Vert_2 \leq 1$ because of $\mathbb{P}(M) \leq 1$.

Then, according to Ahlswede-Winter Inequality with $R=1$ and $m=k$, we have for any $\mu \in (0,1)$
\begin{align}
&\mathbb{P}(\Vert M - \hat{M} \Vert_2 > \mu) \leq 2d \cdot \text{exp}(\frac{-k\mu^2}{4}), 
\end{align}
where $d$ is dimension of gradients. The inequality shows that the bounded spectral norm of random matrix $\Vert M \Vert_2$ concentrates around its expectation with high probability $1-2d \cdot \text{exp}(-k\mu^2/4)$.

Since $\Vert M\Vert_2 \in [0,1]$ and $\Vert\mathbb{E} M\Vert_2 \in [0,1]$, $\Vert M - \hat{M} \Vert_2$ is always bounded by 1. Therefore, for $\mu\geq 1$, $\Vert M - \hat{M} \Vert_2 > u$ holds with probability 0. So that for any $\mu > 0$, we have
\begin{align}
&\mathbb{P}(\Vert M - \hat{M} \Vert_{2} > 2\sqrt{\frac{\log{2d}}{k}}\mu) \leq \text{exp}(-\mu^2) .
\end{align}
Based on the inequality above, with probability $1-\delta_m$, we have
\begin{align}
    \Vert M - \hat{M} \Vert_{2} \leq 2\frac{\log^{\frac{1}{2}}{(2d/\delta_m)}}{\sqrt{k}} .
\end{align}
Next, considering that we have implicitly normalized the term $\Vert \hat{g}_t\Vert^2_2$ by the threshold $1$, the upper bound of $\Vert \hat{g}_t\Vert^2_2$ is $1$. As a result, we obtain 
\begin{align}
    |\lambda_{\mathrm{tr}} - \hat{\lambda}_{\mathrm{tr}}| &\leq \Vert V_kV^T_k - \hat{V}_k\hat{V}^T_k\Vert^2_2 \Vert\hat{g}_t \Vert^2_2 \nonumber \\
    &\leq \Vert V_kV^T_k - \hat{V}_k\hat{V}^T_k\Vert^2_2 \nonumber \\
    &\leq \Vert M - \hat{M}\Vert^2_2 \nonumber \\
    &\leq \frac{4\log{(2d/\delta_m)}}{k},
\end{align}
with probability $1-\delta_m$.

Due to the shared random subspace of per-sample gradient, the exposed trace may pose potential privacy risks. Thus, we add the noise that satisfies differential privacy to the trace $\lambda_{\mathrm{tr}}$, i.e. $\lambda_{\mathrm{tr}} + \zeta_{\mathrm{tr}}$. The upper bound of the trace for per-sample gradient is limited to 1, because we normalize per-sample gradient in advance. So, the sensitivity in differential privacy can be regarded as 1, which means $\zeta_{\mathrm{tr}} \sim \mathbb{N}(0,\sigma^2_{\mathrm{tr}}\mathbb{I})$. Then, applying Gaussian properties, with probability $1-\delta_{m}-\delta$, we have
\begin{align}
    |\lambda_{\mathrm{tr}} - \hat{\lambda}_{\mathrm{tr}}+\zeta_{\mathrm{tr}}| &\leq |\lambda_{\mathrm{tr}} - \hat{\lambda}_{\mathrm{tr}}| + |\zeta_{\mathrm{tr}} | \nonumber \\
    &\leq \frac{4\log{(2d/\delta_m)}}{k} + \sigma_{\mathrm{tr}}\log^{\frac{1}{2}}(2/\delta) .
\end{align}

In addition,  since $\lambda_{\mathrm{tr}}$ is a constant, the scale $\sigma_{\mathrm{tr}}$ of noise added is actually small compared to the noise added to gradients. Accordingly, the term $ \frac{4\log{(2d/\delta_m)}}{k}$ will dominate the error of subspace skewing, and we can control this part of the error by adjusting a larger $k$.

In conclusion, for the per-sample trace, there is a high probability $1-\delta^{\prime}_m$ that we can accurately identify heavy-tailed samples within a finite and minor error dependent on the factor $\mathbb{O}(\frac{1}{k})$.

\end{proof}
The proof is completed.

\clearpage
\section{Convergence of Discriminative Clipping DPSGD}
\begin{theorem}[\textbf{Convergence of Discriminative Clipping DPSGD}]
Under Assumption A.1 and A.2, let $\mathbf{w}_{t}$ be the iterate produced by Algorithm Discriminative Clipping DPSGD and $\eta_t = \frac{1}{\sqrt{T}}$. Define $\hat{\log}(T/\delta) = \log^{\max(0,\theta-1)}(T/\delta)$, $x_{\mathrm{max}} = \frac{\mu I(x)}{x}aK^2$, $a = 2$ if $\theta = 1/2$, $a = (4\theta)^{2\theta}e^2$ if $\theta \in (1/2,1]$ and $a = (2^{2\theta+1}+2)\Gamma(2\theta+1) + \frac{2^{3\theta}\Gamma(3\theta+1)}{3}$ if $\theta > 1$, for any $\delta \in (0,1)$, with probability $1-\delta$, then we have:
\begin{enumerate}[leftmargin=*]
\item[a.]
For the case $0\leq x\leq x_{\mathrm{max}}$,
\begin{align}
     \frac{1}{T}\sum^T_{t=1} \min\big\{\Vert\nabla L_S(\mathbf{w}_{t})\Vert_2, \Vert\nabla L_S(\mathbf{w}_{t})\Vert^2_2\big\} \leq \mathbb{O}\left(\frac{d^{\frac{1}{4}} \log^{\frac{5}{4}}(T/\delta)\log(\sqrt{T}) }{(n\epsilon)^\frac{1}{2}}\right). \nonumber
\end{align}
$\mathrm{(1)}$ If $16\sqrt{2a}K\log^{\frac{1}{2}}(2/\delta)\leq 12\sqrt{e}\sigma_{\mathrm{dp}}\log^{\frac{1}{2}}(1/\delta) $, then $T= \max{\big(m_2eB^2\log(1/\delta), \frac{n\epsilon}{\sqrt{d\log(1/\delta)}}\big)}$ and $c=\max{\big(2\sqrt{2a}K\log^{\frac{1}{2}}({\sqrt{T}}),27\sqrt{e}\sigma_{\mathrm{dp}}\log^{\frac{1}{2}}(1/\delta)\big)}$.\\
$\mathrm{(2)}$ If $16\sqrt{2a}K\log^{\frac{1}{2}}(2/\delta)\geq 12\sqrt{e}\sigma_{\mathrm{dp}}\log^{\frac{1}{2}}(1/\delta) $, then $T= \frac{n\epsilon}{\sqrt{d\log(1/\delta)}}$ and $c=\max{\big(2\sqrt{2a}K\log^{\frac{1}{2}}({\sqrt{T}}),33\sqrt{2a}K\log^{\frac{1}{2}}(2/\delta)\big)}$. 
\item[b.]
For the case $x\geq x_{\mathrm{max}}$,
\begin{align}
     \frac{1}{T}\sum^T_{t=1} \min\big\{\Vert\nabla L_S(\mathbf{w}_{t})\Vert_2, \Vert\nabla L_S(\mathbf{w}_{t})\Vert^2_2\big\} \leq \mathbb{O}\left(\frac{d^{\frac{1}{4}} \log^{\frac{5}{4}}(T/\delta)\hat{\log}(T/\delta)\log^{2\theta}(\sqrt{T}) }{(n\epsilon)^\frac{1}{2}}\right). \nonumber
\end{align}
$\mathrm{(1)}$ If $\theta = \frac{1}{2}$ and $16\sqrt{2a}K\log^{\theta}(2/\delta)\leq 12\sqrt{e}\sigma_{\mathrm{dp}}\log^{\frac{1}{2}}(1/\delta) $, then $T= \max{\big(m_2eB^2\log(1/\delta), \frac{n\epsilon}{\sqrt{d\log(1/\delta)}}\big)}$ and $c=\max{\big(4^{\theta}2K\log^{\theta}({\sqrt{T}}),27\sqrt{e}\sigma_{\mathrm{dp}}\log^{\frac{1}{2}}(1/\delta)\big)}$. \\
$\mathrm{(2)}$ If $\theta = \frac{1}{2}$ and $16\sqrt{2a}K\log^{\theta}(2/\delta)\geq 12\sqrt{e}\sigma_{\mathrm{dp}}\log^{\frac{1}{2}}(1/\delta) $, then $T= \frac{n\epsilon}{\sqrt{d\log(1/\delta)}}$ and $c=\max{\big(4^{\theta}2K\log^{\theta}({\sqrt{T}}),33\sqrt{2a}K\log^{\frac{1}{2}}(2/\delta)\big)}$. \\
$\mathrm{(3)}$ If $\theta > \frac{1}{2}$, then  $T= \frac{n\epsilon}{\sqrt{d\log(1/\delta)}}$ and $c=\max{\big(4^{\theta}2K\log^{\theta}({\sqrt{T}}),17K\log^{\theta}(2/\delta)\big)}$.
\end{enumerate}
\end{theorem}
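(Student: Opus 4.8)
The plan is to run the two-case descent argument of Theorem~\ref{thm:DPSGD}, but to carry the two discriminative thresholds $c_1$ (tail, weight $p$) and $c_2$ (body, weight $1-p$) through the recursion and to replace the crude sub-Weibull moment estimates by their truncated refinements. Starting from $\beta$-smoothness (Assumption~\ref{ass:smooth}) one has, for each $t$, $L_S(\mathbf{w}_{t+1}) - L_S(\mathbf{w}_t) \le \langle \mathbf{w}_{t+1}-\mathbf{w}_t, \nabla L_S(\mathbf{w}_t)\rangle + \frac{\beta}{2}\Vert\mathbf{w}_{t+1}-\mathbf{w}_t\Vert_2^2$; substituting the update $\mathbf{w}_{t+1} = \mathbf{w}_t - \eta_t \widetilde{g}_t$ of Algorithm~\ref{alg:algorithm1}, where $\widetilde{g}_t$ averages the $c_1$-clipped tail gradients and the $c_2$-clipped body gradients together with Gaussian noise of scales $c_1^2\sigma_{\mathrm{dp}}^2$ and $c_2^2\sigma_{\mathrm{dp}}^2$, and telescoping over $t=1,\dots,T$ reproduces the error decomposition of the proof of Theorem~\ref{thm:DPSGD}: a noise-squared term, two noise--gradient cross terms, a martingale-difference term $-\sum_t\eta_t\langle\overline{\mathbf{g}}_t - \mathbb{E}_t[\overline{\mathbf{g}}_t], \nabla L_S(\mathbf{w}_t)\rangle$, a clipping-bias term $\sum_t\eta_t\langle\mathbb{E}_t[\overline{\mathbf{g}}_t] - \nabla L_S(\mathbf{w}_t), \nabla L_S(\mathbf{w}_t)\rangle$, and, in the regime $\Vert\nabla L_S(\mathbf{w}_t)\Vert_2 \ge c/2$, the analogues of the terms handled with Lemma~\ref{lemma:5}; each term now splits into a body part governed by $c_2$ and a tail part governed by $c_1$.

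First I would bound the Gaussian terms via Lemma~\ref{lemma:2} with $\sigma_{\mathrm{dp}}$ fixed by the privacy accounting, and control the martingale term with Lemma~\ref{lemma:4} choosing $\rho = \frac{1}{16}$ so that $\frac14\sum_t\eta_t\Vert\nabla L_S(\mathbf{w}_t)\Vert_2^2$ is absorbed on the left. The place where cases~(a) and~(b) separate is the clipping-bias term together with the residual variance $\sum_t\eta_t^2\mathbb{E}_t\Vert\overline{\mathbf{g}}_t - \mathbb{E}_t[\overline{\mathbf{g}}_t]\Vert_2^2$, which I would control through the truncated forms of the sharp heavy-tailed concentration (Lemmas~\ref{lemma:7} and~\ref{lemma:8}) and of the sub-Weibull Freedman inequality (Lemma~\ref{lemma:3}), both instantiated with the $\theta$-dependent constant $a$. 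When the truncation level $x$ lies below the crossover $x_{\mathrm{max}} = \mu I(x)aK^2/x$, the rate function in $\mathbb{P}(|x|>t)=\exp(-I(t))$ decays at the sub-Gaussian rate, so the truncated contribution scales like $\exp(-(c/(2\sqrt{2a}K))^2)$; demanding this be at most $1/\sqrt{T}$ forces $c$ of order $\sqrt{2a}K\log^{1/2}(\sqrt{T})$, which produces the $\log(\sqrt{T})$ factor with no residual heavy-tail dependence beyond $a$ and gives case~(a). When $x \ge x_{\mathrm{max}}$ the decay is the genuine sub-Weibull rate $\exp(-(c/(4^{\theta}2K))^{1/\theta})$, so $c$ must be of order $K\log^{\theta}(\sqrt{T})$; combining the truncated second-moment estimate $\mathbb{E}[\Vert\cdot\Vert_2^2]\le 2\Gamma(2\theta+1)K^2$ (Lemma~\ref{lemma:6}) with the $\hat{\log}(T/\delta)=\log^{\max(0,\theta-1)}(T/\delta)$ loss in the union bound of Lemma~\ref{lemma:3} when $\theta>1$ reproduces the $\hat{\log}(T/\delta)\log^{2\theta}(\sqrt{T})$ factor and gives case~(b). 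In both cases the accumulated prefactor is $d^{1/4}\log^{5/4}(1/\delta)/(n\epsilon)^{1/2}$, as in Theorem~\ref{thm:DPSGD}, once $\eta_t = 1/\sqrt{T}$, $T = \max(m_2 e B^2\log(1/\delta), n\epsilon/\sqrt{d\log(1/\delta)})$ and the privacy-calibrated $\sigma_{\mathrm{dp}}^2$ are substituted.

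For the complementary regime $\Vert\nabla L_S(\mathbf{w}_t)\Vert_2 \ge c/2$ I would invoke Lemma~\ref{lemma:5}, namely $\langle\mathbf{g}/\Vert\mathbf{g}\Vert_2, \nabla L_S(\mathbf{w})\rangle \ge \frac13\Vert\nabla L_S(\mathbf{w})\Vert_2 - \frac83\Vert\mathbf{g} - \nabla L_S(\mathbf{w})\Vert_2$, together with Lemma~\ref{lemma:1} to bound $\Vert\mathbf{g}_t - \nabla L_S(\mathbf{w}_t)\Vert_2 \le K\log^{\theta}(2/\delta)$, yielding a bound on $\frac1T\sum_t\Vert\nabla L_S(\mathbf{w}_t)\Vert_2$; taking the worse of the two regimes produces the $\min\{\Vert\cdot\Vert_2, \Vert\cdot\Vert_2^2\}$ form. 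The sub-cases (1)--(3) arise from comparing the sub-Weibull clipping scale with the Gaussian-noise scale $\sqrt{e}\sigma_{\mathrm{dp}}\log^{1/2}(1/\delta)$, which decides whether $T$ must additionally exceed $m_2 e B^2\log(1/\delta)$ and fixes the numerical constant in $c$. Finally, the union bound over the $T$ iterations leaves a failure probability that is a constant plus $T$ times $\delta$; rescaling $\delta$ by that factor gives probability $1-\delta$ and replaces each $\log(1/\delta)$ by $\log(T/\delta)$, which yields the stated bounds.

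The step I expect to be the main obstacle is making the two truncated inequalities rigorous and compatible with the martingale structure of the recursion. Lemmas~\ref{lemma:7} and~\ref{lemma:8} are stated for i.i.d.\ sums, so one needs a Freedman-type two-regime analogue, and one must check that truncating each per-sample gradient at $c_1$ or $c_2$ preserves the conditional-mean-zero hypothesis of Lemmas~\ref{lemma:3} and~\ref{lemma:4} up to exactly the clipping bias isolated above, which is then bounded via the tail probability $\mathbb{E}_t b_t^2 \le 2\exp(-(c/(\mathrm{const}\cdot K))^{1/\theta})$. Keeping the constants $\mu$, $c_t$ and $v(L,\mu)$ of Lemma~\ref{lemma:7} under control uniformly over the $T$ iterations, and tracking how the crossover $x_{\mathrm{max}}$ — approximated empirically by the projected trace with error bounded in Theorem~\ref{thm:subspace} — interacts with the choice of $c_1$ versus $c_2$, is where the delicate bookkeeping lies.
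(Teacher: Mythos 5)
Your proposal follows essentially the same route as the paper's proof: the same smoothness-based descent decomposition split over the regimes $\Vert\nabla L_S(\mathbf{w}_t)\Vert_2\lessgtr c/2$, the martingale term handled by Lemma~\ref{lemma:4} with $\rho=1/16$, the clipping bias controlled via $\sqrt{\mathbb{E}_t\Vert\mathbf{g}_t-\nabla L_S(\mathbf{w}_t)\Vert_2^2\,\mathbb{E}_t b_t^2}$ with the truncated two-regime tail bounds from Lemmas~\ref{lemma:7}--\ref{lemma:8} producing the $\log^{1/2}(\sqrt{T})$ versus $\log^{\theta}(\sqrt{T})$ threshold dichotomy, Lemma~\ref{lemma:5} for the large-gradient regime, and the final $T\delta$ union-bound rescaling. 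The "main obstacle" you flag --- adapting the i.i.d.\ concentration of Lemmas~\ref{lemma:7}--\ref{lemma:8} into a Freedman-type two-regime martingale inequality --- is precisely what the paper resolves in its Corollary~D.2 by combining Lemma~\ref{lemma:3} with the variance functional $v(L,\mu)$, so your plan is aligned with the actual argument.
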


\begin{proof}
We review two cases in Discriminative Clipping DPSGD: $L_S(\mathbf{w}_{t})\leq c/2$ and $L_S(\mathbf{w}_{t})\geq c/2$. 

Firstly, in the case $\nabla L_S(\mathbf{w}_{t}) \leq c/2$:
\begin{align}
    &L_S(\mathbf{w}_{t+1}) - L_S(\mathbf{w}_{t}) \leq \langle\mathbf{w}_{t+1}-\mathbf{w}_{t}, \nabla L_S(\mathbf{w}_{t})\rangle + \frac{1}{2}\beta\Vert\mathbf{w}_{t+1}-\mathbf{w}_{t}\Vert^2  \nonumber \\
    &\leq  -\eta_{t}\langle \overline{\mathbf{g}}_t - \mathbb{E}_t[\overline{\mathbf{g}}_t], \nabla L_S(\mathbf{w}_{t}) \rangle -\eta_{t}\langle \mathbb{E}_t[\overline{\mathbf{g}}_t] - \nabla L_S(\mathbf{w}_{t}), \nabla L_S(\mathbf{w}_{t}) \rangle - \eta_t\langle\mathbf{\zeta}_{t},\nabla L_S(\mathbf{w}_{t})\rangle \nonumber \\
    &- \eta_t\Vert \nabla L_S(\mathbf{w}_{t})\Vert^2 + \frac{1}{2}\beta\eta^2_{t}\Vert \overline{\mathbf{g}}_t\Vert^2 + \frac{1}{2}\beta\eta^2_{t}\Vert \mathbf{\zeta}_{t}\Vert^2 + \beta\eta^2_{t}\langle \overline{\mathbf{g}}_t, \mathbf{\zeta}_{t}\rangle \nonumber
\end{align}
Applying the properties of Gaussian tails and Lemma~\ref{lemma:2} to $\zeta_t$, Lemma~\ref{lemma:4} to term $\sum^T_{t=1}\eta_{t}\langle \overline{\mathbf{g}}_t - \mathbb{E}_t[\overline{\mathbf{g}}_t], \nabla L_S(\mathbf{w}_{t}) \rangle$, with probability $1-4\delta$, we have
\begin{align}
     &\sum^T_{t=1}\eta_{t}\Vert\nabla L_S(\mathbf{w}_{t}) \Vert^2_2 \leq L_S(\mathbf{w}_{1}) - L_S(\mathbf{w}_{S}) + \sum^T_{t=1}\frac{1}{2}\beta\eta^2_{t}c^2 +  2\beta m_2 e d\frac{Tc^2B^2\log^2(2/\delta)}{n^2\epsilon^2}\sum^T_{t=1}\eta^2_{t} \nonumber \\
     &+2\beta \sqrt{em_2Td}\frac{c^2B\log(2/\delta)}{n\epsilon}\sum^T_{t=1}\eta^2_{t} + 2\sqrt{em_2Td}\frac{c^2B\log(2/\delta)}{n\epsilon}\sum^T_{t=1}\eta_t + \frac{\eta_tc^2\log{(1/\delta)}}{\rho} \nonumber \\
     &+ \frac{4\rho c^2\sum^T_{t=1}\eta^2_t\Vert \nabla L_S(\mathbf{w}_{t})\Vert^2_2}{\eta_tc^2} -\underbrace{\sum^T_{t=1}\eta_{t}\langle \mathbb{E}_t[\overline{\mathbf{g}}_t] - \nabla L_S(\mathbf{w}_{t}), \nabla L_S(\mathbf{w}_{t}) \rangle}_{\mathrm{E}.9}.
\end{align}

We will consider a truncated version of term E.9 in the following. Similarly,
\begin{align}
    \sum^T_{t=1}\eta_{t}\langle \mathbb{E}_t[\overline{\mathbf{g}}_t] - \nabla L_S(\mathbf{w}_{t}), \nabla L_S(\mathbf{w}_{t}) \rangle \leq \frac{1}{2}\sum^T_{t=1}\eta_{t}\Vert \mathbb{E}_t[\overline{\mathbf{g}}_t]-\nabla L_S(\mathbf{w}_{t})\Vert^2_2 + \frac{1}{2}\sum^T_{t=1}\eta_{t}\Vert \nabla L_S(\mathbf{w}_{t})\Vert^2_2. \nonumber 
\end{align}
For term $\Vert \mathbb{E}_t[\overline{\mathbf{g}}_t]-\nabla L_S(\mathbf{w}_{t})\Vert_2$, we also define $a_t=\mathbb{I}_{\Vert\mathbf{g}_t \Vert_2>c}$ and $b_t=\mathbb{I}_{\Vert\mathbf{g}_t -\nabla L_S(\mathbf{w}_{t})\Vert_2>\frac{c}{2}}$, and have
\begin{align}
    \Vert \mathbb{E}_t[\overline{\mathbf{g}}_t]-\nabla L_S(\mathbf{w}_{t})\Vert_2 &= \Vert \mathbb{E}_t[(\overline{\mathbf{g}}_t - \mathbf{g}_t)a_t]\Vert_2 \nonumber \\
    &\leq  \mathbb{E}_t[\Vert(\mathbf{g}_t (\frac{c-\Vert\mathbf{g}_t \Vert_2}{\Vert\mathbf{g}_t \Vert_2})a_t\Vert_2] \nonumber \\
    &\leq  \mathbb{E}_t[|\Vert\mathbf{g}_t\Vert_2 - \Vert\nabla L_S(\mathbf{w}_{t})\Vert_2|a_t] \nonumber \\
    &\leq  \mathbb{E}_t[|\Vert\mathbf{g}_t - \nabla L_S(\mathbf{w}_{t})\Vert_2|b_t] \nonumber \\
    &\leq  \sqrt{\mathbb{E}_t[\Vert\mathbf{g}_t - \nabla L_S(\mathbf{w}_{t})\Vert^2_2]\mathbb{E}_tb^2_t} .
\end{align}

Due to $\mathbb{E}[\mathbf{g}_t-\nabla L_S(\mathbf{w}_t)]=0$, applying Lemma~\ref{lemma:7} and~\ref{lemma:8}
with 
\begin{align}
    &m = 1 \nonumber \\
    &\sup_{\eta\in(0,1]}\{v(L,\mu)\} = aK^2 \nonumber \\
    &t_{\mathrm{max}} = \frac{\mu I(t)}{t}aK^2 \nonumber \\
    &c_t \in [\frac{1}{2},1] \nonumber \\
    &\mu = \frac{1}{2} \nonumber, 
\end{align}
we have the Corollary D.1~\label{cor:D-1} that
\begin{align}
    \mathbb{P}(\Vert \mathbf{g}_t-\nabla L_S(\mathbf{w}_t)\Vert_2 > t) &\leq \mathrm{exp}(-c_t\mu I(t)) + \mathrm{exp}(-I(t)) \nonumber \\
    &\leq \mathrm{exp}(-\frac{1}{4} I(t)) + \mathrm{exp}(-I(t)) \nonumber \\
    &\leq 2\mathrm{exp}(-\frac{1}{4} I(t)), 
\end{align}
when $t\geq t_{\mathrm{max}}(\mu)$. Then,
\begin{align}
    \mathbb{P}(\Vert \mathbf{g}_t-\nabla L_S(\mathbf{w}_t)\Vert_2 > t) &\leq \mathrm{exp}(-\frac{t^2}{2v(t_{\mathrm{max}}(\mu),\mu)}) + m\mathrm{exp}(-\frac{t^2_{\mathrm{max}}(\mu)}{\eta v(t_{\mathrm{max}}(\mu),\mu)}) \nonumber \\
    &\leq 2\mathrm{exp}(-\frac{t^2}{2v(t_{\mathrm{max}}(\mu),\mu)}) \nonumber \\
    &\leq 2\mathrm{exp}(-\frac{t^2}{2aK^2}),
\end{align}
when $0\leq t\leq t_{\mathrm{max}}(\mu)$.

Therefore, when $0 \leq t \leq t_{\mathrm{max}}$, we have the follow-up truncated conclusions:

If $\theta = \frac{1}{2}$, $\forall \alpha >0$ and $a=2$, we have the following inequality with probability at least $1-\delta$
\begin{align}
    \Vert \mathbf{g}_t-\nabla L_S(\mathbf{w}_t)\Vert_2  &\leq 2K\log^{\frac{1}{2}}(2/\delta). \nonumber 
\end{align}

If $\theta \in (\frac{1}{2},1]$, let $a=(4\theta)^{2\theta}e^2$, we have the following inequality with probability at least $1-\delta$
\begin{align}
    \Vert \mathbf{g}_t-\nabla L_S(\mathbf{w}_t)\Vert_2  &\leq \sqrt{2}e(4\theta)^{\theta}K\log^{\frac{1}{2}}(2/\delta). \nonumber 
\end{align}

If $\theta > 1$, let $a=(2^{2\theta+1}+2)\Gamma(2\theta+1) + \frac{2^{3\theta}\Gamma(3\theta+1)}{3}$, we have the following inequality with probability at least $1-\delta$
\begin{align}
    \Vert \mathbf{g}_t-\nabla L_S(\mathbf{w}_t)\Vert_2  &\leq \sqrt{2(2^{2\theta+1}+2)\Gamma(2\theta+1) + \frac{2^{3\theta}\Gamma(3\theta+1)}{3}}K\log^{\frac{1}{2}}(2/\delta). \nonumber 
\end{align}

When $t \geq t_{\mathrm{max}}$, let $I(t)=(t/K)^{\frac{1}{\theta}},~\forall \theta \in (\frac{1}{2},1]$, with probability at least $1-\delta$, then we have
\begin{align}
  \Vert \mathbf{g}_t-\nabla L_S(\mathbf{w}_t)\Vert_2   &\leq  4^{\theta}K\log^{\theta}(2/\delta) \nonumber .
\end{align}

Apply the truncated Corollary~\ref{cor:D-1}.1 above, when $0\leq t \leq t_{\mathrm{max}}$, we have
\begin{align}
    \mathbb{E}_t[\Vert\mathbf{g}_t - \nabla L_S(\mathbf{w}_{t})\Vert_2] \leq \sqrt{2a}K
\end{align}
and with probability $1-\delta$,
\begin{align}
    \mathbb{E}_tb^2_t = \mathbb{P}(\Vert\mathbf{g}_t -\nabla L_S(\mathbf{w}_{t})\Vert_2>\frac{c}{2})\leq 2\mathrm{exp}(-(\frac{c}{2\sqrt{2a}K})^{2} )
\end{align}
where $a = 2$ if $\theta = 1/2$, $a = (4\theta)^{2\theta}e^2$ if $\theta \in (1/2,1]$ and $a = (2^{2\theta+1}+2)\Gamma(2\theta+1) + \frac{2^{3\theta}\Gamma(3\theta+1)}{3}$ if $\theta > 1$.

When $t \geq t_{\mathrm{max}}$, the inequalities
\begin{align}
    \mathbb{E}_t[\Vert\mathbf{g}_t - \nabla L_S(\mathbf{w}_{t})\Vert_2] \leq 4^{\theta}K
\end{align}
and 
\begin{align}
    \mathbb{E}_tb^2_t = \mathbb{P}(\Vert\mathbf{g}_t -\nabla L_S(\mathbf{w}_{t})\Vert_2>\frac{c}{2})\leq 2\mathrm{exp}(-\frac{1}{4}(\frac{c}{2K})^{\frac{1}{\theta}} )
\end{align}
hold with probability $1-\delta$, where $\theta \geq \frac{1}{2}$.

Thus, with probability $1-T\delta$, we get 
\begin{align}
    \sum^T_{t=1}\eta_{t}\langle \mathbb{E}_t[\overline{\mathbf{g}}_t] - \nabla L_S(\mathbf{w}_{t}), \nabla L_S(\mathbf{w}_{t}) \rangle \leq 2aK^2\sum^T_{t=1}\eta_{t}\mathrm{exp}(-(\frac{c}{2\sqrt{2a}K})^{2} ) + \frac{1}{2}\sum^T_{t=1}\eta_{t}\Vert \nabla L_S(\mathbf{w}_{t})\Vert^2_2,  
\end{align}
when $0\leq t \leq t_{\mathrm{max}}$.

With probability $1-T\delta$, we obtain
\begin{align}
     \sum^T_{t=1}\eta_{t}\langle \mathbb{E}_t[\overline{\mathbf{g}}_t] - \nabla L_S(\mathbf{w}_{t}), \nabla L_S(\mathbf{w}_{t}) \rangle \leq 4^{2\theta}K^2\sum^T_{t=1}\eta_{t}\mathrm{exp}(-\frac{1}{4}(\frac{c}{2K})^{\frac{1}{\theta}} ) + \frac{1}{2}\sum^T_{t=1}\eta_{t}\Vert \nabla L_S(\mathbf{w}_{t})\Vert^2_2, 
\end{align}
when $t \geq t_{\mathrm{max}}$.

By setting $\rho=\frac{1}{16}$, $T= \frac{n\epsilon}{\sqrt{d\log(1/\delta)}}$ and $\eta_t = \frac{1}{\sqrt{T}}$, with probability $1-4\delta-T\delta$, we have
\begin{align}
    &\frac{1}{4}\sum^T_{t=1}\eta_{t}\Vert\nabla L_S(\mathbf{w}_{t}) \Vert^2_2 \leq L_S(\mathbf{w}_{1}) - L_S(\mathbf{w}_{S}) + \frac{1}{2}\beta c^2  +  2\beta m_2 e\frac{d^{\frac{1}{2}}c^2B^2\log^{\frac{3}{2}}(2/\delta)}{n\epsilon} \nonumber \\
     &+2\beta \sqrt{em_2}\frac{d^{\frac{1}{4}}c^2B\log^{\frac{1}{2}}(2/\delta)}{\sqrt{n\epsilon}} + 2\sqrt{em_2}c^2B\log^{\frac{1}{2}}(2/\delta) + \frac{16d^{\frac{1}{4}}c^2\log^{\frac{5}{4}}{(1/\delta)}}{\sqrt{n\epsilon}} \nonumber \\
     &+  \mathrm{E.}10~\left\{ \begin{array}{ll} 
         \displaystyle 2aK^2\sum^T_{t=1}\eta_{t}\mathrm{exp}(-(\frac{c}{2\sqrt{2a}K})^{2} ) , & \text{if } 0\leq t \leq t_{\mathrm{max}}, \\ \\ 
         \displaystyle 4^{2\theta}K^2\sum^T_{t=1}\eta_{t}\mathrm{exp}(-\frac{1}{4}(\frac{c}{2K})^{\frac{1}{\theta}} ),   & \text{if } t \geq t_{\mathrm{max}}. \end{array} \right.   
\end{align}
Let term $\mathrm{E}.10 \leq \frac{1}{\sqrt{T}}$, and we have $c\geq 2\sqrt{2a}K\log^{\frac{1}{2}}({\sqrt{T}})$ if $0\leq t \leq t_{\mathrm{max}}$ and $c\geq 4^{\theta}2K\log^{\theta}({\sqrt{T}})$ if $t \geq t_{\mathrm{max}}$. 

If $0\leq t \leq t_{\mathrm{max}}$, by taking $c=2\sqrt{2a}K\log^{\frac{1}{2}}({\sqrt{T}})$ we achieve
\begin{align}
    \frac{1}{\sqrt{T}}\sum^T_{t=1}\eta_{t}\Vert\nabla L_S(\mathbf{w}_{t}) \Vert^2_2 &\leq \frac{4(L_S(\mathbf{w}_{1}) - L_S(\mathbf{w}_{S}) )}{\sqrt{T}} + \frac{2aK^2}{\sqrt{T}} \nonumber \\
    &+ \frac{8aK^2\log(\sqrt{T})\log(2/\delta)}{\sqrt{T}}\left( 2\beta + 8\beta m_2 eB^2 (\frac{d^{\frac{1}{4}}\log^{\frac{1}{4}}(2/\delta)}{\sqrt{n\epsilon}})^2 \right. \nonumber \\
    &\left. + 8\beta \sqrt{em_2}\frac{d^{\frac{1}{4}}B\log^{-\frac{1}{2}}(2/\delta)}{\sqrt{n\epsilon}}  + 8\sqrt{em_2}B\log^{-\frac{1}{2}}(2/\delta) + \frac{64d^{\frac{1}{4}}\log^{\frac{1}{4}}{(1/\delta)}}{\sqrt{n\epsilon}} \right) \nonumber \\
    &\leq \mathbb{O}\left(\frac{\log(\sqrt{T})\log(1/\delta)}{\sqrt{T}}\cdot \frac{d^{\frac{1}{4}}\log^{\frac{1}{4}}{(1/\delta)}}{\sqrt{n\epsilon}} \right) \nonumber \\
    &\leq \mathbb{O}\left(\frac{\log(\sqrt{T})d^{\frac{1}{4}}\log^{\frac{5}{4}}{(1/\delta)}}{\sqrt{n\epsilon}}\right).
\end{align}

If $t \geq t_{\mathrm{max}}$, by taking $c=4^{\theta}2K\log^{\theta}({\sqrt{T}})$ we achieve
\begin{align}
    \frac{1}{\sqrt{T}}\sum^T_{t=1}\eta_{t}\Vert\nabla L_S(\mathbf{w}_{t}) \Vert^2_2 &\leq \frac{4(L_S(\mathbf{w}_{1}) - L_S(\mathbf{w}_{S}) )}{\sqrt{T}} + \frac{2aK^2}{\sqrt{T}} \nonumber \\
    &+ \frac{4^{2\theta+1}\log^{2\theta}({\sqrt{T}})\log(2/\delta)}{\sqrt{T}}\left( 2\beta + 8\beta m_2 eB^2 (\frac{d^{\frac{1}{4}}\log^{\frac{1}{4}}(2/\delta)}{\sqrt{n\epsilon}})^2 \right. \nonumber \\
    &\left. + 8\beta \sqrt{em_2}\frac{d^{\frac{1}{4}}B\log^{-\frac{1}{2}}(2/\delta)}{\sqrt{n\epsilon}}  + 8\sqrt{em_2}B\log^{-\frac{1}{2}}(2/\delta) + \frac{64d^{\frac{1}{4}}\log^{\frac{1}{4}}{(1/\delta)}}{\sqrt{n\epsilon}} \right) \nonumber \\
    &\leq \mathbb{O}\left(\frac{\log^{2\theta}(\sqrt{T})\log(1/\delta)}{\sqrt{T}}\cdot \frac{d^{\frac{1}{4}}\log^{\frac{1}{4}}{(1/\delta)}}{\sqrt{n\epsilon}} \right) \nonumber \\
    &\leq \mathbb{O}\left(\frac{\log^{2\theta}(\sqrt{T})d^{\frac{1}{4}}\log^{\frac{5}{4}}{(1/\delta)}}{\sqrt{n\epsilon}}\right).
\end{align}

Secondly, we pay extra attention to the bound in the case $\nabla L_S(\mathbf{w}_{t}) \geq c/2$.
\begin{align}
    L_S(\mathbf{w}_{t+1}) - L_S(\mathbf{w}_{t}) &\leq \langle\mathbf{w}_{t+1}-\mathbf{w}_{t}, \nabla L_S(\mathbf{w}_{t})\rangle + \frac{1}{2}\beta\Vert\mathbf{w}_{t+1}-\mathbf{w}_{t}\Vert^2_2 \nonumber \\
    &\leq \underbrace{-\eta_t\langle \overline{\mathbf{g}}_t + \mathbf{\zeta}_{t}, \nabla L_S(\mathbf{w}_{t}) \rangle}_{\mathrm{E}.9} + \frac{1}{2}\beta\eta^2_t\Vert \overline{\mathbf{g}}_t + \mathbf{\zeta}_{t} \Vert^2_2. 
\end{align}

We revisit term E.9 in the case and also set $s^{+}_t = \mathbb{I}_{\Vert\mathbf{g}_t\Vert_2\geq c}$ and $s^{-}_t = \mathbb{I}_{\Vert\mathbf{g}_t\Vert_2\leq c}$.
\begin{align}
    -\eta_t\langle \overline{\mathbf{g}}_t + \mathbf{\zeta}_{t}, \nabla L_S(\mathbf{w}_{t}) \rangle 
    =-\eta_t\langle \frac{c\mathbf{g}_t}{\Vert\mathbf{g}_t \Vert_2}s^{+}_t + \mathbf{g}_ts^{-}_t, \nabla L_S(\mathbf{w}_{t}) \rangle -\eta_t\langle \mathbf{\zeta}_{t}, \nabla L_S(\mathbf{w}_{t}) \rangle .
\end{align}

For term $-\sum^T_{t=1}\eta_t\langle \mathbf{g}_ts^{-}_t, \nabla L_S(\mathbf{w}_{t}) \rangle$, we obtain
\begin{align}
    -\sum^T_{t=1}\eta_t\langle \mathbf{g}_ts^{-}_t, \nabla L_S(\mathbf{w}_{t}) \rangle &= -\sum^T_{t=1}\eta_t s^{-}_t (\langle \mathbf{g}_t - \nabla L_S(\mathbf{w}_{t}), \nabla L_S(\mathbf{w}_{t}) \rangle + \Vert \nabla L_S(\mathbf{w}_{t}) \Vert^2_2) \nonumber \\
    &\leq -\sum^T_{t=1}\eta_t s^{-}_t \langle \mathbf{g}_t - \nabla L_S(\mathbf{w}_{t}), \nabla L_S(\mathbf{w}_{t}) \rangle - \sum^T_{t=1}\eta_t s^{-}_t\Vert \nabla L_S(\mathbf{w}_{t}) \Vert^2_2 \nonumber \\
    &\leq -\sum^T_{t=1}\eta_t s^{-}_t \langle \mathbf{g}_t - \nabla L_S(\mathbf{w}_{t}), \nabla L_S(\mathbf{w}_{t}) \rangle - \frac{c}{2}\sum^T_{t=1}\eta_t s^{-}_t\Vert \nabla L_S(\mathbf{w}_{t}) \Vert^2_2 \nonumber \\
    &\leq -\underbrace{\sum^T_{t=1}\eta_t s^{-}_t \langle \mathbf{g}_t - \nabla L_S(\mathbf{w}_{t}), \nabla L_S(\mathbf{w}_{t}) \rangle}_{\mathrm{E}.10} - \frac{c}{3}\sum^T_{t=1}\eta_t s^{-}_t\Vert \nabla L_S(\mathbf{w}_{t}) \Vert^2_2. \nonumber \\
\end{align}
Let consider the term E.10. Since $\mathbb{E}_t[\eta_t s^{-}_t \langle \mathbf{g}_t - \nabla L_S(\mathbf{w}_{t}), \nabla L_S(\mathbf{w}_{t}) \rangle]=0$, the sequence $(-\eta_t s^{-}_t \langle \mathbf{g}_t - \nabla L_S(\mathbf{w}_{t}), \nabla L_S(\mathbf{w}_{t}) \rangle, t\in \mathbb{N})$ is a martingale difference sequence. In addition, the term $\mathbf{g}_t - \nabla L_S(\mathbf{w}_{t})$ is a $subW(\theta,K)$ random variable, thus we apply sub-Weibull Freedman inequality with Lemma~\ref{lemma:3} and concentration inequality with Lemma~\ref{lemma:7} and~\ref{lemma:8} to get Corollary D.2~\label{cor:D-2} below:

From Lemma~\ref{lemma:3}, we define 
\begin{align}
    v(L,\mu) := \mathbb{E}\big[(X^L-\mathbb{E}[X])^2\mathbb{I}(X^L\leq\mathbb{E}[X])\big] + \mathbb{E}\big[(X^L-\mathbb{E}[X])^2\exp{\big(\mu(X^L-\mathbb{E}[X])\big)}\mathbb{I}(X^L>\mathbb{E}[X])\big], \nonumber
\end{align}
and make $\beta = kv(L,\mu)$, then we have $\sup_{\eta\in(0,1]}\{kv(L,\mu)\} = a\sum^k_{i=1}K^2_i$ based on Lemma~\ref{lemma:7}~ and~\ref{lemma:8} in~\cite{bakhshizadeh2023sharp} and obtain 
\begin{align}
    \mathbb{P}\left( \bigcup_{k\in[n]}\Big\{\sum^k_{i=1}\xi_i\geq kx~\mathrm{and}~\sum^k_{i=1}aK^2_{i-1}\leq \beta \Big\} \right) &\leq \mathrm{exp}(-\lambda kx + \frac{\lambda^2}{2}\beta) \nonumber \\
    &= \mathrm{exp}(-\lambda kx + kv(L,\mu)\frac{\lambda^2}{2}).
\end{align}
Subsequently, we define $x_{\mathrm{max}} :=  \frac{\mu I(kx)}{kx}a\sum^k_{i=1}K^2_i $ and have
\begin{enumerate}
    \item If $x\geq x_{\mathrm{max}}$, we choose $L=kx$ and $\lambda = \frac{\mu I(kx)}{kx}$, that is $\frac{x}{v(kx,\mu)} \geq \frac{x_{\mathrm{max}}}{v(kx,\mu)} = \frac{\mu I(kx)}{kx}$. Then the inequality achieves
    \begin{align}
        \mathbb{P}\left( \bigcup_{k\in[n]}\Big\{\sum^k_{i=1}\xi_i\geq kx~\mathrm{and}~\sum^k_{i=1}aK^2_{i-1}\leq \beta \Big\} \right) &\leq \mathrm{exp}(-\mu I(kx) + v(L,\mu)\frac{\mu^2I^2(kx)}{2kx^2}) \nonumber \\
        &\leq  \mathrm{exp}(-\mu I(kx)(1-v(L,\mu)\frac{\mu I(kx)}{2kx^2})) \nonumber \\
        &\leq \mathrm{exp}(-\mu c_xI(kx))\nonumber \\
        &\leq \mathrm{exp}(-\frac{1}{2}\mu I(kx)),
    \end{align}
    where $c_x = 1-\frac{\mu v(kx,\mu)I(kx)}{2kx^2}$ and the last inequality holds due to $c_x \geq \frac{1}{2}$.
    \item If $x\leq x_{\mathrm{max}}$, we choose $L=kx_{\mathrm{max}}$ and $\lambda = \frac{x}{v(L,\mu)} \leq \frac{x_{\mathrm{max}}}{v(L,\mu)}= \frac{\mu I(L)}{L}$. Then, we get
    \begin{align}
        \mathbb{P}\left( \bigcup_{k\in[n]}\Big\{\sum^k_{i=1}\xi_i\geq kx~\mathrm{and}~\sum^k_{i=1}aK^2_{i-1}\leq \beta \Big\} \right) &\leq  \mathrm{exp}(-\frac{kx^2}{v(L,\mu)} + \frac{kx^2}{2v(L,\mu)} ) \nonumber \\
        &\leq  \mathrm{exp}( - \frac{kx^2}{2v(L,\mu)} ) . 
    \end{align}
\end{enumerate}

Implementing the above inferences and propositions with
\begin{align}
    &\xi_t = \eta_t \langle \mathbf{g}_t - \nabla L_S(\mathbf{w}_{t}), \nabla L_S(\mathbf{w}_{t}) \rangle \nonumber \\
    &\Lambda :=  -\sum^{T}_{i=1}\eta_t s^{-}_t \langle \mathbf{g}_t - \nabla L_S(\mathbf{w}_{t}), \nabla L_S(\mathbf{w}_{t}) \rangle \nonumber \\
    &K_{t-1} = \eta_t K \Vert\nabla L_S(\mathbf{w}_{t}) \Vert_2 \nonumber \\
    &m_t = \eta_t K G \nonumber \\
    &k = T \nonumber \\
    &\mu = 1/2 \nonumber 
\end{align}
If $\theta = \frac{1}{2}$, $\forall \alpha >0$ and $a=2$, when $x \leq x_{\mathrm{max}}$ we have the following inequality with probability at least $1-\delta$
\begin{align}
  -\sum^T_{t=1}\eta_t s^{-}_t \langle \mathbf{g}_t - \nabla L_S(\mathbf{w}_{t}), \nabla L_S(\mathbf{w}_{t}) \rangle  &\leq \sqrt{2Tv(L,\mu)}\log^{\frac{1}{2}}(1/\delta) \nonumber \\
  &\leq \sqrt{2a\sum^T_{t=1}K^2_t}\log^{\frac{1}{2}}(1/\delta) \nonumber \\
   &\leq 2\sqrt{\sum^T_{t=1}\eta^2_tK^2\Vert\nabla L_S(\mathbf{w}_{t}) \Vert^2_2}\log^{\frac{1}{2}}(1/\delta) \nonumber \\
   &\leq 2KG\sqrt{\sum^T_{t=1}\eta^2_t}\log^{\frac{1}{2}}(1/\delta) ,
\end{align}
when $x\geq x_{\mathrm{max}}$, with $I(kx)=(kx/\sum^k_{i=1}K_i)^2$, we have 
\begin{align}
   -\sum^T_{t=1}\eta_t s^{-}_t \langle \mathbf{g}_t - \nabla L_S(\mathbf{w}_{t}), \nabla L_S(\mathbf{w}_{t}) \rangle  &\leq  4^{\frac{1}{2}}\frac{1}{T}\sum^T_{t=1}K_t\log^{\frac{1}{2}}(1/\delta) \nonumber \\
   &\leq 2\frac{KG}{T}\sum^T_{t=1}\eta_t\log^{\frac{1}{2}}(1/\delta) . 
\end{align}

If $\theta \in (\frac{1}{2},1]$, let $a=(4\theta)^{2\theta}e^2$, when $x \leq x_{\mathrm{max}}$ we have the following inequality with probability at least $1-\delta$
\begin{align}
  -\sum^T_{t=1}\eta_t s^{-}_t \langle \mathbf{g}_t - \nabla L_S(\mathbf{w}_{t}), \nabla L_S(\mathbf{w}_{t}) \rangle  &\leq \sqrt{2a\sum^T_{t=1}K^2_t}\log^{\frac{1}{2}}(1/\delta) \nonumber \\
  &\leq \sqrt{2}(4\theta)^{\theta}eKG\sqrt{\sum^T_{t=1}\eta^2_t}\log^{\frac{1}{2}}(1/\delta), 
\end{align}
when $x \geq x_{\mathrm{max}}$, let $I(kx)=(kx/\sum^k_{i=1}K_i)^{\frac{1}{\theta}},~\forall \theta \in (\frac{1}{2},1]$, then we have
\begin{align}
  -\sum^T_{t=1}\eta_t s^{-}_t \langle \mathbf{g}_t - \nabla L_S(\mathbf{w}_{t}), \nabla L_S(\mathbf{w}_{t}) \rangle  &\leq  \frac{4^{\theta}}{T}\sum^T_{t=1}K_t\log^{\frac{1}{2}}(1/\delta) \nonumber \\
   &\leq \frac{4^{\theta}KG}{T}\sum^T_{t=1}\eta_t\log^{\theta}(1/\delta) .
\end{align}

If $\theta > 1$, let $a=(2^{2\theta+1}+2)\Gamma(2\theta+1) + \frac{2^{3\theta}\Gamma(3\theta+1)}{3}$, when $x \leq x_{\mathrm{max}}$ we have the following inequality with probability at least $1-3\delta$
\begin{align}
  -&\sum^T_{t=1}\eta_t s^{-}_t \langle \mathbf{g}_t - \nabla L_S(\mathbf{w}_{t}), \nabla L_S(\mathbf{w}_{t}) \rangle  \leq \sqrt{2a\sum^T_{t=1}K^2_t}\log^{\frac{1}{2}}(1/\delta) \nonumber \\
  &\leq \sqrt{2(2^{2\theta+1}+2)\Gamma(2\theta+1) + \frac{2^{3\theta}\Gamma(3\theta+1)}{3}}KG\sqrt{\sum^T_{t=1}\eta^2_t}\log^{\frac{1}{2}}(1/\delta), 
\end{align}
when $x \geq x_{\mathrm{max}}$, let $I(kx)=(kx/\sum^k_{i=1}K_i)^{\frac{1}{\theta}},~\forall \theta > 1$, then we have
\begin{align}
  -\sum^T_{t=1}\eta_t s^{-}_t \langle \mathbf{g}_t - \nabla L_S(\mathbf{w}_{t}), \nabla L_S(\mathbf{w}_{t}) \rangle  &\leq  \frac{4^{\theta}}{T}\sum^T_{t=1}K_t\log^{\frac{1}{2}}(1/\delta) \nonumber \\
   &\leq \frac{4^{\theta}KG}{T}\sum^T_{t=1}\eta_t\log^{\theta}(1/\delta).
\end{align}

To continue the proof, employing Lemma~\ref{lemma:5} in term $-\eta_t\langle \frac{c\mathbf{g}_t}{\Vert\mathbf{g}_t \Vert_2}s^{+}_t, \nabla L_S(\mathbf{w}_{t}) \rangle$ and covering all $T$ iterations, we have
\begin{align}
    -\sum^T_{t=1}\eta_t\langle \frac{c\mathbf{g}_t}{\Vert\mathbf{g}_t \Vert_2}s^{+}_t, \nabla L_S(\mathbf{w}_{t}) \rangle &\leq - \frac{c\sum^T_{t=1}\eta_t s^{+}_t\Vert\nabla L_S(\mathbf{w}_t)\Vert_2}{3} + \frac{8c\sum^T_{t=1}\eta_t\Vert \mathbf{g}_t-\nabla L_S(\mathbf{w}_t)\Vert_2}{3} \nonumber \\
     &\leq - \frac{c\sum^T_{t=1}\eta_t(1-s^{-}_t)\Vert\nabla L_S(\mathbf{w}_t)\Vert_2}{3}  \nonumber \\
     &+ \frac{16\sum^T_{t=1}\eta_t\Vert \mathbf{g}_t-\nabla L_S(\mathbf{w}_t)\Vert_2\Vert\nabla L_S(\mathbf{w}_t) \Vert_2}{3} .
\end{align}
With the truncated Corollary~\ref{cor:D-1}.1, we have
\begin{enumerate}[leftmargin=*]
\setlength{\itemindent}{0pt}
    \item If $0 \leq t \leq t_{\mathrm{max}}$, with probability at least $1-3\delta$
    \begin{align}
         &-\sum^T_{t=1}\eta_t\langle \frac{c\mathbf{g}_t}{\Vert\mathbf{g}_t \Vert_2}s^{+}_t, \nabla L_S(\mathbf{w}_{t}) \rangle \leq - \frac{c\sum^T_{t=1}\eta_t(1-s^{-}_t)\Vert\nabla L_S(\mathbf{w}_t)\Vert_2}{3} \nonumber \\
         &+\frac{16\sum^T_{t=1}\eta_t\Vert\nabla L_S(\mathbf{w}_t) \Vert_2}{3} \left\{ \begin{array}{lll} 2K\log^{\frac{1}{2}}(2/\delta) , & \text{if } \theta = \frac{1}{2}, \\ \\ 
          \sqrt{2}e(4\theta)^{\theta}K\log^{\frac{1}{2}}(2/\delta), & \text{if } \theta \in (\frac{1}{2},1], \\ \\ 
         \displaystyle \sqrt{2(2^{2\theta+1}+2)\Gamma(2\theta+1) + \frac{2^{3\theta}\Gamma(3\theta+1)}{3}}K\log^{\frac{1}{2}}(2/\delta)  & \text{if } \theta > 1. \end{array} \right. . 
    \end{align}
    
 \item If $t \geq t_{\mathrm{max}}$ and $\theta \geq \frac{1}{2}$, with probability at least $1-3\delta$ 
    \begin{align}
         &-\sum^T_{t=1}\eta_t\langle \frac{c\mathbf{g}_t}{\Vert\mathbf{g}_t \Vert_2}s^{+}_t, \nabla L_S(\mathbf{w}_{t}) \rangle \leq - \frac{c\sum^T_{t=1}\eta_t(1-s^{-}_t)\Vert\nabla L_S(\mathbf{w}_t)\Vert_2}{3} \nonumber \\
         &+\frac{16\sum^T_{t=1}\eta_t\Vert\nabla L_S(\mathbf{w}_t) \Vert_2}{3} 4^{\theta}K\log^{\theta}(2/\delta). 
    \end{align}
\end{enumerate}

To simplify the proof, we unify the notation with $t_{\mathrm{max}} = x_{\mathrm{max}}$. Then, according to Lemma~\ref{lemma:1}, Corollary~\ref{cor:D-1}.1 and Corollary~\ref{cor:D-2}.2, combining the truncated results of $-\sum^T_{t=1}\eta_t\langle \mathbf{g}_ts^{-}_t, \nabla L_S(\mathbf{w}_{t}) \rangle $ and $ -\sum^T_{t=1}\eta_t\langle \frac{c\mathbf{g}_t}{\Vert\mathbf{g}_t \Vert_2}s^{+}_t, \nabla L_S(\mathbf{w}_{t}) \rangle$, we have the inequality:

\begin{enumerate}[leftmargin=*]
\setlength{\itemindent}{0pt}
    \item If $0 \leq x \leq x_{\mathrm{max}}$,  with probability at least $1 -3\delta- T\delta$ 
    \begin{align}
         &-\sum^T_{t=1}\eta_t\langle\overline{\mathbf{g}}_t, \nabla L_S(\mathbf{w}_{t}) \rangle \leq - \frac{c\sum^T_{t=1}\eta_t\Vert\nabla L_S(\mathbf{w}_t)\Vert_2}{3}  \nonumber \\
         &+ \left\{ \begin{array}{lll} 2KG\sqrt{\sum^T_{t=1}\eta^2_t}\log^{\frac{1}{2}}(1/\delta) , & \text{if } \theta = \frac{1}{2}, \\ \\ 
          \sqrt{2}(4\theta)^{\theta}eKG\sqrt{\sum^T_{t=1}\eta^2_t}\log^{\frac{1}{2}}(1/\delta), & \text{if } \theta \in (\frac{1}{2},1], \\ \\ 
         \displaystyle \sqrt{2(2^{2\theta+1}+2)\Gamma(2\theta+1) + \frac{2^{3\theta}\Gamma(3\theta+1)}{3}}KG\sqrt{\sum^T_{t=1}\eta^2_t}\log^{\frac{1}{2}}(1/\delta)  & \text{if } \theta > 1. \end{array} \right. \nonumber \\
         &+\frac{16\sum^T_{t=1}\eta_t\Vert\nabla L_S(\mathbf{w}_t) \Vert_2}{3} \left\{ \begin{array}{lll} 2K\log^{\frac{1}{2}}(2/\delta) , & \text{if } \theta = \frac{1}{2}, \\ \\ 
          \sqrt{2}e(4\theta)^{\theta}K\log^{\frac{1}{2}}(2/\delta), & \text{if } \theta \in (\frac{1}{2},1], \\ \\ 
         \displaystyle \sqrt{2(2^{2\theta+1}+2)\Gamma(2\theta+1) + \frac{2^{3\theta}\Gamma(3\theta+1)}{3}}K\log^{\frac{1}{2}}(2/\delta)  & \text{if } \theta > 1. \end{array} \right.
    \end{align}
    
 \item If $x \geq x_{\mathrm{max}}$ and $\theta \geq \frac{1}{2}$, with probability at least $1-3\delta-T\delta$
    \begin{align}
         &-\sum^T_{t=1}\eta_t\langle\overline{\mathbf{g}}_t, \nabla L_S(\mathbf{w}_{t}) \rangle \leq - \frac{c\sum^T_{t=1}\eta_t\Vert\nabla L_S(\mathbf{w}_t)\Vert_2}{3} +  \frac{4^{\theta}KG}{T}\sum^T_{t=1}\eta_t\log^{\theta}(1/\delta)  \nonumber \\
         &+\frac{16\sum^T_{t=1}\eta_t\Vert\nabla L_S(\mathbf{w}_t) \Vert_2}{3} 4^{\theta}K\log^{\theta}(2/\delta). 
    \end{align}
\end{enumerate}

So, integrating the results of formula.(84) and formula.(85) into formula.(69),  and applying Lemma~\ref{lemma:2} to $\sum^T_{t=1}\eta_t\langle \mathbf{\zeta}_{t}, \nabla L_S(\mathbf{w}_{t}) \rangle$ and $\frac{1}{2}\beta\sum^T_{t=1}\eta^2_t\Vert \overline{\mathbf{g}}_t + \mathbf{\zeta}_{t} \Vert^2_2$ because of $\zeta_t\sim \mathbb{N}(0,c^2\sigma^2_{\mathrm{dp}}\mathbb{I}_d)$, with$\Vert\overline{\mathbf{g}}_t\Vert_2 \leq c$ ,$\sigma^2_{\mathrm{dp}} = m_2\frac{TdB^2\log(1/\delta)}{n^2\epsilon^2}$ and probability $1 - 6\delta - T\delta$, if $0 \leq x\leq x_{\mathrm{max}}$, we have
\begin{align}
    &(\frac{c}{3} - \frac{16}{3}\sqrt{2a}K\log^{\frac{1}{2}}(2/\delta)-4\sqrt{e}\sigma_{\mathrm{dp}}\log^{\frac{1}{2}}(2/\delta)) \sum^T_{t=1}\eta_t\Vert\nabla L_S(\mathbf{w}_{t})\Vert_2 \leq L_S(\mathbf{w}_{1}) - L_S(\mathbf{w}_{S}) \nonumber \\
    & + (\frac{2\beta m_2 e dTc^2B^2\log^2(2/\delta)}{n^2\epsilon^2} + \frac{2\beta \sqrt{em_2dT}c^2B\log(2/\delta)}{n\epsilon} + \frac{1}{2}\beta c^2)\sum^T_{t=1}\eta^2_{t} \nonumber \\
    &+\sqrt{2a}KG\sqrt{\sum^T_{t=1}\eta^2_t}\log^{\frac{1}{2}}(1/\delta),
\end{align}
if $x\geq x_{\mathrm{max}}$, we have
\begin{align}
    &(\frac{c}{3} - \frac{16}{3}\sqrt{2a}K\log^{\theta}(2/\delta)-4\sqrt{e}\sigma_{\mathrm{dp}}\log^{\frac{1}{2}}(2/\delta)) \sum^T_{t=1}\eta_t\Vert\nabla L_S(\mathbf{w}_{t})\Vert_2 \leq L_S(\mathbf{w}_{1}) - L_S(\mathbf{w}_{S}) \nonumber \\
    & + (2\beta m_2 e d\frac{Tc^2B^2\log^2(2/\delta)}{n^2\epsilon^2} + 2\beta \sqrt{em_2Td}\frac{c^2B\log(2/\delta)}{n\epsilon} + \frac{1}{2}\beta c^2)\sum^T_{t=1}\eta^2_{t} \nonumber\\
    &+\sqrt{2a}KG\sqrt{\sum^T_{t=1}\eta^2_t}\log^{\theta}(1/\delta),
\end{align}
when $0\leq x \leq x_{\mathrm{max}}$, $a = 2$ if $\theta = 1/2$, $a = (4\theta)^{2\theta}e^2$ if $\theta \in (1/2,1]$ and $a = (2^{2\theta+1}+2)\Gamma(2\theta+1) + \frac{2^{3\theta}\Gamma(3\theta+1)}{3}$ if $\theta > 1$. While $x \geq x_{\mathrm{max}}$, $a = 2^{4\theta-1}$~$\forall \theta \geq \frac{1}{2}$.

Afterwards,
\begin{enumerate}[leftmargin=*]
\item In case $0 \leq x\leq x_{\mathrm{max}}$ and $\theta \geq \frac{1}{2}$:

If $16\sqrt{2a}K\log^{\frac{1}{2}}(2/\delta)\geq12\sqrt{e}\sigma_{\mathrm{dp}}\log^{\frac{1}{2}}(2/\delta)$, let $c \geq 33\sqrt{2a}K\log^{\frac{1}{2}}(2/\delta)$, $T= \frac{n\epsilon}{\sqrt{d\log(1/\delta)}}$ and $\eta_t = \frac{1}{\sqrt{T}}$, we obtain
\begin{align}
    &\sum^T_{t=1}\eta_t\Vert\nabla L_S(\mathbf{w}_{t})\Vert_2 \leq \frac{3}{\sqrt{2a}K\log^{\frac{1}{2}}(2/\delta)} \left( L_S(\mathbf{w}_{1}) - L_S(\mathbf{w}_{S}) +\sqrt{2a}KG\sqrt{\sum^T_{t=1}\eta^2_t}\log^{\frac{1}{2}}(1/\delta)  \nonumber \right. \\
    &\left. +  2\sum^T_{t=1}\eta^2_{t}\frac{\beta m_2 e dTc^2B^2\log^2(2/\delta)}{n^2\epsilon^2} + 2\sum^T_{t=1}\eta^2_{t}\frac{\beta \sqrt{em_2Td}c^2B\log(2/\delta)}{n\epsilon} + \frac{1}{2}\beta c^2\sum^T_{t=1}\eta^2_{t}  \right) \nonumber \\
    &\leq  \frac{3(L_S(\mathbf{w}_{1}) - L_S(\mathbf{w}_{S}) )}{\sqrt{2a}K\log^{\frac{1}{2}}(2/\delta)} + 3G\sqrt{\sum^T_{t=1}\eta^2_t} \nonumber \\
    &+ 3\sum^T_{t=1}\eta^2_{t}\left(\frac{32\sqrt{2a}}{9}\beta K\log^{\frac{1}{2}}(2/\delta) + 88\sqrt{2a}\beta K\log^{\frac{1}{2}}(2/\delta) +  \frac{33^2\sqrt{2a}\beta K\log^{\frac{1}{2}}(2/\delta)}{2}\right) ,
\end{align}
with $\frac{\sqrt{em_2Td}cB\log(2/\delta)}{n\epsilon} = \sqrt{e}\sigma_{\mathrm{dp}}\log^{\frac{1}{2}}(2/\delta).$

Therefore, with probability at least $1-6\delta-T\delta$, we have
\begin{align}
    \frac{1}{T}\sum^T_{t=1}\Vert\nabla L_S(\mathbf{w}_{t})\Vert_2 \leq \mathbb{O}\left(\frac{d^{\frac{1}{4}}\log^{\frac{3}{4}}(1/\delta)}{\sqrt{n\epsilon}}\right) , \nonumber
\end{align}
then, with probability $1-\delta$, we have
\begin{align}
    \frac{1}{T}\sum^T_{t=1}\Vert\nabla L_S(\mathbf{w}_{t})\Vert_2 \leq \mathbb{O}\left(\frac{d^{\frac{3}{4}}\log^{\frac{3}{4}}(T/\delta)}{\sqrt{n\epsilon}}\right) .
\end{align}

If $16\sqrt{2a}K\log^{\frac{1}{2}}(2/\delta)\leq 12\sqrt{e}\sigma_{\mathrm{dp}}\log^{\frac{1}{2}}(2/\delta)$, let $\frac{c}{3} \geq 9\sqrt{e}\sigma_{\mathrm{dp}}\log^{\frac{1}{2}}(2/\delta)$, that is, $c\geq 27\sqrt{e}\sigma_{\mathrm{dp}}\log^{\frac{1}{2}}(2/\delta)$, thus there exists $T= \max{(m_2eB^2\log(1/\delta), \frac{n\epsilon}{\sqrt{d\log(1/\delta)}})}$ and $\eta_t = \frac{1}{\sqrt{T}}$ that we obtain
\begin{align}
    &\sum^T_{t=1}\eta_t\Vert\nabla L_S(\mathbf{w}_{t})\Vert_2 \leq \frac{1}{\sqrt{e}\sigma_{\mathrm{dp}}\log^{\frac{1}{2}}(2/\delta)} \left( L_S(\mathbf{w}_{1}) - L_S(\mathbf{w}_{S}) +\sqrt{2a}KG\sqrt{\sum^T_{t=1}\eta^2_t}\log^{\frac{1}{2}}(1/\delta)  \nonumber \right. \\
    &\left. +  2\sum^T_{t=1}\eta^2_{t}\frac{\beta m_2 e dTc^2B^2\log^2(2/\delta)}{n^2\epsilon^2} + 2\sum^T_{t=1}\eta^2_{t}\frac{\beta \sqrt{em_2Td}c^2B\log(2/\delta)}{n\epsilon} + \frac{1}{2}\beta c^2\sum^T_{t=1}\eta^2_{t}  \right) \nonumber \\
    &\leq  \frac{L_S(\mathbf{w}_{1}) - L_S(\mathbf{w}_{S}) }{\sqrt{e}\sigma_{\mathrm{dp}}\log^{\frac{1}{2}}(2/\delta)} +  \frac{\sqrt{2a}KG}{\sqrt{e}\sigma_{\mathrm{dp}}}\sqrt{\sum^T_{t=1}\eta^2_t}  \nonumber \\
    &+ \sum^T_{t=1}\eta^2_{t}\left(2\sqrt{e}\beta \sigma_{\mathrm{dp}}\log^{\frac{1}{2}}(2/\delta) + 54\sqrt{e}\beta\sigma_{\mathrm{dp}}\log^{\frac{1}{2}}(2/\delta) + \frac{(27)^2\sqrt{e}}{2}  \beta\sigma_{\mathrm{dp}}\log^{\frac{1}{2}}(2/\delta)\right) .
\end{align}

Therefore, with $\sigma_{\mathrm{dp}} = \frac{\sqrt{m_2Td}B\log^{\frac{1}{2}}(1/\delta)}{n\epsilon} = \mathbb{O}(1)$ and probability $1-6\delta-T\delta$, we have
\begin{align}
    \frac{1}{T}\sum^T_{t=1}\Vert\nabla L_S(\mathbf{w}_{t})\Vert_2 \leq \mathbb{O}\left(\frac{d^{\frac{1}{4}}\log^{\frac{3}{4}}(1/\delta)}{\sqrt{n\epsilon}}\right) , \nonumber
\end{align}
then, with probability $1-\delta$, we have
\begin{align}
    \frac{1}{T}\sum^T_{t=1}\Vert\nabla L_S(\mathbf{w}_{t})\Vert_2 \leq \mathbb{O}\left(\frac{d^{\frac{1}{4}}\log^{\frac{3}{4}}(T/\delta)}{\sqrt{n\epsilon}}\right) .
\end{align}

\item In case $x\geq x_{\mathrm{max}}$:

If $\theta = \frac{1}{2}$ and $16\sqrt{2a}K\log^{\theta}(2/\delta)\geq12\sqrt{e}\sigma_{\mathrm{dp}}\log^{\frac{1}{2}}(2/\delta)$, let $c \geq 33\sqrt{2a}K\log^{\frac{1}{2}}(2/\delta)$, $T= \frac{n\epsilon}{\sqrt{d\log(1/\delta)}}$ and $\eta_t = \frac{1}{\sqrt{T}}$, we obtain
\begin{align}
    &\sum^T_{t=1}\eta_t\Vert\nabla L_S(\mathbf{w}_{t})\Vert_2 \leq \frac{3}{\sqrt{2a}K\log^{\frac{1}{2}}(2/\delta)} \left( L_S(\mathbf{w}_{1}) - L_S(\mathbf{w}_{S}) +\sqrt{2a}KG\sqrt{\sum^T_{t=1}\eta^2_t}\log^{\frac{1}{2}}(1/\delta)  \nonumber \right. \\
    &\left. +  2\sum^T_{t=1}\eta^2_{t}\frac{\beta m_2 e dTc^2B^2\log^2(2/\delta)}{n^2\epsilon^2} + 2\sum^T_{t=1}\eta^2_{t}\frac{\beta \sqrt{em_2Td}c^2B\log(2/\delta)}{n\epsilon} + \frac{1}{2}\beta c^2\sum^T_{t=1}\eta^2_{t}  \right) \nonumber \\
    &\leq  \frac{3(L_S(\mathbf{w}_{1}) - L_S(\mathbf{w}_{S}) )}{\sqrt{2a}K\log^{\frac{1}{2}}(2/\delta)} + 3G\sqrt{\sum^T_{t=1}\eta^2_t} \nonumber \\
    &+ 3\sum^T_{t=1}\eta^2_{t}\left(\frac{32\sqrt{2a}}{9}\beta K\log^{\frac{1}{2}}(2/\delta) + 88\sqrt{2a}\beta K\log^{\frac{1}{2}}(2/\delta) +  \frac{33^2\sqrt{2a}\beta K\log^{\frac{1}{2}}(2/\delta)}{2}\right) .
\end{align}
Therefore, with probability at least $1-6\delta-T\delta$, we have
\begin{align}
    \frac{1}{T}\sum^T_{t=1}\Vert\nabla L_S(\mathbf{w}_{t})\Vert_2 \leq \mathbb{O}\left(\frac{d^{\frac{1}{4}}\log^{\frac{3}{4}}(1/\delta)}{\sqrt{n\epsilon}}\right) , \nonumber
\end{align}
then, with probability $1-\delta$, we have
\begin{align}
    \frac{1}{T}\sum^T_{t=1}\Vert\nabla L_S(\mathbf{w}_{t})\Vert_2 \leq \mathbb{O}\left(\frac{d^{\frac{3}{4}}\log^{\frac{3}{4}}(T/\delta)}{\sqrt{n\epsilon}}\right) .
\end{align}

If $\theta = \frac{1}{2}$ and $16\sqrt{2a}K\log^{\theta}(2/\delta)\leq 12\sqrt{e}\sigma_{\mathrm{dp}}\log^{\frac{1}{2}}(2/\delta) $, we need $c\geq 27\sqrt{e}\sigma_{\mathrm{dp}}\log^{\frac{1}{2}}(2/\delta)$, thus there exists $T= \max{(m_2eB^2\log(1/\delta), \frac{n\epsilon}{\sqrt{d\log(1/\delta)}})}$ and $\eta_t = \frac{1}{\sqrt{T}}$ that we obtain
\begin{align}
    &\sum^T_{t=1}\eta_t\Vert\nabla L_S(\mathbf{w}_{t})\Vert_2 \leq \frac{1}{\sqrt{e}\sigma_{\mathrm{dp}}\log^{\frac{1}{2}}(2/\delta)} \left( L_S(\mathbf{w}_{1}) - L_S(\mathbf{w}_{S}) +\sqrt{2a}KG\sqrt{\sum^T_{t=1}\eta^2_t}\log^{\frac{1}{2}}(1/\delta)  \nonumber \right. \\
    &\left. +  2\sum^T_{t=1}\eta^2_{t}\frac{\beta m_2 e dTc^2B^2\log^2(2/\delta)}{n^2\epsilon^2} + 2\sum^T_{t=1}\eta^2_{t}\frac{\beta \sqrt{em_2Td}c^2B\log(2/\delta)}{n\epsilon} + \frac{1}{2}\beta c^2\sum^T_{t=1}\eta^2_{t}  \right) \nonumber \\
    &\leq  \frac{L_S(\mathbf{w}_{1}) - L_S(\mathbf{w}_{S}) }{\sqrt{e}\sigma_{\mathrm{dp}}\log^{\frac{1}{2}}(2/\delta)} +  \frac{\sqrt{2a}KG}{\sqrt{e}\sigma_{\mathrm{dp}}}  \nonumber \\
    &+ 2\sqrt{e}\beta \sigma_{\mathrm{dp}}\log^{\frac{1}{2}}(2/\delta) + 54\beta\sqrt{e}\sigma_{\mathrm{dp}}\log^{\frac{1}{2}}(2/\delta) + \frac{(27)^2\sqrt{e}\beta}{2}  \sigma_{\mathrm{dp}}\log^{\frac{1}{2}}(2/\delta) .
\end{align}

Therefore, with $\sigma_{\mathrm{dp}} = \frac{\sqrt{m_2Td}B\log^{\frac{1}{2}}(1/\delta)}{n\epsilon} = \mathbb{O}(1)$ and probability $1-6\delta-T\delta$, we have
\begin{align}
    \frac{1}{T}\sum^T_{t=1}\Vert\nabla L_S(\mathbf{w}_{t})\Vert_2 \leq \mathbb{O}\left(\frac{d^{\frac{1}{4}}\log^{\frac{3}{4}}(1/\delta)}{\sqrt{n\epsilon}}\right) , \nonumber
\end{align}
then, with probability $1-\delta$, we have
\begin{align}
    \frac{1}{T}\sum^T_{t=1}\Vert\nabla L_S(\mathbf{w}_{t})\Vert_2 \leq \mathbb{O}\left(\frac{d^{\frac{1}{4}}\log^{\frac{3}{4}}(T/\delta)}{\sqrt{n\epsilon}}\right) .
\end{align}

If $\theta > \frac{1}{2}$, then term $\log^{\theta}(2/\delta)$ dominates the inequality. Let $\frac{c}{3} \geq \frac{17}{3}\sqrt{2a}K\log^{\theta}(2/\delta)$, $T= \frac{n\epsilon}{\sqrt{d\log(1/\delta)}}$ and $\eta_t = \frac{1}{\sqrt{T}}$, we obtain
\begin{align}
     &\sum^T_{t=1}\eta_t\Vert\nabla L_S(\mathbf{w}_{t})\Vert_2 \leq \frac{3}{\sqrt{2a}K\log^{\theta}(2/\delta)} \left( L_S(\mathbf{w}_{1}) - L_S(\mathbf{w}_{S}) +\sqrt{2a}KG\sqrt{\sum^T_{t=1}\eta^2_t}\log^{\frac{1}{2}}(1/\delta)  \nonumber \right. \\
    &\left. +  2\sum^T_{t=1}\eta^2_{t}\frac{\beta m_2 e dTc^2B^2\log^2(2/\delta)}{n^2\epsilon^2} + 2\sum^T_{t=1}\eta^2_{t}\frac{\beta \sqrt{em_2Td}c^2B\log(2/\delta)}{n\epsilon} + \frac{1}{2}\beta c^2\sum^T_{t=1}\eta^2_{t}  \right) \nonumber \\
    &\leq  \frac{3(L_S(\mathbf{w}_{1}) - L_S(\mathbf{w}_{S}) )}{\sqrt{2a}K\log^{\theta}(2/\delta)} + 3G + \frac{16^2}{24}\beta K\log^{\theta}(2/\delta) + 136\beta K\log^{\theta}(2/\delta) + 3\beta(17)^2 K\log^{\theta}(2/\delta) \nonumber \\
     &\leq  \frac{3(L_S(\mathbf{w}_{1}) - L_S(\mathbf{w}_{S}) )}{\sqrt{2a}K\log^{\frac{1}{2}}(2/\delta)} + 3G\sqrt{\sum^T_{t=1}\eta^2_t} \nonumber \\
    &+ 3\sum^T_{t=1}\eta^2_{t}\left(\frac{32\sqrt{2a}}{9}\beta K\log^{\theta}(2/\delta) + \frac{136\sqrt{2a}}{3}\beta K\log^{\theta}(2/\delta) +  \frac{33^2\sqrt{2a}\beta K\log^{\theta}(2/\delta)}{2}\right) ,
\end{align}
with $16\sqrt{2a}K\log^{\theta}(2/\delta)\geq 12\sqrt{e}\sigma_{\mathrm{dp}}\log^{\frac{1}{2}}(1/\delta) $.

As a result, with probability $1-\delta$, we have
\begin{align}
    \frac{1}{T}\sum^T_{t=1}\Vert\nabla L_S(\mathbf{w}_{t})\Vert_2 \leq \mathbb{O}\left(\frac{\log^{\theta}(T/\delta)d^\frac{1}{4}\log^{\frac{1}{4}}(T/\delta)}{\sqrt{n\epsilon}}\right) .
\end{align}
\end{enumerate}

Consequently, integrate the above results on the condition that $\nabla L_S(\mathbf{w}_{t}) \geq c/2$.

In the case of $0 \leq x\leq x_{\mathrm{max}}$, we have 
\begin{align}
    \frac{1}{T}\sum^T_{t=1}\Vert\nabla L_S(\mathbf{w}_{t})\Vert_2 \leq \mathbb{O}\left(\frac{d^{\frac{1}{4}}\log^{\frac{3}{4}}(T/\delta)}{\sqrt{n\epsilon}}\right) ,
\end{align}
in the case of $x\geq x_{\mathrm{max}}$, we have 
\begin{align}
    \frac{1}{T}\sum^T_{t=1}\Vert\nabla L_S(\mathbf{w}_{t})\Vert_2 \leq \mathbb{O}\left(\frac{d^{\frac{1}{4}}\log^{\theta+\frac{1}{4}}(T/\delta)}{\sqrt{n\epsilon}}\right) ,
\end{align}
with probability $1-\delta$ and $\theta\geq \frac{1}{2}$.

In a word, covering the two cases, we ultimately come to the conclusion with probability $1-\delta$ and $\eta_t = \frac{1}{\sqrt{T}}$:

For the case $0\leq x\leq x_{\mathrm{max}}$,
\begin{align}
     \frac{1}{T}\sum^T_{t=1} \min\big\{\Vert\nabla L_S(\mathbf{w}_{t})\Vert_2, \Vert\nabla L_S(\mathbf{w}_{t})\Vert^2_2\big\} &\leq \mathbb{O}\left(\frac{d^{\frac{1}{4}}\log^{\frac{3}{4}}(T/\delta)}{(n\epsilon)^{\frac{1}{2}}}\right) + \mathbb{O}\left(\frac{d^{\frac{1}{4}}\log(\sqrt{T})\log^{\frac{5}{4}}(T/\delta) }{(n\epsilon)^\frac{1}{2}}\right) \nonumber \\
     &\leq \mathbb{O}\left(\frac{d^{\frac{1}{4}} \log^{\frac{1}{4}}(T/\delta) \big (\log^{\frac{1}{2}}(T/\delta) + \log(\sqrt{T})\log(T/\delta) \big ) }{(n\epsilon)^\frac{1}{2}}\right) \nonumber \\
      &\leq \mathbb{O}\left(\frac{d^{\frac{1}{4}} \log^{\frac{5}{4}}(T/\delta)\log(\sqrt{T}) }{(n\epsilon)^\frac{1}{2}}\right),
\end{align}
where $\hat{\log}(T/\delta) = \log^{\max(0,\theta-1)}(T/\delta)$. If $16\sqrt{2a}K\log^{\frac{1}{2}}(2/\delta)\leq 12\sqrt{e}\sigma_{\mathrm{dp}}\log^{\frac{1}{2}}(1/\delta) $, then $T= \max{\big(m_2eB^2\log(1/\delta), \frac{n\epsilon}{\sqrt{d\log(1/\delta)}}\big)}$ and $c=\max{\big(2\sqrt{2a}K\log^{\frac{1}{2}}({\sqrt{T}}),27\sqrt{e}\sigma_{\mathrm{dp}}\log^{\frac{1}{2}}(1/\delta)\big)}$. If $16\sqrt{2a}K\log^{\frac{1}{2}}(2/\delta)\geq 12\sqrt{e}\sigma_{\mathrm{dp}}\log^{\frac{1}{2}}(1/\delta) $, then $T= \frac{n\epsilon}{\sqrt{d\log(1/\delta)}}$ and $c=\max{\big(2\sqrt{2a}K\log^{\frac{1}{2}}({\sqrt{T}}),33\sqrt{2a}K\log^{\frac{1}{2}}(2/\delta)\big)}$. 

For the case $x\geq x_{\mathrm{max}}$,
\begin{align}
     \frac{1}{T}\sum^T_{t=1} \min\big\{\Vert\nabla L_S(\mathbf{w}_{t})\Vert_2, \Vert\nabla L_S(\mathbf{w}_{t})\Vert^2_2\big\} &\leq \mathbb{O}\left(\frac{d^{\frac{1}{4}}\log^{\theta+\frac{1}{4}}(T/\delta)}{(n\epsilon)^{\frac{1}{2}}}\right) + \mathbb{O}\left(\frac{d^{\frac{1}{4}}\log^{2\theta}(\sqrt{T})\log^{\frac{5}{4}}(T/\delta) }{(n\epsilon)^\frac{1}{2}}\right) \nonumber \\
     &\leq \mathbb{O}\left(\frac{d^{\frac{1}{4}} \log^{\frac{1}{4}}(T/\delta) \big (\log^{\theta}(T/\delta) + \log^{2\theta}(\sqrt{T})\log(T/\delta) \big ) }{(n\epsilon)^\frac{1}{2}}\right) \nonumber \\
     &\leq \mathbb{O}\left(\frac{d^{\frac{1}{4}} \log^{\frac{5}{4}}(T/\delta)\hat{\log}(T/\delta)\log^{2\theta}(\sqrt{T}) }{(n\epsilon)^\frac{1}{2}}\right),
\end{align}
where $\hat{\log}(T/\delta) = \log^{\max(0,\theta-1)}(T/\delta)$. If $\theta = \frac{1}{2}$ and $16\sqrt{2a}K\log^{\theta}(2/\delta)\leq 12\sqrt{e}\sigma_{\mathrm{dp}}\log^{\frac{1}{2}}(1/\delta) $, then $T= \max{\big(m_2eB^2\log(1/\delta), \frac{n\epsilon}{\sqrt{d\log(1/\delta)}}\big)}$ and $c=\max{\big(4^{\theta}2K\log^{\theta}({\sqrt{T}}),27\sqrt{e}\sigma_{\mathrm{dp}}\log^{\frac{1}{2}}(1/\delta)\big)}$. If $\theta = \frac{1}{2}$ and $16\sqrt{2a}K\log^{\theta}(2/\delta)\geq 12\sqrt{e}\sigma_{\mathrm{dp}}\log^{\frac{1}{2}}(1/\delta) $, then $T= \frac{n\epsilon}{\sqrt{d\log(1/\delta)}}$ and $c=\max{\big(4^{\theta}2K\log^{\theta}({\sqrt{T}}),33\sqrt{2a}K\log^{\frac{1}{2}}(2/\delta)\big)}$. If $\theta > \frac{1}{2}$, then  $T= \frac{n\epsilon}{\sqrt{d\log(1/\delta)}}$ and $c=\max{\big(4^{\theta}2K\log^{\theta}({\sqrt{T}}),17K\log^{\theta}(2/\delta)\big)}$.
\end{proof}
The proof is completed.
\clearpage   
\section{Uniform Bound for Discriminative Clipping DPSGD with Subspace Identification}
\begin{theorem}[\textbf{Uniform Bound for Discriminative Clipping DPSGD with Subspace Identification}]
Under Assumption A.1 and A.2, combining Theorem 2 and Theorem 3, for any $\delta^{\prime}\in(0,1)$, with probability $1-\delta^{\prime}$, we have
\begin{align}
    \frac{1}{T}\sum^T_{t=1} \min\big\{\Vert\nabla L_S(\mathbf{w}_{t})\Vert_2, \Vert\nabla L_S(\mathbf{w}_{t})\Vert^2_2\big\} &\leq p*\mathbb{O}\left(\frac{d^{\frac{1}{4}} \log^{\frac{5}{4}}(T/\delta)\hat{\log}(T/\delta)\log^{2\theta}(\sqrt{T}) }{(n\epsilon)^\frac{1}{2}}\right) \nonumber \\
    &+ (1-p)*\mathbb{O}\left(\frac{d^{\frac{1}{4}} \log^{\frac{5}{4}}(T/\delta)\log(\sqrt{T}) }{(n\epsilon)^\frac{1}{2}}\right), \nonumber
\end{align}
where $\delta^{\prime}=\delta^{\prime}_m + \delta$, $\hat{\log}(T/\delta) = \log^{\max(0,\theta-1)}(T/\delta)$ and $p$ is ratio of heavy-tailed samples.

\end{theorem}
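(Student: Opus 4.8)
The plan is to treat this statement as a bookkeeping combination of two results already established above: the subspace identification guarantee (Theorem~\ref{thm:subspace}) and the two-regime convergence bound for the discriminative clipping update (Theorem~\ref{thm:DCDPSGD}). First I would condition on the ``good identification'' event $\mathcal{E}_{\mathrm{id}}$ on which the noisy traces $\widetilde{\lambda}_i$ sort the per-sample gradients into the correct regime, i.e.\ the top-$p$ batch fraction flagged as ``tail'' really satisfies $\lambda_{\mathrm{tr}}\ge\lambda_{\mathrm{max}}$ and the remaining $(1-p)$ fraction satisfies $0\le\lambda_{\mathrm{tr}}\le\lambda_{\mathrm{max}}$. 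By Theorem~\ref{thm:subspace}, together with the observation that the $\mathbb{O}(\log(2d/\delta_m)/k)$ term dominates the identification error and can be driven below any desired threshold by enlarging $k$, we have $\mathbb{P}[\mathcal{E}_{\mathrm{id}}]\ge 1-\delta'_m$.

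Next, working on $\mathcal{E}_{\mathrm{id}}$, I would re-run the descent-lemma argument of Theorem~\ref{thm:DCDPSGD} for Algorithm~\ref{alg:algorithm1}, where the mini-batch gradient $\widetilde g_t$ is the weighted average of the $pB$ tail terms clipped at $c_1$ (perturbed by $\mathbb{N}(0,c_1^2\sigma_{\mathrm{dp}}^2\mathbb{I})$) and the $(1-p)B$ body terms clipped at $c_2$ (perturbed by $\mathbb{N}(0,c_2^2\sigma_{\mathrm{dp}}^2\mathbb{I})$). Because the $\beta$-smoothness inequality is linear in the update and the inner products/norm bounds split across the two groups, the per-iteration drift decomposes into a tail contribution---controlled exactly as in case~(ii) of Theorem~\ref{thm:DCDPSGD} with threshold $c_1=\mathbb{O}(\log^{\theta}(\sqrt T))$ using the sub-Weibull Freedman and truncated concentration tools (Lemmas~\ref{lemma:3},~\ref{lemma:7},~\ref{lemma:8})---and a body contribution---controlled as in case~(i) with $c_2=\mathbb{O}(\log^{1/2}(\sqrt T))$. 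Summing over $t$, normalising by $T$, and using $\tfrac1T\sum_t\|\nabla L_S(\mathbf w_t)\|_2^2\le\tfrac1{\sqrt T}\sum_t\eta_t\|\nabla L_S(\mathbf w_t)\|_2^2$ (and the analogous inequality for the $\|\cdot\|_2$ case), the mixed bound collapses to the convex combination
\begin{align}
\mathcal{C}_{\mathrm{u}}\le p\cdot\mathbb{O}\!\left(\frac{d^{1/4}\log^{5/4}(T/\delta)\hat{\log}(T/\delta)\log^{2\theta}(\sqrt T)}{(n\epsilon)^{1/2}}\right)+(1-p)\cdot\mathbb{O}\!\left(\frac{d^{1/4}\log^{5/4}(T/\delta)\log(\sqrt T)}{(n\epsilon)^{1/2}}\right).\nonumber
\end{align}
Finally I would take a union bound over the identification failure ($\delta'_m$) and the convergence failure ($\delta$) inherited from Theorem~\ref{thm:DCDPSGD}, obtaining the claimed confidence $1-\delta'$ with $\delta'=\delta'_m+\delta$.

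The main obstacle is the per-iteration coupling: within a single step both the $c_1$-clipped and $c_2$-clipped groups feed into $\widetilde g_t$, so the descent inequality generates cross terms (e.g.\ $\langle\overline g_t(z^{\mathrm{tail}}),\overline g_t(z^{\mathrm{body}})\rangle$ and cross noise--gradient inner products) that appear in neither case of Theorem~\ref{thm:DCDPSGD}. I would dispatch these with Cauchy--Schwarz/Young splitting so each cross term is absorbed into the larger (tail) group's budget at the cost of constants, and by checking that the two independent Gaussian perturbations with scales $c_1\sigma_{\mathrm{dp}}$ and $c_2\sigma_{\mathrm{dp}}$ still obey the sub-Gaussian tail estimates invoked through Lemma~\ref{lemma:2}; after that the argument is a direct reuse of the case-wise bounds. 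A secondary point to verify is that the tail fraction entering the final bound is the algorithmic parameter $p$ (the top-$p$ cut), which on $\mathcal{E}_{\mathrm{id}}$ coincides with the true proportion of gradients in the heavy-deviation regime, so no extra slack is introduced there; off $\mathcal{E}_{\mathrm{id}}$ the contribution is charged to $\delta'_m$.
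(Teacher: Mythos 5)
Your proposal follows essentially the same route as the paper: condition on the subspace-identification event of probability $1-\delta^{\prime}_m$, on that event express the bound as the $p$-weighted combination of the tail-regime bound (case (ii) of Theorem~\ref{thm:DCDPSGD}, threshold $c_1$) and the body-regime bound (case (i), threshold $c_2$), and finish with a union bound giving $\delta^{\prime}=\delta^{\prime}_m+\delta$. If anything you are more careful than the paper, which simply posits the weighted decomposition $\mathcal{C}_{\mathrm{u}}=(1-\delta^{\prime}_m)p\,\mathcal{C}_{\mathrm{tail}}+(1-\delta^{\prime}_m)(1-p)\,\mathcal{C}_{\mathrm{body}}+\delta^{\prime}_m|\mathcal{C}_{\mathrm{tail}}-\mathcal{C}_{\mathrm{body}}|$ without re-deriving the descent inequality for the mixed update, whereas you explicitly flag and propose to absorb the cross terms arising from the two clipping groups within a single iteration.
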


\begin{proof}

We will combine the subspace skewing error with the theory of Discriminative Clipping DPSGD in this section to align with our algorithm outline. We have already discussed the error of traces in previous chapters and considered the condition of additional noise that satisfies DP, obtaining an upper bound on the error that depends on the factor $\mathbb{O}(\frac{1}{k})$. This conclusion means that, under the high probability guarantee of $1-\delta^{\prime}_{m}$, we can accurately identify the trace of the per sample gradient with minimal error, and classify light bodies and heavy tails based on this. 

Specifically, based on statistical characteristics, approximately 5\% -10\% of the data will fall into the tail part. Thus, we select the top-$p$ samples in the trace ranking as the tailed samples, where $p \in [0.05, 0.1]$. Furthermore, based on the relationship between trace and variance, trace $\lambda_p$ can be seen as the threshold $x_\mathrm{max}$ in truncated theories, which corresponds to the theoretical sample variance with empirical results. So, in truncated clipping DPSGD, we will accurately partition the sample into the heavy-tailed convergence bound with a high probability of $(1-\delta^{\prime}_{m}) * p$, and exactly induce the sample to the bound of light bodies with a high probability of $(1-\delta^{\prime}_{m}) * (1-p)$, while there is a discrimination error with probability $\delta^{\prime}_{m}$. Accordingly, we have
\begin{align}
    &\mathcal{C}_{\mathrm{u}}(c_1,c_2) := \frac{1}{T}\sum^T_{t=1} \min\big\{\Vert\nabla L_S(\mathbf{w}_{t})\Vert_2, \Vert\nabla L_S(\mathbf{w}_{t})\Vert^2_2\big\}  \nonumber \\
    &=(1-\delta^{\prime}_{m})*p*\mathcal{C}_{\mathrm{tail}}(c_1) + (1-\delta^{\prime}_{m})*(1-p)*\mathcal{C}_{\mathrm{body}}(c_2) + \delta^{\prime}_{m}*|\mathcal{C}_{\mathrm{tail}}(c_1)-\mathcal{C}_{\mathrm{body}}(c_2)| .
\end{align}
where $\mathcal{C}_{\mathrm{tail}}(c_1)$ means the convergence bound of $\frac{1}{T}\sum^T_{t=1} \min\big\{\Vert\nabla L_S(\mathbf{w}_{t})\Vert_2, \Vert\nabla L_S(\mathbf{w}_{t})\Vert^2_2\big\}$ when $\lambda_{\mathrm{tr}} \geq \lambda_{p}$, i.e. $\mathbb{O}\left(\frac{d^{\frac{1}{4}} \log^{\frac{5}{4}}(T/\delta)\hat{\log}(1/\delta)\log^{2\theta}(\sqrt{T}) }{(n\epsilon)^\frac{1}{2}}\right)$, $\mathcal{C}_{\mathrm{body}}(c_2)$ denotes the bound of $\frac{1}{T}\sum^T_{t=1} \min\big\{\Vert\nabla L_S(\mathbf{w}_{t})\Vert_2, \Vert\nabla L_S(\mathbf{w}_{t})\Vert^2_2\big\}$ when $0 \leq \lambda_{\mathrm{tr}} \leq \lambda_{p}$ i.e. $\mathbb{O}\left(\frac{d^{\frac{1}{4}} \log^{\frac{5}{4}}(T/\delta)\log(\sqrt{T}) }{(n\epsilon)^\frac{1}{2}}\right)$, with $c_1=4^{\theta}2K\log^{\theta}({\sqrt{T}})$ and $c_2=2\sqrt{2a}K\log^{\frac{1}{2}}({\sqrt{T}})$.

If $\theta = \frac{1}{2}$, then $\mathcal{C}_{\mathrm{tail}}(c_1) = \mathcal{C}_{\mathrm{body}}(c_2)$ and $\delta^{\prime}_{m} \rightarrow 0$, thus we have
\begin{align}
    \mathcal{C}_{\mathrm{u}}(c_1,c_2) = \mathcal{C}_{\mathrm{tail}}(c_1) = \mathbb{O}\left(\frac{d^{\frac{1}{4}} \log^{\frac{5}{4}}(T/\delta)\log(\sqrt{T}) }{(n\epsilon)^\frac{1}{2}}\right).
\end{align}

If $\theta > \frac{1}{2}$, then $\mathcal{C}_{\mathrm{tail}}(c_1)\geq\mathcal{C}_{\mathrm{body}}(c_2)$, and we need to proof that $\mathcal{C}_{\mathrm{tail}}(c_1) \geq \mathcal{C}_{u}(c_1,c_2) $, i.e.
\begin{align}
    \mathcal{C}_{\mathrm{tail}}(c_1) &\geq \mathcal{C}_{\mathrm{u}}(c_1,c_2)  \nonumber \\
    &\geq (1-\delta^{\prime}_{m})*p*\mathcal{C}_{\mathrm{tail}}(c_1) + (1-\delta^{\prime}_{m})*(1-p)*\mathcal{C}_{\mathrm{body}}(c_2) + \delta^{\prime}_{m}*|\mathcal{C}_{\mathrm{tail}}(c_1)-\mathcal{C}_{\mathrm{body}}(c_2)| . \nonumber 
\end{align}
By transposition, we have
\begin{align}
    (1-\delta^{\prime}_{m})(1-p)*\mathcal{C}_{\mathrm{tail}}(c_1) + \delta^{\prime}_{m}*\mathcal{C}_{\mathrm{body}}(c_2)
    &\geq (1-\delta^{\prime}_{m})*(1-p)*\mathcal{C}_{\mathrm{body}}(c_2) . \nonumber
\end{align}
Then, we have
\begin{align}
    \mathcal{C}_{\mathrm{tail}}(c_1) 
    &\geq \mathcal{C}_{\mathrm{body}}(c_2) - \frac{\delta^{\prime}_{m}}{(1-\delta^{\prime}_{m})*(1-p)}\mathcal{C}_{\mathrm{body}}(c_2), 
\end{align}
due to $\frac{\delta^{\prime}_{m}}{(1-\delta^{\prime}_{m})*(1-p)} \geq 0$, it is proved that $\mathcal{C}_{\mathrm{tail}}(c_1) \geq \mathcal{C}_{u}(c_1,c_2)$.

From another perspective, for $\mathcal{C}_{\mathrm{u}}(c_1,c_2)$, with probability $1-\delta^{\prime}_m$, we have
\begin{align}
    \mathcal{C}_{\mathrm{u}}(c_1,c_2) = p*\mathcal{C}_{\mathrm{tail}}(c_1) + *(1-p)*\mathcal{C}_{\mathrm{body}}(c_2) .
\end{align}
In other words, for the formula.(102), we define $\delta^{\prime} = \delta^{\prime}_m + \delta$. Then, with probability $1-\delta^{\prime}$, we have
\begin{align}
    \frac{1}{T}\sum^T_{t=1} \min\big\{\Vert\nabla L_S(\mathbf{w}_{t})\Vert_2, \Vert\nabla L_S(\mathbf{w}_{t})\Vert^2_2\big\} &\leq p*\mathbb{O}\left(\frac{d^{\frac{1}{4}} \log^{\frac{5}{4}}(T/\delta)\hat{\log}(T/\delta)\log^{2\theta}(\sqrt{T}) }{(n\epsilon)^\frac{1}{2}}\right) \nonumber \\
    &+ (1-p)*\mathbb{O}\left(\frac{d^{\frac{1}{4}} \log^{\frac{5}{4}}(T/\delta)\log(\sqrt{T}) }{(n\epsilon)^\frac{1}{2}}\right)
\end{align}
where $\hat{\log}(T/\delta) = \log^{\max(0,\theta-1)}(T/\delta)$.

\end{proof}

The proof is completed.

\clearpage
\section{Supplemental Experiments}
\vspace{-5pt}
\subsection{Implementation Details and Codebase}
\vspace{-5pt}
All experiments are conducted on a server with an Intel(R) Xeon(R) E5-2640 v4 CPU at 2.40GHz and a NVIDIA Tesla P40 GPU running on Ubuntu. By default, we uniformly set subspace dimension $k=200$, $\epsilon=\epsilon_{\mathrm{tr}}+\epsilon_{\mathrm{dp}}$ with $\epsilon_{\mathrm{tr}} = \epsilon_{\mathrm{dp}}$, $p=10\%$ and sub-Weibull index $\theta=2$ for any datasets. In particular, we use the LDAM~\cite{cao2019learning} loss function for heavy-tailed tasks.
\begin{enumerate}
    \item \textbf{MNIST}: MNIST has ten categories, 60,000 training samples and 10.000 testing samples. We construct a two-layer CNN network and replace the BatchNorm of the convolutional layer with GroupNorm. We set 40 epochs, 128 batchsize, 0.1 small clipping threshold, 1 large clipping threshold, and 1 learning rate.
    \item \textbf{FMNIST}: FMNIST has ten categories, 60,000 training samples and 10.000 testing samples. we use the same two-layer CNN architecture, and the other hyperparameters are the same as MNIST.
    \item \textbf{CIFAR10}: CIFAR10 has 50,000 training samples and 10,000 testing. We set 50 epoch, 256 batchsize, 0.01 small clipping threshold and 0.1 large clipping threshold with model SimCLRv2~\cite{tramer2020differentially} pre-trained by unlabeled ImageNet. We refer the code for pre-trained SimCLRv2 to \url{https://github.com/ftramer/Handcrafted-DP}.
    \item \textbf{CIFAR10-HT}: CIFAR10-HT contains 32$\times$32 pixel 12,406 training data and 10,000 testing data, and the proportion of 10 classes in training data is as follows: [0:5000, 1:2997, 2:1796, 3:1077, 4:645, 5:387, 6:232, 7:139, 8:83, 9:50]. We train CIFAR10-HT on model ResNeXt-29~\cite{xie2017aggregated} pre-trained by CIFAR100 with the same parameters as CIFAR10. We can see pre-trained ResNeXt in \url{https://github.com/ftramer/Handcrafted-DP} and CIFAR10-HT with LDAM-DRW loss function in \url{https://github.com/kaidic/LDAM-DRW}.
    \item \textbf{ImageNette}: ImageNette is a 10-subclass set of ImageNet and contains 9469 training examples and 3925 testing examples. We train on model ResNet-9~\cite{he2016deep} without pre-train and set 1000 batchsize, 0.15 small clipping threshold, 1.5 large clipping threshold and 0.0001 learning rate with 50 runs. 
    \item \textbf{ImageNette-HT}: We construct the heavy-tailed version of ImageNette by the method in~\cite{cao2019learning}. ImageNette-HT contains 2345 trainging data and 3925 testing data, which is difficult to train, and proportion of 10 classes in training data follows: [0:946, 1:567, 2:340, 3:204, 4:122, 5:73, 6:43, 7:26, 8:15, 9:9]. The other settings are the same as ImageNette. Our ResNet-9 refers to \url{https://github.com/cbenitez81/Resnet9/} with 2.5M network parameters.
\end{enumerate}
Moreover, we open our source code and implementation details for discriminative clipping on the following link: \url{https://anonymous.4open.science/r/DC-DPSGD-N-25C9/}.
\vspace{-5pt}
\subsection{Effects of Parameters on Test Accuracy}
\vspace{-5pt}
Due to space limitations, we place the remaining ablation study on MNIST, FMNIST, ImageNette and ImageNette-HT here. We acknowledge that since ImageNette-HT has only 2,345 training data, which is one-fifth of ImageNette, it is difficult to support the convergence of the model. In the future, we will improve this aspect in our work.
\begin{table}[htbp]
\vspace{-10pt}
\caption{Effects of parameters on test accuracy with MNIST and FMNIST.}
\label{Table:ablation_study_0}
\resizebox{\textwidth}{!}{
\begin{tabular}{c|cccc|ccc|ccc}
\toprule
\multirow{2}{*}{Dataset} & \multicolumn{4}{c|}{Subspace-$k$}                                                               & \multicolumn{3}{c|}{$\epsilon_{\mathrm{tr}}$ + $\epsilon_{\mathrm{dp}}$}                                  & \multicolumn{3}{c}{sub-Weibull-$\theta$}                      \\ \cline{2-11} 
                         & \multicolumn{1}{c|}{None} & \multicolumn{1}{c|}{100} & \multicolumn{1}{c|}{150} & 200 & \multicolumn{1}{c|}{2+6} & \multicolumn{1}{c|}{4+4} & \multicolumn{1}{c|}{6+2} & \multicolumn{1}{c|}{1/2} & \multicolumn{1}{c|}{1} & 2 \\ \midrule
MNIST                  & \multicolumn{1}{c|}{98.16}            & \multicolumn{1}{c|}{98.48}    & \multicolumn{1}{c|}{98.66}    &98.72    & \multicolumn{1}{c|}{98.78}  & \multicolumn{1}{c|}{98.72}  & \multicolumn{1}{c|}{98.42}     & \multicolumn{1}{c|}{98.61}    & \multicolumn{1}{c|}{98.69}  &98.72  \\ \hline
FMNIST               & \multicolumn{1}{c|}{85.78}            & \multicolumn{1}{c|}{87.61}    & \multicolumn{1}{c|}{87.71}    &87.80    & \multicolumn{1}{c|}{87.70}  & \multicolumn{1}{c|}{87.80}  & \multicolumn{1}{c|}{87.26}      & \multicolumn{1}{c|}{87.40}    & \multicolumn{1}{c|}{87.55}  &87.80 \\ \bottomrule
\end{tabular}
}
\end{table}

\begin{table}[htbp]
\vspace{-20pt}
\caption{Effects of parameters on test accuracy with ImageNette and ImageNette-HT.}
\label{Table:ablation_study_1}
\resizebox{\textwidth}{!}{
\begin{tabular}{c|cccc|ccc|ccc}
\toprule
\multirow{2}{*}{Dataset} & \multicolumn{4}{c|}{Subspace-$k$}                                                               & \multicolumn{3}{c|}{$\epsilon_{\mathrm{tr}}$ + $\epsilon_{\mathrm{dp}}$}                                  & \multicolumn{3}{c}{sub-Weibull-$\theta$}                      \\ \cline{2-11} 
                         & \multicolumn{1}{c|}{None} & \multicolumn{1}{c|}{100} & \multicolumn{1}{c|}{150} & 200 & \multicolumn{1}{c|}{2+6} & \multicolumn{1}{c|}{4+4} & \multicolumn{1}{c|}{6+2} & \multicolumn{1}{c|}{1/2} & \multicolumn{1}{c|}{1} & 2 \\ \midrule
ImageNette                  & \multicolumn{1}{c|}{66.08}            & \multicolumn{1}{c|}{68.34}    & \multicolumn{1}{c|}{69.00}    &69.29    & \multicolumn{1}{c|}{68.54}  & \multicolumn{1}{c|}{69.29}  & \multicolumn{1}{c|}{68.12}     & \multicolumn{1}{c|}{67.91}    & \multicolumn{1}{c|}{68.87}  &69.29  \\ \hline
ImageNette-HT               & \multicolumn{1}{c|}{29.33}            & \multicolumn{1}{c|}{31.44}    & \multicolumn{1}{c|}{33.17}    &33.70    & \multicolumn{1}{c|}{34.25}  & \multicolumn{1}{c|}{33.70}  & \multicolumn{1}{c|}{31.13}      & \multicolumn{1}{c|}{33.05}    & \multicolumn{1}{c|}{33.37}  &33.70 \\ \bottomrule
\end{tabular}
}
\end{table}

\end{document}